\newtheorem{theorem}{Theorem}
\newtheorem{lemma}[theorem]{Lemma}
\newtheorem{proposition}{Proposition}
\newcommand{\V}[1]{\ensuremath{\boldsymbol{#1}}\xspace}
\def\threeImages#1#2#3#4#5#6#7#8#9 
\def\twoImages#1#2#3#4#5#6 
\theoremstyle{remark}
\newtheorem{remark}{Remark}
\newenvironment{definition}[1][Definition]{\begin{trivlist}
\item[\hskip \labelsep {\bfseries #1}]}{\end{trivlist}}
\title{Variational Estimators for Node Popularity Models}
\author{
    Jony Karki\thanks{The first two authors made equal contributions.}
    \and
    Dongzhou Huang\footnotemark[1]
    \and
    Yunpeng Zhao
}
\date{}
\begin{document}
\maketitle

\begin{center}
Department of Statistics, Colorado State University
\end{center}

\begin{abstract}
	Node popularity is recognized as a key factor in modeling real-world networks, capturing heterogeneity in connectivity across communities. This concept is equally important in bipartite networks, where nodes in different partitions may exhibit varying popularity patterns, motivating models such as the Two-Way Node Popularity Model (TNPM). Existing methods, such as the Two-Stage Divided Cosine (TSDC) algorithm, provide a scalable estimation approach but may have limitations in terms of accuracy or applicability across different types of networks. In this paper, we develop a computationally efficient and theoretically justified variational expectation–maximization (VEM) framework for the TNPM. We establish label consistency for the estimated community assignments produced by the proposed variational estimator in bipartite networks. Through extensive simulation studies, we show that our method achieves superior estimation accuracy across a range of bipartite as well as undirected networks compared to existing algorithms. Finally, we evaluate our method on real-world bipartite and undirected networks, further demonstrating its practical effectiveness and robustness.

	\end{abstract}

{\it Keywords:}  Variational Expectation-Maximization, Community Detection, Bipartite Network, Node Popularity

\section{Introduction}

Community detection is a fundamental problem in network analysis \citep{abbe2017community,zhao2017survey,Fortunato2010}, analogous to cluster analysis in unsupervised learning, but applied to the structure of adjacency matrices that represent relationships among entities. The stochastic block model (SBM) \citep{Holland83} is arguably the most extensively studied model-based framework for community detection \citep{Bickel&Chen2009,choi2012stochastic,amini2013pseudo,lei2015consistency}. The SBM has since been generalized to capture more flexible network structures---for example, the mixed-membership stochastic block model \citep{Airoldi2008}, the dynamic stochastic block model \citep{xu2013dynamic,matias2017statistical}, and variants for weighted networks \citep{Mariadassou2010weighted}. 

The task of community detection can be naturally extended to bipartite networks, which can be represented by a data matrix whose entries record the relationships between two distinct sets of entities—rows and columns. Bipartite networks arise in a wide range of applications, including recommender systems (users and items), collaborative networks (authors and papers), affiliation networks (players and clubs), and biological drug–target networks (genes and drugs). The corresponding problem of biclustering, also referred to as co-clustering, on bipartite networks aims to simultaneously cluster the rows and columns of a rectangular adjacency matrix. Because such matrices are typically asymmetric, and in many cases even non-square, community detection methods developed for undirected networks cannot be directly applied to the bipartite setting. 

A significant portion of biclustering methods are based on spectral clustering approaches developed for bipartite or directed networks. \citet{rohe2016co} proposed a spectral co-clustering algorithm, DI-SIM, and established its theoretical properties under the stochastic co-block model (ScBM) as well as its extension—the degree-corrected ScBM. \cite{zhou2019analysis} introduced a data-driven regularization of the adjacency matrix and analyzed its theoretical properties under both the SBM and the directed SBM. \cite{Ji2020} established theoretical guarantees for the spectral method D-SCORE, originally proposed by \cite{Jin_DSCORE}, and for several of its variants under the directed DCSBM. In addition, several likelihood-based approaches have been proposed for biclustering on bipartite or directed networks. \cite{Zhang2022_directed} proposed a network embedding model that incorporates regularization on the embedding vectors to encourage clustering structures in the row and column embedding spaces, respectively. The profile-likelihood approach was applied to biclustering by \cite{wang2021fast}. Additional likelihood-based methods that are more pertinent to the present paper will be reviewed later in the text. 

One of the most significant generalizations of the SBM for undirected networks is the degree-corrected stochastic block model (DCSBM) \citep{Karrer10}, which introduces node-specific degree parameters to account for heterogeneity in connectivity within communities. Building on this idea, \cite{sengupta2018block} proposed a more flexible variant, the popularity-adjusted block model (PABM), which introduces “popularity” parameters that vary across both nodes and communities. In contrast, in the DCSBM, each node’s degree parameter uniformly scales its connectivity across all communities. A similar line of development can be observed in model-based approaches for bipartite graphs. One of the earliest models for biclustering on bipartite graphs is the latent block model (LBM), first proposed by \cite{govaert2003clustering}. Although the connection between the two models has only been emphasized in recent years \citep{mariadassou2015convergence}, the LBM can be viewed as the counterpart of the SBM for modeling bipartite networks, where the distribution of each matrix entry depends jointly on its row and column cluster labels. As with the SBM for undirected networks, the LBM exhibits a similar limitation—it tends to cluster rows and columns according to their degree patterns, grouping those with comparable row or column sums. \cite{zhao2024variational} proposed a variational expectation–maximization (EM) algorithm for fitting the degree-corrected latent block model (DC-LBM). Unlike the conventional conditional likelihood approach for handling degree parameters \citep{amini2013pseudo,wang2021efficient}, the variational EM algorithm rigorously maximizes the evidence lower bound in the M-step with respect to all parameters, including the degree parameters. 

Inspired by the PABM for undirected networks, \cite{jing2024two} recently proposed a more flexible model than the DC-LBM for directed and bipartite networks—the two-way node popularity model (TNPM). More complex than the PABM, the TNPM includes two sets of node popularity parameters: one set models the popularity of each row node across column communities, and the other models the popularity of each column node across row communities. \cite{jing2024two} considered general link distributions from the sub-Gaussian family and adopted a squared-loss function in the objective function. The paper further proposed the Delete-One Method (DOM) with theoretical justification, as well as a Two-Stage Divided Cosine (TSDC) algorithm for large-scale networks. The aim of the present paper is to develop a likelihood-driven algorithm for the TNPM that is both efficient and theoretically justified. We adopt a more parametric form of the TSDC—in particular, we assume that the entries of the adjacency matrix follow Poisson distributions, which is suitable for modeling networks with binary or multiple edges, since the behavior of Bernoulli and Poisson distributions is similar when the network is sparse. The optimization problem of this model can naturally be formulated within the framework of variational EM. Although, unlike the DC-LBM, the node popularity parameters do not have closed-form solutions in the M-step, an efficient coordinate-wise algorithm can still be applied, with guarantees of reaching a global optimum in the M-step.

We establish the label consistency of the variational estimator under the TNPM, allowing the expected graph density to approach zero as long as it grows faster than $(m+n)/(mn)$---up to a logarithmic term—--where $m$ and $n$ are the numbers of rows and columns, respectively. One of the most challenging parts of the proof is to establish the identifiability of community labels in the TNPM. In this model, each entry of the adjacency matrix is expressed as a product of link probabilities, which represent the popularity of nodes in one set with respect to the communities in the other. Given the community labels, the adjacency matrix admits a block decomposition in which each block is a rank-one matrix. This low-rank structure naturally raises the question of identifiability: can an alternative assignment of community labels generate the same adjacency matrix? If left unresolved, this issue would undermine the validity of the model, and all subsequent theoretical developments and algorithmic procedures would lack a rigorous basis. 

To address this, we impose conditions on the link probabilities, requiring that the link-probability vectors associated with any two distinct communities cannot be made identical through partitions and scalar rescaling. In other words, no community can be decomposed and reweighted to mimic the link-probability pattern of another. See Conditions \ref{cond:c1} and \ref{cond:c2} in Section~\ref{sec:identificationwellsep}. These conditions exclude degenerate cases in which multiple community assignments yield the same adjacency matrix, thereby guaranteeing label identifiability. Moreover, they provide key structural insights for deriving the well-separatedness of our objective functions, which is essential for establishing label consistency and constitutes our second theoretical contribution. To prove the well-separatedness property, we further refine Conditions \ref{cond:c1} and \ref{cond:c2} by allowing the discrepancies between the link-probability vectors of any two distinct communities to grow proportionally with the number of nodes. This refinement enforces a quantitative separation between communities, which in turn yields the desired well-separatedness of our objective functions.

In addition to the theoretical developments, our numerical experiments demonstrate that the proposed algorithm outperforms the TSDC algorithm in terms of clustering accuracy. In addition, it is interesting to note that the proposed algorithm can be applied to fit the PABM for undirected networks. Although the estimated probability matrix is not guaranteed to be perfectly symmetric, it is typically close to symmetric in practice.

The remainder of this paper is organized as follows. Section \ref{sec:model} presents the proposed form of the TNPM and its likelihood function. Section \ref{sec:algorithm} develops the variational EM algorithm for estimating the posterior distribution of the community labels. Section \ref{sec:theory} establishes the label consistency of the proposed variational estimator. Along the way, we present sufficient conditions under which the TNPM achieves label identifiability.  Section \ref{sec:simulation} presents simulation studies that compare the proposed method with the TSDC algorithm on both bipartite and symmetric networks. Finally, Section \ref{sec:dataanalysis} demonstrates the application of our method to the MovieLens, political blog, and DBLP datasets. All technical proofs are provided in the Appendix. 

\section{Model} \label{sec:model}
Let $G = (U, V, E)$ be a bipartite network, where $U = \{1, 2, \dots, m\}$ and $V = \{1, 2, \dots, n\}$ are disjoint sets of nodes (for example, $U$ can be the set of audience members and $V$ can be the set of movies), and $E \subseteq U \times V$ is the set of edges. In this paper, we allow multiple edges between two nodes. The network is represented by an $m \times n$ adjacency matrix $A$, where the entry $A_{ij} \in \mathbb{Z}_{\ge 0}$ denotes the number of edges between node $i \in U$ and node $j \in V$. 
Assume that the nodes in $U$ are partitioned into $K$ communities and those in $V$ into $L$ communities, with each node assigned to exactly one community. We refer to the communities associated with $U$ as row communities and those associated with $V$ as column communities. Let $z = \{z_1, z_2, \dots, z_m\}$ and $w = \{w_1, w_2, \dots, w_n\}$ denote the community labels of the nodes in $U$ and $V$, respectively.

We adopt the Two-Way Node Popularity Model (TNPM) proposed by \cite{jing2024two} to model the community structure of the bipartite network. We assume that $z_i \overset{\text{i.i.d.}}{\sim} \mathrm{Categorical}(\pi)$, where $\pi = (\pi_1, \dots, \pi_K)^\top$, with $\pi_k \geq 0$ for all $k$ and $\sum_{k=1}^K \pi_k = 1$, and that $w_j \overset{\text{i.i.d.}}{\sim} \mathrm{Categorical}(\rho)$, where $\rho = (\rho_1, \dots, \rho_L)^\top$, with $\rho_\ell \geq 0$ for all $\ell$ and $\sum_{\ell=1}^L \rho_\ell = 1$. These assignment labels can equivalently represented by binary membership matrices  $1^{z} \in \mathbb{R}^{m \times K}$ and $1^{w} \in \mathbb{R}^{n \times L}$, where $1^{z}_{ik} = 1$ if $z_i = k$, and $1^{w}_{j\ell} = 1$ if $w_j = \ell$. 

Let $\theta \in \mathbb{R}^{m \times L}$ and $\lambda \in \mathbb{R}^{n \times K}$ be matrices of node-specific popularity parameters. The entry $\theta_{il}$ represents the popularity of node $i \in U$ with respect to the $l$-th community in $V$, and $\lambda_{jk}$ represents the popularity of node $j \in V$ with respect to the $k$-th community in $U$.
Given the community label assignments $z$ and $w$, the entries of $A$ are generated independently according to
$$
A_{ij} \sim \mathrm{Poisson}(p_{ij}), \quad \text{where} \quad p_{ij} = \theta_{i w_j} \lambda_{j z_i}.
$$
The joint likelihood of the community labels $z$ and $w$ and the adjacency matrix $A$, given the model parameters, is
$$
P(z, w, A ; \pi, \rho, \theta, \lambda)
= \left( \prod_{i=1}^{m} \pi_{z_i} \right)
  \left( \prod_{j=1}^{n} \rho_{w_j} \right)
  \prod_{i=1}^{m} \prod_{j=1}^{n}
  e^{ -\theta_{i w_j} \lambda_{j z_i} }
  \frac{ \left( \theta_{i w_j} \lambda_{j z_i} \right)^{A_{ij}} }{ A_{ij}! }.
$$

As the primary objective of this paper is community detection, the community labels $z$ and $w$ are treated as latent variables throughout the analysis.
We can, at least in principle, integrate out the latent assignments by summing over all possible label configurations to obtain the marginal likelihood of the observed network $A$:
$$
P(A ; \pi, \rho, \theta, \lambda) = \sum_{z} \sum_{w} P(z, w, A ; \pi, \rho, \theta, \lambda).
$$ 

The Popularity Adjusted Block Model (PABM) for undirected networks was first proposed by \cite{sengupta2018block}. It allows for more flexible structures in the mean connectivity matrix than the Degree-Corrected Stochastic Block Model (DCSBM), and was later generalized by \cite{noroozi2021estimation} to handle networks with an unknown number of communities. We follow the Two-Way Node Popularity Model (TNPM) proposed by \cite{jing2024two} to model the community structure of bipartite networks. Although the Two-Stage Divided Cosine (TSDC) algorithm proposed by \cite{jing2024two} is computationally efficient, it is primarily heuristic and does not directly optimize the objective function of the proposed model, which motivates the development of our likelihood-based alternative. We propose a variational Expectation-Maximization (EM) algorithm for the TNPM, assuming a Poisson distribution. The algorithm is both computationally efficient and theoretically justified. Along the way, we provide conditions that ensure label identifiability of the TNPM. 

\section{Variational Expectation-Maximization Algorithm for Bipartite Node Popularity Model}\label{sec:algorithm}

A natural approach to estimate the latent variables $z$ and $w$ is to use the EM algorithm, which maximizes the marginal log-likelihood of the observed data by iteratively maximizing a lower bound on the likelihood. However, in the above setting, standard EM is impractical because the number of possible labels grows exponentially, making the posterior $P(z, w \mid A, \Phi)$ computationally intractable. To address this, we adopt a variational EM framework, initially introduced by \cite{govaert2008block} and rediscovered by \cite{wang2021efficient}, which introduces an approximation to the posterior over latent variables. 

Let $\Phi = (\pi, \rho, \theta, \lambda)$ denote the parameters and $q(z, w)$ be any distribution over the latent variables, then using Jensen's inequality, we can obtain the following bound:
\begin{align*}
    \text{log } P(A \mid \Phi) &= \text{log} \sum_{z \in \Omega_z} \sum_{w \in \Omega_w} q(z, w) \frac{P(z, w, A ; \Phi)}{q(z,w)} \\
    &\ge  \sum_{z \in \Omega_z} \sum_{w \in \Omega_w} q(z, w) \log \Bigg( \frac{P(z, w, A; \Phi)}{q(z, w)} \Bigg) =: J(q, \Phi),
\end{align*}
where $J(q, \Phi)$ is the evidence lower bound (ELBO). The above bound is tight only when $q(z, w) = P(z, w \mid A, \Phi)$, which corresponds to the E-step of the standard EM algorithm, where the exact posterior is computed. To make this computation tractable, \cite{govaert2008block}, \cite{wang2021efficient}, and, more recently, \cite{zhao2024variational} proposed imposing a factorization constraint on the variational distribution, specifically assuming $q(z, w) = q_1(z) q_2(w)$. This drastically simplifies the optimization by restricting the variational family to independent distributions over $z$ and $w$. 
This assumption is also intuitively justified by the observation that the posterior distribution of $(\V{z}, \V{w})$ will eventually concentrate on a single realization -- the true cluster labels -- as the sample sizes $m$ and $n$ become large. In other words, the posterior distribution of $(\V{z}, \V{w})$ is approximately a Dirac measure, which can be factorized by default. Under this factorization, the ELBO becomes: 
\begin{align*}
    J(q_1, q_2, \Phi) := \sum_{z} \sum_{w} q_1(z) q_2(w) \log \left( \frac{P(z, w, A | \Phi)}{q_1(z) q_2(w)} \right),
\end{align*}
and serves as the objective in the variational EM algorithm. It can be shown (Section \ref{sec:E-step}) that maximizing this ELBO with respect to $q_1$ (resp. $q_2$), given the other distribution and the parameters $\Phi$, yields factorized solutions---that is, $q_1(z)=\prod_{i=1}^m q_{1i}(z_i)$ (resp. $q_2(w)=\prod_{j=1}^n q_{2j}(w_j)$)---thereby enabling efficient computation. In the rest of the section, we describe the details of the variational EM algorithm, which consists of alternating between the E-step and the M-step. 

\subsection{M-step}

In the M step of the variational EM algorithm, we aim to maximize $J(q_1, q_2, \Phi)$ over all parameters $\Phi = (\pi, \rho, \theta, \lambda)$ given some fixed variational distributions $q_1$ and $q_2$ from the E-step. Up to a constant, $J(q_1, q_2, \Phi)$ with $q_1$ and $q_2$ fixed can be written as:
\begin{align*}
J(q_1, q_2, \Phi) & = - \sum_{i=1}^{m} \sum_{j=1}^{n} \sum_{k=1}^{K} \sum_{l=1}^{L} P_{q_1}(z_i=k) P_{q_2}(w_j=l) \theta_{il} \lambda_{jk} \\
    & \quad + \sum_{i=1}^{m} \sum_{j=1}^{n} A_{ij} \sum_{k=1}^{K} \sum_{l=1}^{L} P_{q_1}(z_i=k) P_{q_2}(w_j=l) \text{log} (\theta_{il} \lambda_{jk}) \\ 
    & \quad + \sum_{i=1}^{m} \sum_{k=1}^{K} P_{q_1}(z_i=k) \text{log } \pi_{k} + \sum_{j=1}^{n} \sum_{l=1}^{L} P_{q_2}(w_j=l) \text{log } \rho_{l},
\end{align*}
where \begin{align*}
    P_{q_1}(z_i = k) = \sum_{z \in \Omega_z} q_1(z_1, \dots, z_{i-1}, k, z_{i+1}, \dots, z_m), i = 1, \dots, m, k = 1, \dots, K,
\end{align*}
and
\begin{align*}
    P_{q_2}(w_j = l) = \sum_{w \in \Omega_w} q_2(w_1, \dots, w_{j-1}, l, w_{j+1}, \dots, w_n), j = 1, \dots, n, l = 1, \dots, L.
\end{align*}
Refer to the E-step for details on the efficient computation of $P_{q_1}(z_i = k)$ and $P_{q_2}(w_j = l)$.

We have following result: 
\begin{proposition}
\label{prop:mstep} For any fixed $q_1$ and $q_2$, $\hat\Phi=(\hat{\pi},\hat{{\rho}},\hat{{\theta}},\hat{{\lambda}}) $ satisfying the following equations is a global maximizer of $J(q_1,q_2,\Phi)$.
\begin{align}
    &\hat{\theta}_{il} = \frac{\sum_{j=1}^{n} A_{ij} P_{q_2}(w_j=l)}{\sum_{j=1}^{n} \sum_{k=1}^{K} P_{q_1}(z_i=k) P_{q_2}(w_j=l) \hat{\lambda}_{jk}}, \label{estimating_theta}  \\
    &\hat{\lambda}_{jk} = \frac{\sum_{i=1}^{m} A_{ij} P_{q_1}(z_i=k)}{\sum_{i=1}^{m} \sum_{l=1}^{L} P_{q_1}(z_i=k) P_{q_2}(w_j=l) \hat{\theta}_{il}}, \label{estimating_lambda} \\
    &\hat{\pi}_k = \frac{\sum_{i=1}^{m} P_{q_1}(z_i=k)}{m}, \quad \hat{\rho}_l = \frac{\sum_{j=1}^{n} P_{q_2}(w_j=l)}{n}. \nonumber 
\end{align}
\end{proposition}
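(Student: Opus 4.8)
The plan is to use the fact that, with $q_1$ and $q_2$ fixed, the three groups of parameters decouple in $J(q_1,q_2,\Phi)$: the last two sums involve only $\pi$ and $\rho$ respectively, while the first two sums involve only $(\theta,\lambda)$. For the mixing proportions, maximizing $\sum_{i=1}^m\sum_{k=1}^K P_{q_1}(z_i=k)\log\pi_k$ over the probability simplex is a standard concave program; since $\sum_{i,k}P_{q_1}(z_i=k)=m$, Gibbs' inequality (equivalently, a Lagrange-multiplier argument) yields the maximizer $\hat\pi_k=m^{-1}\sum_{i=1}^m P_{q_1}(z_i=k)$, and the same computation gives $\hat\rho_l$. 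This part is routine, so the crux is the joint maximization over the popularity matrices $(\theta,\lambda)$ on the positive orthant.

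Writing $r_{ik}=P_{q_1}(z_i=k)$ and $s_{jl}=P_{q_2}(w_j=l)$ for brevity and using $\sum_k r_{ik}=\sum_l s_{jl}=1$, the $(\theta,\lambda)$-dependent part of $J$ simplifies to
$$
f(\theta,\lambda)=-\sum_{i,l}\theta_{il}\Big(\sum_{j,k}r_{ik}s_{jl}\lambda_{jk}\Big)+\sum_{i,l}\Big(\sum_j A_{ij}s_{jl}\Big)\log\theta_{il}+\sum_{j,k}\Big(\sum_i A_{ij}r_{ik}\Big)\log\lambda_{jk}.
$$
This is concave in $\theta$ for fixed $\lambda$ and concave in $\lambda$ for fixed $\theta$, but not jointly concave because of the bilinear cross term, so a joint stationary point is not a priori a global optimum. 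The key step is the reparametrization $\alpha_{il}=\log\theta_{il}$, $\beta_{jk}=\log\lambda_{jk}$, under which the cross term becomes $\sum_{i,j,k,l}r_{ik}s_{jl}e^{\alpha_{il}+\beta_{jk}}$ --- a nonnegative combination of exponentials of linear forms, hence convex (each summand has a rank-one positive semidefinite Hessian) --- while the logarithmic terms become linear. Thus $\tilde f(\alpha,\beta):=f(e^{\alpha},e^{\beta})$ is jointly concave on $\mathbb{R}^{m\times L}\times\mathbb{R}^{n\times K}$, and since $(\theta,\lambda)\mapsto(\log\theta,\log\lambda)$ is a bijection from the open positive orthant onto this unconstrained Euclidean domain, maximizing $f$ over positive matrices is equivalent to maximizing the concave $\tilde f$, and the two problems share the same set of global maximizers.

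For a concave function a point is a global maximizer if and only if it is a stationary point. Computing $\partial\tilde f/\partial\alpha_{il}=0$ gives $\theta_{il}\sum_{j,k}r_{ik}s_{jl}\lambda_{jk}=\sum_j A_{ij}s_{jl}$, which is exactly \eqref{estimating_theta}, and $\partial\tilde f/\partial\beta_{jk}=0$ gives \eqref{estimating_lambda}; since these two relations are coupled, a pair $(\hat\theta,\hat\lambda)$ satisfying both simultaneously is precisely a joint stationary point of $\tilde f$, hence a global maximizer of $f$. Combining the three blocks, any $\hat\Phi$ satisfying the stated equations maximizes $J(q_1,q_2,\Phi)$. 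The main obstacle I anticipate is the non-concavity in the native parametrization: the argument hinges on recognizing that the log-reparametrization convexifies the cross term while being a bijection onto an unconstrained domain, so that global optimality is preserved. A secondary technicality worth a remark is the degenerate case in which some $\sum_j A_{ij}s_{jl}=0$ (or the analogous column quantity vanishes), where the optimum sits on the boundary $\theta_{il}=0$ and the reparametrization argument must be applied on the corresponding face, using the convention $0\log 0=0$.
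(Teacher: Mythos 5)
Your proposal is correct and follows essentially the same route as the paper: the key step in both is the reparametrization $\alpha_{il}=\log\theta_{il}$, $\beta_{jk}=\log\lambda_{jk}$, which turns the bilinear cross term into a sum of convex exponentials of linear forms (so the objective becomes jointly concave) and makes the joint stationary point defined by \eqref{estimating_theta}--\eqref{estimating_lambda} a global maximizer. Your additional remarks on the $\pi,\rho$ block and the boundary case $\sum_j A_{ij}P_{q_2}(w_j=l)=0$ are fine but not part of the paper's argument.
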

All technical proofs will be given in the Appendix. 

Typically, for given $q_1$ and $q_2$, one can perform an iterative algorithm that alternately updates $\theta$ and $\lambda$ based on \eqref{estimating_theta} and \eqref{estimating_lambda} by fixing the other until convergence. When the distributions $q_1$ and $q_2$ are degenerate -- that is, there is exactly one nonzero value in $P_{q_1}(z_i = k)$ for $k = 1, \ldots, K$, and in $P_{q_2}(w_j = l)$ for $l = 1, \ldots, L$ -- there exists a closed-form solution to the estimating equations \eqref{estimating_theta} and \eqref{estimating_lambda}, which we summarize in the following proposition:

\begin{proposition}
\label{prop:mstep2}
Suppose the distributions $q_1$ and $q_2$ are degenerate. Let $R_k := \{ i : z_i = k \}$ and $C_l := \{ j : w_j = l \}$ denote the $k$-th row and $l$-th column communities, respectively. Then $\hat{\theta}_{il}$ and $\hat{\lambda}_{jk}$, as defined below, constitute a global optimizer of $J$ given $q_1$ and $q_2$.
\begin{align}
&\hat{\theta}_{il} = \frac{\sum_{j \in C_l} A_{ij}}{\sqrt{\sum_{j \in C_l} \sum_{i' \in R_{z_i}} A_{i'j}}}, \label{closed_theta} \\
&\hat{\lambda}_{jk} = \frac{\sum_{i \in R_k} A_{ij}}{\sqrt{\sum_{i \in R_k} \sum_{j' \in C_{w_j}} A_{ij'}}}. \label{closed_lambda}
\end{align}
\end{proposition}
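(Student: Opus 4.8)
The plan is to deduce Proposition~\ref{prop:mstep2} directly from Proposition~\ref{prop:mstep}. By Proposition~\ref{prop:mstep}, any $\hat{\Phi}$ solving the fixed-point system \eqref{estimating_theta}--\eqref{estimating_lambda} (together with the corresponding $\hat{\pi},\hat{\rho}$) is a global maximizer of $J(q_1,q_2,\Phi)$, so it suffices to verify that the asserted optimum solves that system. The equations for $\hat{\pi},\hat{\rho}$ are immediate once $q_1,q_2$ are degenerate — they give $\hat{\pi}_k=|R_k|/m$ and $\hat{\rho}_l=|C_l|/n$ — so the real task is to check that the closed-form pair \eqref{closed_theta}--\eqref{closed_lambda} satisfies the coupled equations \eqref{estimating_theta}--\eqref{estimating_lambda}.

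First I would specialize \eqref{estimating_theta}--\eqref{estimating_lambda} to the degenerate case. Since then $P_{q_1}(z_i=k)\in\{0,1\}$ equals $1$ exactly when $i\in R_k$, and likewise $P_{q_2}(w_j=l)=1$ exactly when $j\in C_l$, the inner sums over $k$ and $l$ collapse, and with $k=z_i$, $l=w_j$ the system becomes
\begin{align*}
\hat{\theta}_{il}=\frac{\sum_{j'\in C_l}A_{ij'}}{\sum_{j'\in C_l}\hat{\lambda}_{j'k}},\qquad
\hat{\lambda}_{jk}=\frac{\sum_{i'\in R_k}A_{i'j}}{\sum_{i'\in R_k}\hat{\theta}_{i'l}}.
\end{align*}
Writing $S_{kl}:=\sum_{i\in R_k}\sum_{j\in C_l}A_{ij}$ for the block sums, I would plug the candidate \eqref{closed_lambda} into the denominator of the first equation: since every $j'\in C_l$ shares the same denominator $\sqrt{S_{kl}}$, one gets $\sum_{j'\in C_l}\hat{\lambda}_{j'k}=\big(\sum_{j'\in C_l}\sum_{i'\in R_k}A_{i'j'}\big)/\sqrt{S_{kl}}=\sqrt{S_{kl}}$, whence the right-hand side equals $\big(\sum_{j'\in C_l}A_{ij'}\big)/\sqrt{S_{kl}}$, which is exactly \eqref{closed_theta}. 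The verification of the second equation is symmetric, using $\sum_{i'\in R_k}\hat{\theta}_{i'l}=\sqrt{S_{kl}}$ from \eqref{closed_theta}. Hence the candidate solves the system and Proposition~\ref{prop:mstep} finishes the proof.

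I do not anticipate a genuine obstacle; the argument is a direct substitution. The two points deserving care are: (i) spelling out precisely how the sums over $k,l$ degenerate, which is where all the block structure enters; and (ii) the implicit nondegeneracy requirement that each block sum $S_{kl}$ be strictly positive, so that the square roots in \eqref{closed_theta}--\eqref{closed_lambda} and the denominators in the fixed-point equations are well defined. If $S_{kl}=0$ for some $(k,l)$, then $A_{ij}=0$ throughout that block, the optimal product $\theta_{il}\lambda_{jk}$ there equals $0$, the block contributes nothing to $J$, and it can be dropped from the analysis. I would also remark that the maximizer is not unique — within each block $(k,l)$ the rescaling $\theta_{il}\mapsto c_{kl}\theta_{il}$ for $i\in R_k$ and $\lambda_{jk}\mapsto c_{kl}^{-1}\lambda_{jk}$ for $j\in C_l$ leaves $J$ invariant — and \eqref{closed_theta}--\eqref{closed_lambda} simply pick the balanced representative characterized by $\sum_{i\in R_k}\hat{\theta}_{il}=\sum_{j\in C_l}\hat{\lambda}_{jk}=\sqrt{S_{kl}}$.
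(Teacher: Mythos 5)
Your proposal is correct and is essentially the paper's argument: the paper verifies directly that the closed-form pair makes $\partial J/\partial\theta_{il}$ and $\partial J/\partial\lambda_{jk}$ vanish (the key identity in both cases being $\sum_{j\in C_l}\hat{\lambda}_{jz_i}=\sqrt{\sum_{j\in C_l}\sum_{i'\in R_{z_i}}A_{i'j}}$), while you verify the equivalent fixed-point form \eqref{estimating_theta}--\eqref{estimating_lambda} and invoke Proposition~\ref{prop:mstep}, which is the same first-order computation packaged slightly differently. Your side remarks on the degenerate case $S_{kl}=0$ and on the within-block rescaling non-uniqueness are correct and go slightly beyond what the paper records.
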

It is interesting to note that the above solution coincides with the estimates of the popularity parameters given the cluster assignment for undirected networks in \cite{sengupta2018block} (Equation (9)).

\subsection{E-step} \label{sec:E-step}
The E-step of the variational EM algorithm is similar to the corresponding section in \cite{zhao2024variational}. We include the details here for completeness.
In the E-step, our objective is to optimize $J(q_1, q_2, \Phi)$ with respect to the distributions $q_1(z)$ and $q_2(w)$ while holding the parameters $\Phi = (\pi, \rho, \theta, \lambda)$ fixed. A key result is that performing coordinate-wise maximization of $J(q_1, q_2)$ leads to factorized solutions, as presented in the proposition below, whose proof is identical to that of Proposition 2 in \cite{zhao2024variational} and is therefore omitted.

\begin{proposition}
\label{prop:estep}
Define

\begin{align*}
    g_1(z_i) = - \sum_{j=1}^{n} \lambda_{jz_i} \left( \sum_{l = 1}^{L} P_{q_2}(w_j = l) \theta_{il} \right) + \sum_{j=1}^{n} A_{ij} \left( \sum_{l = 1}^{L} P_{q_2}(w_j = l) \log (\theta_{il}) \right) \\ + \sum_{j=1}^{n} A_{ij} \log (\lambda_{jz_i}) + \log \pi_{z_i}, \hspace{1em} i = 1, \dots, m.
\end{align*}

\begin{align*}
    g_2(w_j) = - \sum_{i=1}^{m} \theta_{iw_j} \left( \sum_{k = 1}^{K} P_{q_1}(z_i = k) \lambda_{jk} \right) + \sum_{i=1}^{m} A_{ij} \left( \sum_{k = 1}^{K} P_{q_1}(z_i = k) \log (\lambda_{jk}) \right) \\ + \sum_{i=1}^{m} A_{ij} \log (\theta_{iw_j}) + \log \rho_{w_j}, \hspace{1em} j = 1, \dots, n.
\end{align*}

Given $\Phi$ and $q_2$,

\begin{align*}
    \operatorname*{arg max}_{q_1} J(q_1, q_2, \Phi) = \prod_{i=1}^{m} \frac{e^{g_1(z_i)}}{\sum_{k=1}^{K} e^{g_1(k)}}
\end{align*}

and, given $\Phi$ and $q_1$,

\begin{align*}
    \operatorname*{arg max}_{q_2} J(q_1, q_2, \Phi) = \prod_{j=1}^{n} \frac{e^{g_2(w_j)}}{\sum_{l=1}^{L} e^{g_2(l)}}.
\end{align*}
    
\end{proposition}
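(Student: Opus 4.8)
The plan is to reduce the maximization of $J(q_1,q_2,\Phi)$ over $q_1$ (with $q_2$ and $\Phi$ held fixed) to a single Gibbs/Kullback--Leibler inequality, after first showing that the relevant exponent is additively separable over the row nodes; the factorized form of the optimizer is then automatic. The argument for $q_2$ is identical with the roles of rows and columns swapped, so I would prove only the $q_1$ statement.

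First I would isolate the $q_1$-dependent part of the ELBO. Writing $\log\frac{P(z,w,A\mid\Phi)}{q_1(z)q_2(w)}=\log P(z,w,A\mid\Phi)-\log q_1(z)-\log q_2(w)$ and using $\sum_z q_1(z)=1$, the $\log q_2(w)$ term contributes a constant independent of $q_1$, so that up to an additive constant
\begin{align*}
J(q_1,q_2,\Phi)=\sum_{z\in\Omega_z}q_1(z)\,h(z)-\sum_{z\in\Omega_z}q_1(z)\log q_1(z),\qquad h(z):=\sum_{w\in\Omega_w}q_2(w)\log P(z,w,A\mid\Phi).
\end{align*}
Equivalently, with $Z:=\sum_{z'}e^{h(z')}$ and $p^\star(z):=e^{h(z)}/Z$, we have $J=\log Z-\mathrm{KL}\!\left(q_1\,\big\|\,p^\star\right)+\text{const}$, so by nonnegativity of the Kullback--Leibler divergence the maximizer over all distributions $q_1$ is exactly $q_1=p^\star$.

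The remaining work is to compute $h(z)$ explicitly. Substituting $\log P(z,w,A\mid\Phi)=\sum_i\log\pi_{z_i}+\sum_j\log\rho_{w_j}+\sum_{i,j}\big[-\theta_{iw_j}\lambda_{jz_i}+A_{ij}\log(\theta_{iw_j}\lambda_{jz_i})-\log A_{ij}!\big]$ into the definition of $h$ and taking the expectation over $w\sim q_2$, every surviving term is linear in an indicator $\mathds{1}\{w_j=l\}$, so the expectation depends on $q_2$ only through the marginals $P_{q_2}(w_j=l)$; collecting the terms that multiply each $z_i$ yields $h(z)=\sum_{i=1}^m g_1(z_i)+C$, where $C=\sum_j\sum_l P_{q_2}(w_j=l)\log\rho_l-\sum_{i,j}\log A_{ij}!$ does not depend on $z$ and $g_1$ is precisely the function in the statement. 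Since $h$ is separable and $z$ ranges over the product space $\{1,\dots,K\}^m$, both $e^{h(z)}=e^{C}\prod_i e^{g_1(z_i)}$ and $Z=e^{C}\prod_i\sum_{k=1}^K e^{g_1(k)}$ factorize, giving $p^\star(z)=\prod_{i=1}^m e^{g_1(z_i)}\big/\sum_{k=1}^K e^{g_1(k)}$, as claimed. I expect the only delicate point to be the bookkeeping in the expectation step---checking that no cross terms between distinct $w_j$ or between $z_i$ and $z_{i'}$ arise---since it is exactly this separability of $h$ that forces the unconstrained optimizer to be a product distribution; the rest is the standard ELBO/Gibbs argument.
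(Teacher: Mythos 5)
Your proof is correct and is the standard mean-field coordinate-ascent argument: isolate the $q_1$-dependent part of the ELBO as $\log Z-\mathrm{KL}(q_1\,\|\,p^\star)$, verify that $h(z)=\sum_w q_2(w)\log P(z,w,A\mid\Phi)$ is additively separable as $\sum_i g_1(z_i)$ plus a constant, and conclude that the unconstrained maximizer factorizes. The paper omits the proof, deferring to Proposition~2 of \cite{zhao2024variational}, whose argument is essentially this one, so your proposal matches the intended route.
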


According to this proposition, although $q_1$ and $q_2$ are initially defined over exponentially large spaces, the factorization reduces the complexity to linear in the number of latent variables, thus enabling efficient coordinate ascent updates. 
\subsection{Initial Values}\label{initial_values}
We initialize the distributions in the variational EM $q_1(z)$ and $q_2(w)$ using hard assignments---that is, $q_1(z) = \prod_{i=1}^m \mathbb{I}(z_i = \hat{z}^{\mathrm{init}}_i)$ and $q_2(w) = \prod_{j=1}^n \mathbb{I}(w_j = \hat{w}^{\mathrm{init}}_j)$, where $\hat{z}^{\mathrm{init}}$ and $\hat{w}^{\mathrm{init}}$ are labels obtained using certain eigen-based methods. Following the approach of \cite{jing2024two}, the cluster labels $z^{init}$ and $w^{init}$ are initialized using singular value decomposition (SVD) of the adjacency matrix $A$. We first compute the SVD $A = U\Sigma V^T$, where $U$ and $V$ contain the left and right singular vectors, respectively. For the initial row clusters, we take the first $K$ columns of $U$ to form $U_K \in \mathbb{R}^{m \times K}$, then apply $K$-means to its rows. Similarly for the column clusters, we use the first $L$ columns of $V$ to form $V_L \in \mathbb{R}^{n \times L}$ and cluster its rows with $K$-means. \cite{jing2024two} reported that this initialization method produces better results than alternative approaches, and we observed a similar pattern in our simulations. To further mitigate the influence of initialization, we use 10 random label initializations in addition to the SVD-based approach and select the final solution that yields the largest value of the objective function $J(q_1, q_2, \Phi)$. We begin the algorithm with the M-step. Note that $\theta$ and $\lambda$ have closed-form expressions according to \eqref{closed_theta} and \eqref{closed_lambda} in the first M-step, as the initial distributions $q_1(z)$ and $q_2(z)$ are degenerate. A complete description of the algorithm is provided in Algorithm \ref{alg:variational-em} below.
\begin{algorithm}[H]
\caption{Variational EM}
\label{alg:variational-em}
\SetAlgoLined
\KwIn{Adjacency matrix $A$, row clusters $K$, column clusters $L$, number of random initializations $N=10$, $\varepsilon$}
\BlankLine
\For{$s \gets 1$ \KwTo $N+1$}{
    \eIf{$s = 1$}{
        Initialize $\hat{z}^{(s)}$ and $\hat{w}^{(s)}$ using SVD, according to Section \ref{initial_values}\;
    }{
        Initialize $\hat{z}^{(s)}$ and $\hat{w}^{(s)}$ by uniform random assignment to $K$ and $L$ clusters, respectively\;
    }
    Initialize $q_1^{(s)}(z) \propto \prod_{i=1}^{m} 1(z_i = \hat{z}_i^{(s)})$ and $q_2^{(s)}(w) \propto \prod_{j=1}^{n} 1(w_j = \hat{w}_j^{(s)})$\;
    Initialize $\hat{\theta}_i^{(s)}, \hat{\lambda}_j^{(s)}$ using Proposition \ref{prop:mstep2} and initialize $\hat{\pi}_k^{(s)}, \hat{\rho}_l^{(s)}$ uniformly\;
    
    \Repeat{convergence}{
        \textbf{E-step:}\;
        $g_1^{(s)}(z_i) \gets -\sum_{j=1}^{n} \hat{\lambda}_{jz_i}^{(s)} \left( \sum_{l=1}^{L} P_{q_2^{(s)}}(w_j=l) \hat{\theta}_{il}^{(s)} \right) + \sum_{j=1}^{n} A_{ij} \left( \sum_{l=1}^{L} P_{q_2^{(s)}}(w_j=l) \log \hat{\theta}_{il}^{(s)} \right) + \sum_{j=1}^{n} A_{ij} \log \hat{\lambda}_{jz_i}^{(s)} + \log \hat{\pi}_{z_i}^{(s)}, \quad i = 1, \dots, m$\;
        $q_1^{(s)}(z) \gets \prod_{i=1}^{m} \frac{\exp(g_1^{(s)}(z_i))}{\sum_{k=1}^{K} \exp(g_1^{(s)}(k))}$\;
        $g_2^{(s)}(w_j) \gets -\sum_{i=1}^{m} \hat{\theta}_{iw_j}^{(s)} \left( \sum_{k=1}^{K} P_{q_1^{(s)}}(z_i=k) \hat{\lambda}_{jk}^{(s)} \right) + \sum_{i=1}^{m} A_{ij} \left( \sum_{k=1}^{K} P_{q_1^{(s)}}(z_i=k) \log \hat{\lambda}_{jk}^{(s)} \right) + \sum_{i=1}^{m} A_{ij} \log \hat{\theta}_{iw_j}^{(s)} + \log \hat{\rho}_{w_j}^{(s)}, \quad j = 1, \dots, n$\;
        $q_2^{(s)}(w) \gets \prod_{j=1}^{n} \frac{\exp(g_2^{(s)}(w_j))}{\sum_{l=1}^{L} \exp(g_2^{(s)}(l))}$\;
        \textbf{M-step:}\;
        $\hat{\pi}_k^{(s)} \gets \frac{\sum_{i=1}^{m} P_{q_1^{(s)}}(z_i=k)}{m}$, and $\hat{\rho}_l^{(s)} \gets \frac{\sum_{j=1}^{n} P_{q_2^{(s)}}(w_j=l)}{n}$\;
        \Repeat{convergence}{
            $\hat{\theta}_{il}^{(s)} \gets \frac{\sum_{j=1}^{n} A_{ij} P_{q_2^{(s)}}(w_j=l)}{\sum_{j=1}^{n} \sum_{k=1}^{K} P_{q_1^{(s)}}(z_i=k) P_{q_2^{(s)}}(w_j=l) \hat{\lambda}_{jk}^{(s)}}$\;
            $\hat{\lambda}_{jk}^{(s)} \gets \frac{\sum_{i=1}^{m} A_{ij} P_{q_1^{(s)}}(z_i=k)}{\sum_{i=1}^{m} \sum_{l=1}^{L} P_{q_1^{(s)}}(z_i=k) P_{q_2^{(s)}}(w_j=l) \hat{\theta}_{il}^{(s)}}$\;
        }
    }
    Compute $\hat{J}(q_1^{(s)}, q_2^{(s)}, \Phi^{(s)})$ using equation (in the next section)\;
}
\KwOut{$q_1^{(s^*)}$, $q_2^{(s^*)}$ $\text{ where } s^* \gets \operatorname*{arg\,max}_s \hat{J}(q_1^{(s)}, q_2^{(s)}, \Phi^{(s)})$ \;}
\end{algorithm}

\section{Asymptotic Properties}\label{sec:theory}

In the theoretical studies, we consider the solutions $q_1(z)$ and $q_2(w)$ in a factored form based on Proposition \ref{prop:estep}. Accordingly, $q_1(z)$ can be represented by an $m \times K$ matrix $[q_{ik}^z]$, where each entry $q_{ik}^z \in [0, 1]$ denotes the probability that row $i$ is assigned to cluster $k$ under $q_1(z)$. Similarly, $q_2(w)$ can be represented by an $n \times L$ matrix $[q_{jl}^w]$, where each entry $q_{jl}^w \in [0, 1]$ denotes the probability that column $j$ is assigned to cluster $l$ under $q_2(w)$. The similarity between the estimated cluster allocation and the true cluster labels is measured using the following soft confusion matrices: 

\begin{definition}[Soft confusion matrix] 
\textit{For any row label assignment matrices $q^z$ and $\tilde{q}^z$, let}
\[
\mathbb{R}_{kk'}(q^z, \tilde{q}^z) = \frac{1}{m} \sum_{i=1}^m q_{ik}^z \tilde{q}_{ik'}^z.
\]

\textit{In particular, the confusion matrix for the true row label $z^*$ and $q^z$ is}
\[
\mathbb{R}_{kk'}(1^{z^*}, q^z) = \frac{1}{m} \sum_{i=1}^m 1^{z^*}_{ik} q_{ik'}^z,
\]
\textit{where $1^{z}$ is a 0–1 matrix for any hard labeling on rows, with $1^{z}_{ik} = 1(z_i = k)$.}

\medskip

\textit{Similarly, for any column label assignments, let}
\[
\mathbb{R}_{ll'}(q^w, \tilde{q}^w) = \frac{1}{n} \sum_{j=1}^n q_{jl}^w \tilde{q}_{jl'}^w, \quad
\mathbb{R}_{ll'}(1^{w^*}, q^w) = \frac{1}{n} \sum_{j=1}^n 1^{w^*}_{jl} q_{jl'}^w,
\]
\textit{where $1^{w}$ is a 0-1 matrix for any hard labeling on columns, with $1^{w}_{jl} = 1(w_j = l)$.}

\medskip
\end{definition}
It is useful to present an alternative expression for the confusion matrix associated with the true row label assignment $z^{*}$ and another hard labeling $z$. For $k=1, \dots, K$, define  $\mathcal{A}_{k}^{*} = \{ 1 \leq i \leq m: z^{*}_i =k \}$ and $\mathcal{A}_{k} = \{ 1\leq i \leq m : z_i =k \}$. Then, it follows directly that
\begin{equation}
    \mathbb{R}_{k k'}( 1^{z^*}, 1^{z} ) = \frac{1}{m} \left| \mathcal{A}_{k}^{*} \cap \mathcal{A}_{k'} \right|,  \label{eqid:anotherformfoR}
\end{equation}
where $|\mathcal{A}|$ denotes the cardinality of the set $\mathcal{A}$. Similarly, for the true column label assignment $w^{*}$ and another labeling $w$, we have
\begin{equation*}
    \mathbb{R}_{l l'}( 1^{w^*}, 1^{w} ) = \frac{1}{n} \left| \mathcal{B}_{l}^{*} \cap \mathcal{B}_{l'} \right|,
\end{equation*}
where $\mathcal{B}_{l}^{*} = \{ 1 \leq j \leq n: w^{*}_j =l \}$ and $\mathcal{B}_{l} = \{ 1 \leq j \leq n : w_j =l \}$. The soft confusion matrices defined above generalize the confusion matrix used for comparing two hard label assignments to the case of comparing two probability matrices. 

Let $S_K$ (resp. $S_L$) denote the set of permutations on ${1, \dots, K}$ (resp. ${1, \dots, L}$). Taking permutations into account, the misclustering rates for row and column clusters are defined as follows:
\[
M_{\mathrm{row}}(q^z) = \min_{s \in S_K} \left( 1 - \sum_{k'=1}^K \mathbb{R}_{s(k'),k'}(1^{z^*}, q^z) \right), \,
M_{\mathrm{col}}(q^w) = \min_{t \in S_L} \left( 1 - \sum_{l'=1}^L \mathbb{R}_{t(l'),l'}(1^{w^*}, q^w) \right).
\]

For any hard row label assignment $z$ and column hard label assignment $w$, it follows directly that
\begin{equation}
M_{\mathrm{row}}(1^z) = \min_{s \in S_K} \left( 1 - \frac{1}{m} \sum_{k'=1}^K \left| \mathcal{A}_{s(k')}^{*} \cap \mathcal{A}_{k'} \right|  \right), \, M_{\mathrm{col}}(1^w) = \min_{t \in S_L} \left( 1 - \frac{1}{n} \sum_{l'=1}^L \left| \mathcal{B}_{t(l')}^{*} \cap \mathcal{B}_{l'} \right| \right). \label{eqid:anotherformofMrowcol}
\end{equation}

\subsection{Concentration}

This subsection establishes the concentration properties of the objective function. Recall the definition of $J(q_1, q_2, \Phi)$:
\begin{align*}
J(q_1, q_2, \Phi) &= \sum_{z \in \Omega_z} \sum_{w \in \Omega_w} q_1(z) q_2(w) \Bigg[ \sum_{i=1}^{m} \log \pi_{z_i} + \sum_{j=1}^{n} \log \rho_{w_j} - \sum_{i=1}^{m} \sum_{j=1}^{n} \theta_{i w_j} \lambda_{j z_i} \\
&\quad + \sum_{i=1}^{m} \sum_{j=1}^{n} A_{ij} \log(\theta_{i w_j} \lambda_{j z_i}) \Bigg] - \sum_{z \in \Omega_z} \sum_{w \in \Omega_w} q_1(z) q_2(w) \log\bigl( q_1(z) q_2(w) \bigr), \\
&= -\sum_{i=1}^m \sum_{j=1}^n \left( \sum_{k=1}^K \sum_{l=1}^L q_{ik}^z q_{jl}^w \theta_{i l} \lambda_{j k} \right) + \sum_{i=1}^m \sum_{j=1}^n A_{ij} \left( \sum_{k=1}^K \sum_{l=1}^L q_{ik}^z q_{jl}^w \log (\theta_{i l} \lambda_{j k}) \right) \\
&\quad + \sum_{i=1}^m \sum_{k=1}^K q_{ik}^z \log \pi_k + \sum_{j=1}^n \sum_{l=1}^L q_{jl}^w \log \rho_l \\ 
&\quad - \sum_{i=1}^m \sum_{k=1}
^K q_{ik}^z \log q_{ik}^z - \sum_{j=1}^n \sum_{l=1}^L q_{jl}^w \log q_{jl}^w.
\end{align*}
Since $q_1(z)$ and $q_2(w)$ have been represented by the matrices $q^z$ and $q^w$, and to emphasize that the objective function depends on the data, we use the notation $\hat{J}(q^z, q^w, \Phi)$ to denote the objective function in the later text.

Omitting the lower order terms and defining the population version by taking the expectation of the above equation:
\begin{align*}
 \bar{J}(q^z, q^w, \theta, \lambda) &= -\sum_{i=1}^m \sum_{j=1}^n \left( \sum_{k=1}^K \sum_{l=1}^L q_{ik}^z q_{jl}^w \theta_{i l} \lambda_{j k} \right) \\ 
 &\quad + \sum_{i=1}^m \sum_{j=1}^n E[A_{ij} \mid z^*, w^*] \left( \sum_{k=1}^K \sum_{l=1}^L q_{ik}^z q_{jl}^w \log (\theta_{i l} \lambda_{j k}) \right) \\
 &= -\sum_{i=1}^m \sum_{j=1}^n \left( \sum_{k=1}^K \sum_{l=1}^L q_{ik}^z q_{jl}^w \theta_{i l} \lambda_{j k} \right) \\ 
 &\quad + \sum_{i=1}^m \sum_{j=1}^n \theta_{i w_j^*}^* \lambda_{j z_i^*}^*  \left( \sum_{k=1}^K \sum_{l=1}^L q_{ik}^z q_{jl}^w \log (\theta_{i l} \lambda_{j k}) \right)
\end{align*}

We assume that all parameters, including $q_z$ and $q_w$, are contained in compact domains: $\mathcal{C}_z = \{ q^z \in \mathbb{R}^{m \times K} : q_{ik}^z \in [0,1],\, \sum_k q_{ik}^z = 1, \forall i \}$, $\mathcal{C}_w = \{ q^w \in \mathbb{R}^{n \times L} : q_{jl}^w \in [0,1],\, \sum_l q_{jl}^w = 1, \forall j \}$, $\mathcal{C}_\pi = \{ \pi \in \mathbb{R}^K : \pi_k \in [\pi_{\min}, \pi_{\max}],\, \sum_k \pi_k = 1 \}$, $\mathcal{C}_\rho = \{ \rho \in \mathbb{R}^L : \rho_l \in [\rho_{\min}, \rho_{\max}],\, \sum_l \rho_l = 1 \}$, $\mathcal{C}_\theta = \{ \theta \in \mathbb{R}^{m \times L} : \theta_{il} \in [\eta_{\min}, \eta_{\max}], \forall i, l \}$, $\mathcal{C}_\lambda = \{ \lambda \in \mathbb{R}^{n \times K} : \lambda_{jk} \in [\eta_{\min}, \eta_{\max}], \forall j, k \}$. Among these interval limits, $\pi_{\min}, \pi_{\max}, \rho_{\min}$, and $\rho_{\max}$ are assumed to be constant and bounded away from 0. By contrast, $\eta_{\min}$ and $\eta_{\max}$ may go to zero because they control the overall graph density. We assume $\eta_{\min} \asymp \eta_{\max}$---that is, these quantities are of the same order. Let $\tilde{\eta}$ denote a quantity of this common order.

We prove the following result connecting the sample and population versions of the objective functions.
\begin{theorem} \label{thm:concentration}
If $mn \tilde{\eta}^2 / ((m+n)(\log \tilde{\eta})^2) \rightarrow \infty$ as $m, n \rightarrow \infty$, then for any fixed positive constant $\epsilon$, we have
\begin{align*}
&\mathbb{P} \Bigg( \sup_{q^z \in \mathcal{C}_z,\, q^w \in \mathcal{C}_w,\, \pi \in \mathcal{C}_\pi,\, \rho \in \mathcal{C}_\rho,\, \theta \in \mathcal{C}_\theta,\, \lambda \in \mathcal{C}_\lambda} \left| \hat{J}(q^z, q^w, \Phi) - \bar{J}(q^z, q^w, \theta, \lambda) \right| \geq  \eta_{\max}^2 n\varepsilon \mid z^*, w^* \Bigg) \rightarrow 0.
\end{align*}
\end{theorem}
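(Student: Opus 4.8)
The plan is to bound the supremum of $|\hat J - \bar J|$ by splitting the difference into two pieces. Examining the formulas, $\hat J(q^z,q^w,\Phi)$ and $\bar J(q^z,q^w,\theta,\lambda)$ differ only in (i) the terms involving $\log\pi_k$, $\log\rho_l$, and the entropy terms $-q^z_{ik}\log q^z_{ik}$, $-q^w_{jl}\log q^w_{jl}$, which are lower-order and do not appear in $\bar J$ at all, and (ii) the replacement of $A_{ij}$ by $E[A_{ij}\mid z^*,w^*]=\theta^*_{iw^*_j}\lambda^*_{jz^*_i}$ inside the $\log(\theta_{il}\lambda_{jk})$ bilinear term. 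The first group is deterministic and of order $O(m+n)$; since $\eta_{\max}^2 n \gg (m+n)$ is not assumed directly but $mn\tilde\eta^2/(m+n)\to\infty$ gives $\eta_{\max}^2 n \gg (m+n)/m\cdot\text{(something)}$ — I need to check this, but the intended reading is that these terms are $o(\eta_{\max}^2 n)$ after dividing, using $\tilde\eta\asymp\eta_{\min}\asymp\eta_{\max}$ and the growth condition; in any case they are handled deterministically and uniformly over the compact parameter sets. So the real work is the stochastic term
\[
D(q^z,q^w,\theta,\lambda) := \sum_{i=1}^m\sum_{j=1}^n (A_{ij}-E[A_{ij}\mid z^*,w^*])\,\Big(\sum_{k,l} q^z_{ik}q^w_{jl}\log(\theta_{il}\lambda_{jk})\Big).
\]

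First I would decouple the $\log(\theta_{il}\lambda_{jk})=\log\theta_{il}+\log\lambda_{jk}$ so that the coefficient of $(A_{ij}-E A_{ij})$ splits as $\big(\sum_l q^w_{jl}\log\theta_{il}\big)+\big(\sum_k q^z_{ik}\log\lambda_{jk}\big)$; call these $a_{ij}$ and $b_{ij}$. Because $\theta_{il},\lambda_{jk}\in[\eta_{\min},\eta_{\max}]$, each of $a_{ij},b_{ij}$ is bounded in absolute value by $\max(|\log\eta_{\min}|,|\log\eta_{\max}|)\asymp|\log\tilde\eta|$. The strategy is a standard two-step argument: (1) pointwise control of $D$ for a fixed parameter configuration via a Bernstein-type concentration inequality for sums of independent centered Poisson variables (each $A_{ij}-EA_{ij}$ has variance $\theta^*\lambda^*\le\eta_{\max}^2$, and Poisson tails are sub-exponential), and (2) a uniform bound over all $(q^z,q^w,\theta,\lambda)$ by covering the compact sets $\mathcal C_z,\mathcal C_w,\mathcal C_\theta,\mathcal C_\lambda$ with an $\epsilon'$-net and using a Lipschitz/continuity estimate to pass from the net to the full supremum, paying a union-bound factor equal to the log-cardinality of the net.

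For step (1): for fixed parameters, $D=\sum_{i,j}(A_{ij}-EA_{ij})(a_{ij}+b_{ij})$ is a sum of $mn$ independent centered terms, each bounded by $\asymp\eta_{\max}\cdot|\log\tilde\eta|$ in the sub-exponential Orlicz sense (using the Poisson moment bounds), with total variance proxy $\sum_{i,j}\mathrm{Var}(A_{ij})(a_{ij}+b_{ij})^2\lesssim mn\,\eta_{\max}^2(\log\tilde\eta)^2$. Bernstein then gives $\mathbb P(|D|\ge t)\le 2\exp\{-c\min(t^2/(mn\eta_{\max}^2(\log\tilde\eta)^2),\ t/(\eta_{\max}|\log\tilde\eta|))\}$. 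Taking $t=\eta_{\max}^2 n\epsilon/2$ makes the first (Gaussian) term of the exponent $\asymp \eta_{\max}^2 n^2/(mn(\log\tilde\eta)^2)= \eta_{\max}^2 n/(m(\log\tilde\eta)^2)$; symmetrizing the roles of $m$ and $n$ (the double sum is over both indices, and actually the relevant effective count after the $q$-weights is governed by the smaller dimension) one gets the exponent controlled below by $\asymp mn\tilde\eta^2/((m+n)(\log\tilde\eta)^2)$, which $\to\infty$ by hypothesis — this is exactly where the stated growth condition is used. The linear (sub-exponential) term $t/(\eta_{\max}|\log\tilde\eta|)=\eta_{\max} n\epsilon/(2|\log\tilde\eta|)\to\infty$ as well under the same condition. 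For step (2): the map $(q^z,q^w,\theta,\lambda)\mapsto D$ is Lipschitz in each argument with constant polynomial in $m,n$ and $1/\eta_{\min}$ (the $1/\eta_{\min}$ coming from differentiating $\log\theta$, $\log\lambda$, and $A_{ij}$ being at most $O(\log(mn))$ with high probability), so an $\epsilon'$-net of size $(\text{poly}(m,n)/\eta_{\min})^{O(mn)}$ — wait, that is too large; the key observation is that the coefficients $a_{ij},b_{ij}$ each depend only on the $i$-th row of $\theta$ and $q^z$ and the $j$-th column of $\lambda$ and $q^w$, so the effective dimension of the net is $O((m+n)(K+L))$, giving log-cardinality $O((m+n)\log(\text{poly}(m,n)/\eta_{\min}))$. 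This union-bound cost is absorbed because it is polynomial-log in $m,n$ while the Bernstein exponent grows like $mn\tilde\eta^2/((m+n)(\log\tilde\eta)^2)$, which dominates any such $(m+n)\log$ factor precisely under the hypothesis (here one uses $(m+n)\log(1/\tilde\eta)=o(mn\tilde\eta^2/((m+n)(\log\tilde\eta)^2))$, equivalently $(m+n)^2(\log\tilde\eta)^3 = o(mn\tilde\eta^2)$, which follows from the assumed divergence with a little room to spare, or may need to be folded into the hypothesis — I would double-check the exact bookkeeping here).

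The main obstacle I anticipate is the net/Lipschitz bookkeeping in step (2): naively covering $\mathcal C_\theta\times\mathcal C_\lambda$ costs a net of log-size $\Theta(mn\log(1/\eta_{\min}))$, which would swamp the Bernstein gain. The resolution — which I would make precise — is that $D$ depends on $\theta$ and $\lambda$ only through the $m$ quantities $\{(\sum_l q^w_{jl}\log\theta_{il})_j\}_i$ and $n$ quantities $\{(\sum_k q^z_{ik}\log\lambda_{jk})_i\}_j$ aggregated against rows/columns of $A-EA$, so one should net the \emph{reduced} parameter space of dimension $O((m+n)(K+L))$, not the full $O(mn)$-dimensional box; alternatively one can fix $q^z,q^w$ on their net first and then, for each fixed $(q^z,q^w)$, optimize over $\theta,\lambda$ in closed-ish form or bound the increment. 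A second, smaller obstacle is justifying the "omitting lower-order terms" step rigorously: one must verify $(m+n)=o(\eta_{\max}^2 n)$, i.e. $1+m/n = o(\eta_{\max}^2 n)$, i.e. $(m+n)/n = o(n\tilde\eta^2)$, i.e. $(m+n)= o(n^2\tilde\eta^2)$ — this is weaker than the main hypothesis $mn\tilde\eta^2\gg(m+n)(\log\tilde\eta)^2$ when $m\lesssim n$ but needs a symmetric check; since the bound in the theorem is stated with $\eta_{\max}^2 n$ (asymmetric in $m,n$), I would present the argument assuming WLOG $m\le n$, or note that the analogous statement with $\eta_{\max}^2 m$ holds by symmetry and the two together cover all regimes.
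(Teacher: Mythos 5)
Your proposal is essentially correct, but it takes a genuinely different route for the key uniform-concentration step. The paper shares your first moves (split off the deterministic $\sum q\log(q/\pi)$-type terms, bound them by $m\log\pi_{\min}^{-1}+n\log\rho_{\min}^{-1}$, and decompose $\log(\theta_{il}\lambda_{jk})=\log\theta_{il}+\log\lambda_{jk}$), but then it avoids nets and Lipschitz bookkeeping entirely: it affinely rescales each coefficient, writing $\log\theta_{il}=\log(\eta_{\max}/\eta_{\min})\cdot\frac{\log(\theta_{il}/\eta_{\min})}{\log(\eta_{\max}/\eta_{\min})}+\log\eta_{\min}$ so that the rescaled factor lies in $[0,1]$, and thereby bounds the whole supremum by $C\cdot\max_{u\in[0,1]^m,\,v\in[0,1]^n}\bigl|\sum_{ij}(A_{ij}-EA_{ij})u_iv_j\bigr|$ with $C\asymp|\log\tilde\eta|$. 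That bilinear maximum is handled by a ready-made lemma (Lemma B.1, adapted from the DC-LBM paper) whose union bound costs only $2^{m+n+1}$, i.e.\ log-cardinality $(m+n)\log 2$ with no granularity loss. Your Bernstein-plus-net argument over the reduced $O((m+n)(K+L))$-dimensional parameter space arrives at a union cost of the same $(m+n)$ order, and your identification of (and escape from) the naive $O(mn)$-dimensional net trap is exactly the right observation; the price is an extra $\log(1/(\eta_{\min}\varepsilon))$ factor from the net granularity and the Lipschitz constants of $\log\theta$, $\log\lambda$, which you correctly flag as possibly requiring a hair more than the stated hypothesis. One clarification that dissolves your remaining worries: the deviation threshold in the theorem as printed, $\eta_{\max}^2 n\varepsilon$, is evidently a typo for $\eta_{\max}^2 mn\varepsilon$ --- the paper's own proof derives the bound at level $\eta_{\max}^2 mn\varepsilon/2$ and the label-consistency proof invokes the theorem at level $C_1 mn\epsilon$ with $C_1\asymp\tilde\eta^2$. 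With $t\asymp\eta_{\max}^2 mn\varepsilon$ your Gaussian Bernstein exponent becomes $\asymp mn\tilde\eta^2\varepsilon^2/(\log\tilde\eta)^2$, which dominates the $(m+n)$-order union cost exactly under the stated hypothesis, and the deterministic $O(m+n)$ terms are trivially $o(\tilde\eta^2 mn)$; your ``symmetrize in $m$ and $n$'' workaround and the WLOG $m\le n$ device are then unnecessary.
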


\subsection{Maximizer: identification and well-separatedness} \label{sec:identificationwellsep}

We first show that the true parameter and label assignment constitute the maximizer of the population version of the objective function, as expected. 
\begin{proposition}
\label{prop:maximizer}
    For all $q^z \in \mathcal{C}_z, q^w \in \mathcal{C}_w,$ $\theta \in \mathcal{C}_\theta,$ and $\lambda \in \mathcal{C}_\lambda$, we have,
\begin{align*}
    \bar{J}(1^{z^*}, 1^{w^*}, \theta^*, \lambda^*) - \bar{J}(q^z, q^w, \theta, \lambda) \ge 0.
\end{align*}
\end{proposition}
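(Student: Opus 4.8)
The plan is to reduce the inequality to a family of scalar, one-dimensional optimization problems, one for each entry $(i,j)$ of the adjacency matrix. First I would set $\mu_{ij} := \theta^*_{i w_j^*}\lambda^*_{j z_i^*} = E[A_{ij}\mid z^*,w^*]$ and observe that, for any $q^z \in \mathcal{C}_z$, $q^w \in \mathcal{C}_w$, $\theta \in \mathcal{C}_\theta$, $\lambda \in \mathcal{C}_\lambda$, the population objective splits as
\[
\bar J(q^z, q^w, \theta, \lambda) = \sum_{i=1}^m \sum_{j=1}^n T_{ij}, \qquad T_{ij} := \sum_{k=1}^K \sum_{l=1}^L q_{ik}^z q_{jl}^w\Bigl(-\theta_{il}\lambda_{jk} + \mu_{ij}\log(\theta_{il}\lambda_{jk})\Bigr),
\]
so it is enough to bound each $T_{ij}$ by the value it takes at the truth. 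This clean separation across index pairs is exactly what makes the argument work, and it relies on $\bar J$ having discarded the $\pi,\rho$ and entropy terms.

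Next I would invoke the elementary fact that $f_\mu(x) = -x + \mu\log x$ is strictly concave on $(0,\infty)$ with a unique maximizer at $x = \mu$ and maximal value $f_\mu(\mu) = -\mu + \mu\log\mu$; equivalently, $\mu\log(\mu/x) - \mu + x \ge 0$ for all $x>0$ (the Poisson Bregman divergence). Applying this with $\mu = \mu_{ij}$ and $x = \theta_{il}\lambda_{jk} \in [\eta_{\min}^2,\eta_{\max}^2]\subset(0,\infty)$ gives $-\theta_{il}\lambda_{jk} + \mu_{ij}\log(\theta_{il}\lambda_{jk}) \le -\mu_{ij} + \mu_{ij}\log\mu_{ij}$ for every $k,l$. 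Since $\{q_{ik}^z q_{jl}^w\}_{k,l}$ is a probability vector summing to one, $T_{ij}$ is a convex combination of quantities each at most $-\mu_{ij} + \mu_{ij}\log\mu_{ij}$, so $T_{ij} \le -\mu_{ij} + \mu_{ij}\log\mu_{ij}$.

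Finally I would check that this bound is attained at the truth: with $q^z = 1^{z^*}$, $q^w = 1^{w^*}$, $\theta = \theta^*$, $\lambda = \lambda^*$, the weights $q_{ik}^z q_{jl}^w$ vanish unless $k = z_i^*$ and $l = w_j^*$, in which case $\theta^*_{il}\lambda^*_{jk} = \mu_{ij}$, whence $T_{ij} = -\mu_{ij} + \mu_{ij}\log\mu_{ij}$. Summing over $(i,j)$ gives $\bar J(q^z,q^w,\theta,\lambda) \le \sum_{i,j}(-\mu_{ij}+\mu_{ij}\log\mu_{ij}) = \bar J(1^{z^*},1^{w^*},\theta^*,\lambda^*)$, which is the claim.

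The argument is essentially routine; there is no serious obstacle beyond being careful that $(\theta^*,\lambda^*) \in \mathcal{C}_\theta \times \mathcal{C}_\lambda$ (a standing assumption), so that the comparison point is feasible, and that the logarithms are well defined on the compact domains. I would emphasize that this proposition only shows the truth is \emph{a} maximizer, not a well-separated one: the strict, quantitative gap needed for label consistency is precisely what Conditions \ref{cond:c1} and \ref{cond:c2} and the subsequent well-separatedness analysis supply.
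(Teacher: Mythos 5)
Your proof is correct and is essentially the same argument as the paper's: the paper writes the difference $\bar{J}(1^{z^*},1^{w^*},\theta^*,\lambda^*)-\bar{J}(q^z,q^w,\theta,\lambda)$ as $\sum_{k,l,k',l',i,j}1_{ik}^{z^*}1_{jl}^{w^*}q_{ik'}^z q_{jl'}^w\,\mathrm{KL}(\theta_{il}^*\lambda_{jk}^*,\theta_{il'}\lambda_{jk'})$ with $\mathrm{KL}(a,b)=a\log(a/b)-(a-b)\ge 0$, which is exactly your entrywise decomposition plus the statement $f_{\mu}(x)\le f_{\mu}(\mu)$ for $f_\mu(x)=-x+\mu\log x$. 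The only difference is packaging: you derive the key inequality from scalar concavity, while the paper cites it as nonnegativity of the Poisson KL divergence.
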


We have established that the true parameters $1^{z^*}, 1^{w^*}, \theta^*, \lambda^*$  constitute a maximizer of the function $\bar{J}(q^z, q^w, \theta, \lambda)$. However, this result does not guarantee the uniqueness of the maximizer. The goal of the next theorem is to address this issue by showing that $1^{z^*}$ and $1^{w^*}$ uniquely maximize the function, up to label permutations, under certain conditions.

Before we present the conditions, we pause to introduce necessary notation. We define
\begin{equation*}
\mathcal{I}_{z} := \{ 1^{z} \in \mathbb{R}^{m \times K} : \text{for $\forall i$, one of $1^{z}_{i1}, \dots, 1^{z}_{iK}$ is $1$ and others are $0$}  \}
\end{equation*}
and
\begin{equation*}
\mathcal{I}_{w} := \{ 1^{w} \in \mathbb{R}^{n \times L} : \text{for $\forall j$, one of $1^{z}_{j1}, \dots, 1^{z}_{jL}$ is $1$ and others are $0$}  \}.
\end{equation*}
For any $q^z \in \mathcal{C}_{z}$, we define
\begin{equation*}
\mathcal{I}_{z}( q^z ) := \{  1^{z} \in \mathcal{I}_{z}: \text{$1^{z}_{ik} =1$ only for $i$ and $k$ such that $q_{ik}^z>0$} \}.
\end{equation*}
In other words, $ \mathcal{I}_{z}( q^z ) $ is a subset of $ \mathcal{I}_{z}$ whose elements must satisfy the condition that if a particular entry of the matrix is $1$, then the corresponding entry in $q^z$ must be positive. Similarly, for any $q^w \in \mathcal{C}_{w}$, let
\begin{equation*}
\mathcal{I}_{w}( q^w ) := \{  1^{w} \in \mathcal{I}_{w}: \text{$1^{w}_{jl} =1$ only for $j$ and $l$ such that $q^{w}_{jl} >0$} \}.
\end{equation*}

We then introduce a new metric, which will play an important role in the uniqueness of the maximizer. Let $\mu = \eta_{\max}^2/\eta_{\min}$. We then define this new metric on $\mathbb{R}$ as follows:
\begin{equation*}
|a - b|_{\mathrm{new}} := \min \{ |a-b|, 6\mu \},
\end{equation*}
for any $a,b \in \mathbb{R}$. It is straightforward to verify that this definition produces a valid metric. Furthermore, for vector $a$ and $b$ in $ \mathbb{R}^{d} $, define
\begin{equation*}
\| a- b \|_{\mathrm{new}} = \sqrt{ \sum_{i=1}^{d} |a_i - b_i|_{\mathrm{new}}^{2} }.
\end{equation*}
A direct calculation gives that for any $c \in \mathbb{R}$, we have
\begin{equation}
\|c (a - b)\|_{\mathrm{new}} \leq \left( |c| \vee 1 \right) \|a - b\|_{\mathrm{new}},  \label{eqid:contractionnewmetric}
\end{equation}
for any $a,b \in \mathbb{R}^{d}$.

Finally, we introduce a standard terminology from linear algebra. Suppose that $P= (p_{ij})_{ 1 \leq i \leq m, 1 \leq j \leq n } \in \mathbb{R}^{m \times n}$ is a matrix. Furthermore, let $\mathcal{A}$ be a subset of $\{ 1, \dots, m\}$ and $\mathcal{B}$ a subset of $\{1, \dots, n\}$. We denote by $P_{\mathcal{A}, \mathcal{B}}$ the submatrix of $P$ formed by selecting the rows indexed by $\mathcal{A}$ and the columns indexed by $\mathcal{B}$; explicitly, $P_{\mathcal{A}, \mathcal{B}} = (p_{ij})_{i \in \mathcal{A}, j \in \mathcal{B}}$. For simplicity, we use $P_{\mathcal{A}, j}$ to denote $P_{\mathcal{A}, \{j\}}$, and $P_{i, \mathcal{B} }$ to denote $P_{ \{i\}, \mathcal{B} }$.

With the preparations above and by recalling the definitions of $ \mathcal{A}^{*}_{k} $ and $ \mathcal{B}^{*}_{l} $ at the beginning of Section \ref{sec:theory}, we are ready to present our conditions.
\begin{enumerate}[label=(C\arabic*)]
\item \label{cond:c1} Suppose $|\mathcal{A}_{k}^{*}| >0$, for any $k =1, \dots, K$, and $|\mathcal{B}_{l}^{*}| >0$ for any $l = 1, \dots, L$.
Suppose there exists some $k \in \{1, \dots, K\}$ such that for all $l_1, l_2 \in \{1, \dots, L\}$ satisfying $l_1 \neq l_2$, we have
\begin{equation*}
\inf_{\substack{ \mathcal{A}_{k1}, \dots, \mathcal{A}_{kK} \, \text{form} \\ \text{ a partition of $\mathcal{A}_{k}^{*}$}  }} \  \inf_{ c_1, \dots, c_K \in \left[ \frac{ \eta_{\mathrm{min}} }{\eta_{\mathrm{max}}} ,  \frac{ \eta_{\mathrm{max}} }{\eta_{\mathrm{min}}}\right] } \  \sum_{k'=1}^{K} \left\| \theta^{*}_{ \mathcal{A}_{k k'}, \, l_1 } - c_{k'} \theta^{*}_{ \mathcal{A}_{k k'}, \, l_2 } \right\|^{2}_{\mathrm{new}} >0.
\end{equation*}
\item \label{cond:c2} Similarly, Suppose there exists some $l \in \{1, \dots, L\}$ such that for all $k_1, k_2 \in \{1, \dots, K\}$ satisfying $k_1 \neq k_2$, we have
\begin{equation}
\inf_{\substack{ \mathcal{B}_{l1}, \dots, \mathcal{B}_{lL} \, \text{form} \\ \text{ a partition of $\mathcal{B}_{l}^{*}$}  }} \  \inf_{ c_1, \dots, c_L \in \left[ \frac{ \eta_{\mathrm{min}} }{\eta_{\mathrm{max}}} ,  \frac{ \eta_{\mathrm{max}} }{\eta_{\mathrm{min}}}\right] } \  \sum_{l'=1}^{L} \left\| \lambda^{*}_{ \mathcal{B}_{l l'}, \, k_1 } - c_{l'} \lambda^{*}_{ \mathcal{B}_{l l'}, \, k_2 } \right\|^{2}_{\mathrm{new}} >0. \label{eqid:secondpartC2}
\end{equation}
Note that some subsets in the partition may be empty. For example, it is possible that $\mathcal{A}_{k1} = \varnothing$. In such cases, $ \| \theta^{*}_{ \mathcal{A}_{k 1}, \, l_1 } - c_{1} \theta^{*}_{ \mathcal{A}_{k 1}, \, l_2 } \|_{\mathrm{new}} $ is conventionally defined to be zero.
\end{enumerate} 

\begin{remark}
\textit{If} 
\begin{equation*}
\inf_{\substack{ \mathcal{A}_{k1}, \dots, \mathcal{A}_{kK} \, \mathrm{form} \\ \mathrm{a\ partition\ of\ } \mathcal{A}_{k}^{*}  }} \  \inf_{ c_1, \dots, c_K \in \left[ \frac{ \eta_{\mathrm{min}} }{\eta_{\mathrm{max}}} ,  \frac{ \eta_{\mathrm{max}} }{\eta_{\mathrm{min}}}\right] } \  \sum_{k'=1}^{K} \left\| \theta^{*}_{ \mathcal{A}_{k k'}, \, l_1 } - c_{k'} \theta^{*}_{ \mathcal{A}_{k k'}, \, l_2 } \right\|^{2}_{\mathrm{new}} =0 
\end{equation*}
\textit{holds, then $ \theta^{*}_{ \mathcal{A}_{k k'}, \, l_1 } = c_{k'} \theta^{*}_{ \mathcal{A}_{k k'}, \, l_2 } $ for all $k'=1, \dots, K$. Therefore, the row vectors of the $m \times 2$ matrix $( \theta_{ij}^{*} )_{ 1 \leq i \leq m, j \in \{ l_1, l_2 \} }$, regarded as points in $\mathbb{R}^2$, lie on at most $K$ distinct rays emanating from the origin. Consequently, the first part of Condition \ref{cond:c2} is slightly weaker than requiring that, for any $m \times 2$ submatrix of $\theta^{*}$, the row vectors lie on at least $(K+1)$ distinct rays from the origin. A similar argument holds for $\lambda^{*}$.}
\end{remark}

With these conditions in hand, we are ready to state the identifiability result for the maximizer $1^{z^*}, 1^{w^*}$.
\begin{theorem}  \label{thm:identification}
Suppose Conditions \ref{cond:c1} and \ref{cond:c2} hold. Then, for any $\theta \in \mathcal{C}_{\theta}$ and $\lambda \in \mathcal{C}_{\lambda}$,
\begin{equation*}
\bar{J} \left( 1^{z^*}, 1^{w^*}, \theta^*, \lambda^* \right) - \bar{J} \left( q^z, q^w, \theta, \lambda \right) =0
\end{equation*}
only if $ M_{\mathrm{row}} (q^z) =0 $ and $M_{\mathrm{col}} (q^{w}) =0$.
\end{theorem}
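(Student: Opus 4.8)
The plan is to start from Proposition \ref{prop:maximizer}, whose proof presumably decomposes the difference $\bar{J}(1^{z^*},1^{w^*},\theta^*,\lambda^*) - \bar{J}(q^z,q^w,\theta,\lambda)$ into a sum of nonnegative terms, one natural candidate being a sum of Bregman-type (KL) discrepancies between the ``true'' Poisson means $\theta^*_{iw_j^*}\lambda^*_{jz_i^*}$ and the ``fitted'' quantities $\sum_{k,l} q_{ik}^z q_{jl}^w \theta_{il}\lambda_{jk}$ (after optimizing out $\pi,\rho$ and using Jensen in the log term). Equality to zero then forces, for every pair $(i,j)$, that the fitted mixture collapses to a single product equal to $\theta^*_{iw_j^*}\lambda^*_{jz_i^*}$. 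First I would extract from the equality case the statement that for each $i$ there is a label the mass of $q^z$ can only sit on indices $k$ for which the corresponding term is active, and similarly for $j$; in other words the support sets $\mathcal{I}_z(q^z)$ and $\mathcal{I}_w(q^w)$ are nonempty and every $1^z \in \mathcal{I}_z(q^z)$, $1^w\in\mathcal{I}_w(q^w)$ also achieves $\bar{J}(1^{z^*},1^{w^*},\theta^*,\lambda^*)-\bar{J}(1^z,1^w,\theta,\lambda)=0$. This reduces the problem to the hard-label case: it suffices to show that if a hard labeling $(1^z,1^w)$ (with some parameters $\theta,\lambda$) attains the maximum, then $M_{\mathrm{row}}(1^z)=0$ and $M_{\mathrm{col}}(1^w)=0$, and then transfer back to $q^z,q^w$ via the observation that a convex combination of maximizers whose components are all maximizers forces all those hard labelings to already be the permuted truth.

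Second, for the hard-label case I would use equation \eqref{eqid:anotherformfoR} and the block structure. Equality to zero means that within each true-label block the fitted rank-one blocks reproduce $\theta^*\lambda^*$ exactly; concretely, partition rows of community $k^*$ (the one singled out in Condition \ref{cond:c1}) according to their labels under $z$, giving sets $\mathcal{A}_{kk'} = \mathcal{A}_k^* \cap \mathcal{A}_{k'}$, and look at two distinct column-true-communities $l_1\neq l_2$ (there must be $L\ge 2$ for the condition to bite; the $L=1$ case is trivial for columns). Writing out the matching of fitted means on these rows and these two column blocks, one finds relations of the form $\theta^*_{\mathcal{A}_{kk'},l_1} = c_{k'}\,\theta^*_{\mathcal{A}_{kk'},l_2}$ for scalars $c_{k'}$ coming from ratios of the fitted $\lambda$'s, and these ratios lie in $[\eta_{\min}/\eta_{\max},\eta_{\max}/\eta_{\min}]$ because $\theta,\lambda\in\mathcal{C}_\theta,\mathcal{C}_\lambda$. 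That is exactly the configuration forbidden by the strict positivity in Condition \ref{cond:c1} unless the partition is trivial --- i.e. all rows of $\mathcal{A}_k^*$ receive the same label under $z$. Doing this for the distinguished $k$ (and symmetrically for the distinguished $l$ via Condition \ref{cond:c2}) pins down one row community and one column community; a bootstrapping argument then propagates the conclusion to all communities, because once one column block is correctly identified the fitted $\theta$ on it must match $\theta^*$ up to scale, and feeding that back into the remaining blocks lets one re-run the same partition argument for the other row communities, and vice versa. The upshot is that $z$ agrees with $z^*$ up to a permutation and $w$ with $w^*$, i.e. $M_{\mathrm{row}}(1^z)=M_{\mathrm{col}}(1^w)=0$.

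Third, I would handle the passage from soft to hard labels carefully, since this is where a subtlety hides: a priori $q^z$ could spread mass over several labels in a way that, per row, is consistent with the true label but globally is not a permutation. The key point is that \emph{every} hard selection $1^z\in\mathcal{I}_z(q^z)$ combined with $1^w\in\mathcal{I}_w(q^w)$ must also be a maximizer (this follows because the equality-to-zero decomposition is a sum over $(i,j)$ of nonnegative terms, each already zero, so replacing the mixture weights by any admissible Dirac choice keeps each term zero --- modulo re-optimizing $\theta,\lambda$, which only helps). By the hard-label result just proved, each such $1^z$ is a permutation of $1^{z^*}$; but if $q^z$ put positive mass on two different labels for some row $i$, one could build two hard selections differing only at row $i$, and for large $m$ both cannot simultaneously be permutations of $z^*$ unless those two labels were forced to coincide. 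Hence $q^z$ is itself a hard labeling equal to $1^{z^*}$ up to permutation, giving $M_{\mathrm{row}}(q^z)=0$, and similarly $M_{\mathrm{col}}(q^w)=0$.

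The main obstacle I anticipate is the second step: turning the ``equal fitted means'' identity into precisely the scalar-rescaled-partition statement that Conditions \ref{cond:c1}--\ref{cond:c2} rule out, and then making the bootstrapping propagation rigorous across all $K$ row communities and $L$ column communities simultaneously (the two conditions only directly constrain one distinguished community each, so one must argue that correctly identifying that one community forces enough structure on $\theta,\lambda$ to restart the argument elsewhere). Controlling the scalars $c_{k'}$ to genuinely lie in $[\eta_{\min}/\eta_{\max},\eta_{\max}/\eta_{\min}]$ --- rather than some weaker range --- is where the compact-domain assumptions and the definition of the window $[\eta_{\min},\eta_{\max}]$ get used, and it is worth double-checking that the ratios that arise are exactly ratios of two entries of $\lambda$ (or $\theta$), each in $[\eta_{\min},\eta_{\max}]$, so the bound is tight. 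The $|\cdot|_{\mathrm{new}}$ metric plays no essential role in the pure identification argument (it matters for the quantitative well-separatedness refinement), but since Conditions \ref{cond:c1}--\ref{cond:c2} are stated with $\|\cdot\|_{\mathrm{new}}$ one should note that $\|a-b\|_{\mathrm{new}}=0 \iff a=b$, so the strict-positivity hypotheses are equivalent to the corresponding strict-positivity with the ordinary norm, which is what the argument actually needs.
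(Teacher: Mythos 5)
Your reduction to hard labelings (your first and third steps) is sound and is essentially the paper's Lemma~\ref{lm:restrict}: the difference of objectives is a sum over $(i,j,k',l')$ of nonnegative KL terms weighted by $q^z_{ik'}q^w_{jl'}$, so every hard selection supported on the positive entries of $q^z,q^w$ is also a maximizer with the \emph{same} $\theta,\lambda$ (no re-optimization needed), and the single-row-perturbation argument forces $q^z$ to be hard and equal to the truth up to permutation --- this works for every $m$, not just large $m$, since the perturbed labeling has misclustering rate at least $1/m$ under every permutation once all $\mathcal{A}^*_k$ are nonempty.

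The genuine gap is in your second step: you have inverted what Conditions~\ref{cond:c1} and~\ref{cond:c2} detect. Starting from $\theta^*_{iw^*(j)}\lambda^*_{jz^*(i)}=\theta_{iw(j)}\lambda_{jz(i)}$ for all $i,j$, the only way to obtain the configuration forbidden by~\ref{cond:c1} --- namely $\theta^*_{\mathcal{A}_{kk'},l_1}=c_{k'}\,\theta^*_{\mathcal{A}_{kk'},l_2}$ on the pieces of the $z$-induced partition of $\mathcal{A}^*_k$ --- is to divide the identities for two columns $j,j'$ and cancel the $i$-dependent factor $\theta_{iw(j)}/\theta_{iw(j')}$, which requires $w(j)=w(j')$ while $w^*(j)=l_1\neq l_2=w^*(j')$, i.e., it requires the \emph{column} labeling to merge two true column communities. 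Hence~\ref{cond:c1} directly yields $M_{\mathrm{col}}(1^w)=0$ for all columns at once (the condition quantifies over all pairs $l_1\neq l_2$), and symmetrically~\ref{cond:c2} yields $M_{\mathrm{row}}(1^z)=0$: the paper detects row mixing by taking $i_1,i_2$ in a single $\mathcal{A}_k$ but in two different true row communities and extracting, from the rank-one blocks, a proportionality between two columns of $\lambda^*$ on the partition of $\mathcal{B}^*_l$ induced by $w$, contradicting~\ref{cond:c2}. Your conclusion that~\ref{cond:c1} forces ``all rows of $\mathcal{A}^*_k$ to receive the same label under $z$, unless the partition is trivial'' is not available: the condition forbids the proportionality for \emph{every} partition of $\mathcal{A}^*_k$, including the trivial one, so triviality of the partition is not an escape route, and the correct deduction is that the premise (column mixing) must fail. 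Consequently your argument does not establish even that the distinguished row community is unsplit, and it leaves you needing the ``bootstrapping'' across the remaining $K-1$ row and $L-1$ column communities, which you identify as the main obstacle and do not supply --- and which, under the correct reading of the conditions, is not needed at all.
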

\begin{remark}
In the proof of Theorem~\ref{thm:identification}, we will show that for any alternative community label assignments $ 1^{z}, 1^{w}$, the equality
\begin{equation*}
\bar{J} \left( 1^{z^*}, 1^{w^*}, \theta^*, \lambda^* \right) - \bar{J} \left( 1^{z}, 1^{w}, \theta, \lambda \right) =0
\end{equation*}
implies that the corresponding adjacency matrices
\begin{equation*}
\left[\theta_{i w^{*}(j)}^* \lambda_{j z^{*}(i)}^*\right]_{i,j} \qquad \text{and} \qquad \left[\theta_{i w(j)} \lambda_{j z(i)}\right]_{i,j}
\end{equation*}
are identical. Theorem \ref{thm:identification} establishes that this equality of adjacency matrices can occur only when $ M_{\mathrm{row}} (1^z) =0 $ and $M_{\mathrm{col}} (1^{w}) =0$; that is, the alternative label assignments coincide with the true community labels up to a permutation.
\end{remark}

To establish label consistency, we need to prove a result that is stronger than the uniqueness of the maximizer $1^{z^*}, 1^{w^*}$—well-separatedness; that is, $\bar{J} \left( 1^{z^*}, 1^{w^*}, \theta^*, \lambda^* \right) - \bar{J} \left( q^z, q^w, \theta, \lambda \right)$ is bounded below by $M_{\mathrm{row}} (q^z)$ and $M_{\mathrm{col}} (q^{w})$, up to a factor. We introduce a new condition on $\theta^{*}$ and $\lambda^{*}$ that is stronger than Condition \ref{cond:c2}. Under this condition, we investigate the property of well-separatedness for $ \bar{J} \left( 1^z, 1^w, \theta, \lambda \right)$.
\begin{enumerate}[label=(C3)]
\item \label{cond:c3} There exists $\tau$ of the same order as $\eta_{\max}^2$ such that, for all $k \in \{1, \dots, K\}$ and all $l_1, l_2 \in \{1, \dots, L\}$ with $l_1 \neq l_2$,
\begin{equation*}
\inf_{\substack{ \mathcal{A}_{k1}, \dots, \mathcal{A}_{kK} \, \text{form} \\ \text{ a partition of $\mathcal{A}_{k}^{*}$}  }} \  \inf_{ c_1, \dots, c_K \in \left[ \frac{ \eta_{\mathrm{min}}^{2} }{\eta_{\mathrm{max}}^{2}} ,  \frac{ \eta_{\mathrm{max}}^{2} }{ \eta_{\mathrm{min}}^{2}}\right] } \  \sum_{k'=1}^{K} \left\| \theta^{*}_{ \mathcal{A}_{k k'}, \, l_1 } - c_{k'} \theta^{*}_{ \mathcal{A}_{k k'}, \, l_2 } \right\|^{2}_{\mathrm{new}} \geq  \tau \left| \mathcal{A}_{k}^{*} \right| ,
\end{equation*}
and for all $l \in \{1, \dots, L\}$ and all $k_1, k_2 \in \{1, \dots, K\}$ with $k_1 \neq k_2$, 
\begin{equation}
\inf_{\substack{ \mathcal{B}_{l1}, \dots, \mathcal{B}_{lL} \, \text{form} \\ \text{ a partition of $\mathcal{B}_{l}^{*}$}  }} \  \inf_{ c_1, \dots, c_L \in \left[ \frac{ \eta_{\mathrm{min}}^{2} }{\eta_{\mathrm{max}}^{2}} ,  \frac{ \eta_{\mathrm{max}}^{2} }{\eta_{\mathrm{min}}^{2}}\right] } \  \sum_{l'=1}^{L} \left\| \lambda^{*}_{ \mathcal{B}_{l l'}, \, k_1 } - c_{l'} \lambda^{*}_{ \mathcal{B}_{l l'}, \, k_2 } \right\|^{2}_{\mathrm{new}} \geq \tau \left| \mathcal{B}_{l}^{*} \right| .    \label{eqws:secondpartC3}
\end{equation}
\end{enumerate}





Recall that the constant $\mu$, defined as $\mu =\eta_{\max}^2/\eta_{\min}$, appears in the definition of the new norm. We have the following theorem.
\begin{theorem} \label{thm:wellseparatedness}
For all $q^z \in \mathcal{C}_z, q^w \in \mathcal{C}_w,$ $\theta \in \mathcal{C}_\theta,$ and $\lambda \in \mathcal{C}_\lambda$, we have,
\begin{align*}
    \bar{J}(1^{z^*}, 1^{w^*}, \theta^*, \lambda^*) - \bar{J}(q^z, q^w, \theta, \lambda) &\geq   C_1 m n M_{\text{row}}(q^z) \\
    \bar{J}(1^{z^*}, 1^{w^*}, \theta^*, \lambda^*) - \bar{J}(q^z, q^w, \theta, \lambda) &\ge C_2 m n M_{\text{col}}(q^w),
\end{align*}
where
\begin{equation*}
C_1 = \frac{\tau}{12 \mu L} \frac{ \eta_{\mathrm{min}}^{5} }{ \eta_{\mathrm{min}}^{4} + \eta_{\mathrm{max}}^{4} }
\quad \text{and} \quad
C_2 = \frac{\tau}{12 \mu K} \frac{ \eta_{\mathrm{min}}^{5} }{ \eta_{\mathrm{min}}^{4} + \eta_{\mathrm{max}}^{4} }.
\end{equation*}
\end{theorem}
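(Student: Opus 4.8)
The plan is to convert the difference of population objectives into a per-entry Poisson--Kullback--Leibler sum, pass from there to a Frobenius discrepancy between mean matrices, and then exploit Condition~\ref{cond:c3} to bound that discrepancy below by the misclustering rates.

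\emph{Step 1: reduction to a Frobenius bound.} For hard labelings $1^z,1^w$ one reads off from the expression for $\bar J$ that
\[
\bar J(1^{z^*},1^{w^*},\theta^*,\lambda^*)-\bar J(1^z,1^w,\theta,\lambda)=\sum_{i=1}^m\sum_{j=1}^n D(P^*_{ij}\,\|\,\theta_{iw_j}\lambda_{jz_i}),\qquad P^*_{ij}:=\theta^*_{iw^*_j}\lambda^*_{jz^*_i},
\]
where $D(a\|b):=b-a+a\log(a/b)=\mathrm{KL}(\mathrm{Poisson}(a)\,\|\,\mathrm{Poisson}(b))\ge 0$; this is the per-entry identity already behind Proposition~\ref{prop:maximizer} and the proof of Theorem~\ref{thm:identification}. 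Since all arguments of $D$ lie in $[\eta_{\min}^2,\eta_{\max}^2]$, a second-order Taylor bound on $b\mapsto D(a\|b)$ gives $D(a\|b)\ge\tfrac{\eta_{\min}^2}{2\eta_{\max}^4}(a-b)^2$ on that range. For soft $q^z,q^w$ I would use that $\bar J$ is affine in $q^z$ for fixed $(q^w,\theta,\lambda)$ and affine in $q^w$: writing $q^z$ (resp.\ $q^w$) as the barycenter of the hard labelings with weights $\prod_i q^z_{iz_i}$ (resp.\ $\prod_j q^w_{jw_j}$),
\[
\bar J(1^{z^*},1^{w^*},\theta^*,\lambda^*)-\bar J(q^z,q^w,\theta,\lambda)=\mathbb E\bigl[\bar J(1^{z^*},1^{w^*},\theta^*,\lambda^*)-\bar J(1^z,1^w,\theta,\lambda)\bigr]
\]
with $1^z\sim q^z$, $1^w\sim q^w$ independent. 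Moreover, since the right-hand side of the theorem does not involve $q^w$, and the infimum over $q^w$ of an affine function over the polytope $\mathcal C_w$ is attained at a vertex, for the first (row) inequality it suffices to treat hard $q^w=1^w$ and general $\theta,\lambda$; symmetrically, for the second inequality one may take $q^z$ hard. Applying the quadratic bound inside the expectation over $1^z$ and then Jensen's inequality for $x\mapsto(x-a)^2$, and noting $\theta_{iw_j}$ does not depend on $z$, I obtain for the row case
\[
\bar J(1^{z^*},1^{w^*},\theta^*,\lambda^*)-\bar J(q^z,1^w,\theta,\lambda)\;\ge\;\frac{\eta_{\min}^2}{2\eta_{\max}^4}\sum_{i,j}\bigl(\theta_{iw_j}\bar\lambda_{ij}-P^*_{ij}\bigr)^2,\qquad \bar\lambda_{ij}:=\sum_k q^z_{ik}\lambda_{jk}.
\]
It therefore suffices to prove $\sum_{i,j}(\theta_{iw_j}\bar\lambda_{ij}-P^*_{ij})^2\gtrsim \eta_{\min}^2\,\tau\, mn\,M_{\mathrm{row}}(q^z)/L$, and symmetrically with $M_{\mathrm{col}}(q^w)/K$.

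\emph{Step 2: the Condition~\ref{cond:c3} bound (main obstacle).} Fix a true column community $\mathcal B_l^*$ with $|\mathcal B_l^*|\ge n/L$, and partition it as $\mathcal B_l^*=\bigcup_{\hat l}\mathcal B_{l\hat l}$ according to the (now hard) column labels---this is exactly the partition quantified in \eqref{eqws:secondpartC3}. For $z^*_i=k$, the row-$i$ restriction of the sum above equals $\sum_{\hat l}\theta_{i\hat l}^2\sum_{j\in\mathcal B_{l\hat l}}(\bar\lambda_{ij}-\gamma_{i\hat l}\lambda^*_{jk})^2$ with $\gamma_{i\hat l}=\theta^*_{il}/\theta_{i\hat l}\in[\eta_{\min}/\eta_{\max},\eta_{\max}/\eta_{\min}]\subset[\eta_{\min}^2/\eta_{\max}^2,\eta_{\max}^2/\eta_{\min}^2]$, and any rescaling constant that appears when one prototype $\lambda_{\cdot k'}$ is compared against $\lambda^*_{\cdot k}$ through these sums is a product of two such ratios, hence lands in the squared interval of Condition~\ref{cond:c3}. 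Writing $\tilde\Delta(k',k):=\inf_{c_{\hat l}}\sum_{\hat l}\|\lambda_{\mathcal B_{l\hat l},k'}-c_{\hat l}\lambda^*_{\mathcal B_{l\hat l},k}\|_{\mathrm{new}}^2$, Condition~\ref{cond:c3} (via the triangle inequality for the metric $|\cdot|_{\mathrm{new}}$ and the contraction inequality \eqref{eqid:contractionnewmetric}) forces two facts: (a) the matching $\sigma(k):=\argmin_{k'}\tilde\Delta(k',k)$ is essentially a bijection, any collision producing a true community $k$ with $\tilde\Delta(\sigma(k),k)\gtrsim\tau|\mathcal B_l^*|$; and (b) for $k'\ne\sigma(k)$, $\tilde\Delta(k',k)\gtrsim\tau|\mathcal B_l^*|$ (prototype $\lambda_{\cdot k'}$ is matched, if to anything, with a \emph{different} true community's $\lambda^*$-column). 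A short dichotomy then gives, for each $i$ with $z^*_i=k$, a row-$i$ contribution $\gtrsim\eta_{\min}^2\,\tau\,(1-q^z_{i\sigma(k)})\,|\mathcal B_l^*|$: if $q^z_{i\sigma(k)}$ is not close to $1$, the mass $q^z_{i\cdot}$ places on prototypes $k'\ne\sigma(k)$ drives the error through (b); while if $q^z_{i\sigma(k)}\approx 1$ then $\bar\lambda_{i\cdot}\approx\lambda_{\cdot\sigma(k)}$, and either the nominally correct rows already incur error $\gtrsim\eta_{\min}^2\tau|\mathcal B_l^*|$ through (a), or $\lambda_{\cdot\sigma(k)}$ is genuinely close to a rescaling of $\lambda^*_{\cdot k}$ and the genuinely spread rows are again handled by (b). Summing over $i$ and using $\sum_i(1-q^z_{i\sigma(z^*_i)})\ge m\,M_{\mathrm{row}}(q^z)$ yields $\sum_{i,j}(\theta_{iw_j}\bar\lambda_{ij}-P^*_{ij})^2\gtrsim \eta_{\min}^2\,\tau\, mn\,M_{\mathrm{row}}(q^z)/L$; the $M_{\mathrm{col}}$ bound follows identically from the $\theta$-part of Condition~\ref{cond:c3}, with $K$ in place of $L$.

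\emph{Step 3: constants, and the crux.} Chaining the displays of Step~1 with the estimate of Step~2 produces the theorem up to an absolute constant; recovering the exact $C_1=\tfrac{\tau}{12\mu L}\cdot\tfrac{\eta_{\min}^5}{\eta_{\min}^4+\eta_{\max}^4}$ (and $C_2$ with $K$) requires tracking three places carefully: the quadratic lower bound on $D$; the $6\mu$-cap in $\|\cdot\|_{\mathrm{new}}$, whose specific constant---together with the fact that \ref{cond:c3} forces $\tau\le 36\mu^2$---is precisely what turns a truncated-norm obstruction of size $\tau$ into an untruncated squared error of comparable size; and the contraction inequality \eqref{eqid:contractionnewmetric}, which absorbs the $\gamma_{i\hat l}$ into the new-norm estimates without loss. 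I expect Step~2 to be the main difficulty: constructing the matching $\sigma$ and establishing its near-bijectivity from \ref{cond:c3} alone (the multi-way comparison of prototypes forces repeated use of \eqref{eqid:contractionnewmetric}); recognizing that the relevant partition of each $\mathcal B_l^*$ is exactly the one induced by the alternative column labeling, so that \eqref{eqws:secondpartC3} applies verbatim; and carrying both the truncation and the rescaling constants through the \emph{soft} row assignments while keeping the bound proportional to $M_{\mathrm{row}}(q^z)$ rather than to a hard-label misclustering rate.
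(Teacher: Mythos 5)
Your overall architecture -- reduce the objective gap to per-entry discrepancies, partition each $\mathcal{B}_{l}^{*}$ by the alternative column labeling so that \eqref{eqws:secondpartC3} applies, build a near-bijective matching between candidate prototypes and true communities via a triangle-inequality/pigeonhole argument, and sum to get $mn\,M_{\mathrm{row}}(q^z)$ -- is essentially the paper's. But Step~1 contains a fatal loss. After applying the quadratic lower bound on the Poisson divergence you invoke Jensen's inequality to pass from $\sum_{k'} q^{z}_{ik'}\bigl(P^{*}_{ij}-\theta_{iw_j}\lambda_{jk'}\bigr)^{2}$ down to $\bigl(P^{*}_{ij}-\theta_{iw_j}\bar\lambda_{ij}\bigr)^{2}$ with $\bar\lambda_{ij}=\sum_{k'}q^{z}_{ik'}\lambda_{jk'}$. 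The inequality is valid, but the resulting Frobenius discrepancy of the \emph{averaged} mean matrix can be exactly zero while $M_{\mathrm{row}}(q^z)>0$: take $\theta=\theta^{*}$, $w=w^{*}$, every row $i$ in true community $1$ assigned $q^{z}_{i1}=q^{z}_{i2}=\tfrac12$, and $\lambda_{j1}=2\lambda^{*}_{j1}-\lambda^{*}_{j2}$, $\lambda_{j2}=\lambda^{*}_{j2}$ (admissible in $\mathcal{C}_{\lambda}$ if the values stay in $[\eta_{\min},\eta_{\max}]$). Then $\bar\lambda_{ij}=\lambda^{*}_{j1}$ for those rows, every entry of your Frobenius sum vanishes, yet $M_{\mathrm{row}}(q^z)\geq |\mathcal{A}^{*}_{1}|/(2m)>0$. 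So the intermediate claim $\sum_{i,j}(\theta_{iw_j}\bar\lambda_{ij}-P^{*}_{ij})^{2}\gtrsim \tau\,mn\,M_{\mathrm{row}}(q^z)/L$ is false, and no amount of work in Step~2 can recover the theorem from that starting point.

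The paper avoids this precisely by \emph{not} averaging over $k'$: it keeps the convex combination $\sum_{k'} q^{z}_{ik'}\,\mathrm{KL}\bigl(\theta^{*}_{il}\lambda^{*}_{jk},\,\theta_{i\tilde w(j)}\lambda_{jk'}\bigr)$, so that mass placed on \emph{any} individual wrong prototype is penalized (in the counterexample above this sum is $\asymp(\lambda^{*}_{j1}-\lambda^{*}_{j2})^{2}$, which Condition~\ref{cond:c3} keeps away from zero). It then converts each KL term to the truncated metric $|\cdot|_{\mathrm{new}}$ via $\mathrm{KL}(a,b)\geq\min\{(a-b)^2/(6b),|a-b|\}$ -- which is why Condition~\ref{cond:c3} is stated in that metric -- and runs the permutation argument you sketch in Step~2 on the per-prototype quantities $\inf_{c_1,\dots,c_L}\sum_{l'}\|\lambda^{*}_{\mathcal{B}^{*}_{l}\cap\widetilde{\mathcal{B}}_{l'},k}-c_{l'}\lambda_{\mathcal{B}^{*}_{l}\cap\widetilde{\mathcal{B}}_{l'},k'}\|^{2}_{\mathrm{new}}$, showing each $k'$ can be ``cheap'' for at most one true $k$. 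Your handling of the soft $q^{w}$ by vertex minimization of an affine function is fine (indeed it would avoid the $1/L$ the paper pays by selecting $l(j)$ with $q^{w}_{jl(j)}\geq 1/L$), and your observation that the truncation cap forces $\tau\leq 36\mu^{2}$ is correct but tangential; the essential repair is to carry the sum over $k'$ weighted by $q^{z}_{ik'}$ all the way through to the application of Condition~\ref{cond:c3}, rather than collapsing it into $\bar\lambda$.
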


\subsection{Label consistency}

By combining the results in Theorem \ref{thm:concentration} and Theorem \ref{thm:identification}, we finally obtain the main result of this section---label consistency.
Let
\[
(\hat{q}^{z}, \hat{q}^{w}, \hat{\Phi}) = \mathop{\arg\max}_{q^{z} \in \mathcal{C}_{z},\, q^{w} \in \mathcal{C}_{w},\, \theta \in \mathcal{C}_{\theta},\, \lambda \in \mathcal{C}_{\lambda},\, \pi \in \mathcal{C}_{\pi}, \rho \in \mathcal{C}_{\rho}} \hat{J}(q^{z}, q^{w}, \Phi),
\]
where $\hat{\Phi} = (\hat{\theta}, \hat{\lambda}, \hat{\pi}, \hat{\rho})$.
\begin{theorem}
\label{thm:labelconsistency}
If $mn \tilde{\eta}^2 / ((m+n)(\log \tilde{\eta})^2) \rightarrow \infty$ as $m, n \rightarrow \infty$, then we have,
$M_\textnormal{row}(\hat{q}^{{z}})=o_p(1)$, and $M_\textnormal{col}(\hat{q}^{{w}})=o_p(1)$.
\end{theorem}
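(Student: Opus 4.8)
The plan is to run the standard argmax-consistency (M-estimation) argument: use Theorem~\ref{thm:concentration} to transfer the optimality of $(\hat q^z,\hat q^w,\hat\Phi)$ for $\hat J$ into near-optimality for the population objective $\bar J$, and then invoke the well-separatedness bound of Theorem~\ref{thm:wellseparatedness} to convert a small optimality gap for $\bar J$ into a small misclustering rate. Throughout I take for granted, as is implicit in Section~\ref{sec:theory}, that the truth lies in the compact constraint sets, i.e.\ $\theta^*\in\mathcal{C}_\theta$, $\lambda^*\in\mathcal{C}_\lambda$, $\pi^*\in\mathcal{C}_\pi$, $\rho^*\in\mathcal{C}_\rho$, so that $(1^{z^*},1^{w^*},\Phi^*)$ with $\Phi^*=(\pi^*,\rho^*,\theta^*,\lambda^*)$ is a feasible point.

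First I would write $\hat J=\hat J^{\circ}+r$, where $\hat J^{\circ}$ consists of the two ``main'' terms $-\sum_{i,j}\sum_{k,l}q^z_{ik}q^w_{jl}\theta_{il}\lambda_{jk}$ and $\sum_{i,j}A_{ij}\sum_{k,l}q^z_{ik}q^w_{jl}\log(\theta_{il}\lambda_{jk})$ whose conditional expectation defines $\bar J$, and $r$ collects the prior and entropy terms. Since $\pi_k,\rho_l$ are bounded away from $0$ and the entropy sums lie in $[0,m\log K+n\log L]$, there is a constant $C$ with $|r|\le C(m+n)$ uniformly over the compact domains, and Theorem~\ref{thm:concentration} controls $|\hat J^{\circ}-\bar J|$. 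Optimality of $(\hat q^z,\hat q^w,\hat\Phi)$ gives $\hat J(\hat q^z,\hat q^w,\hat\Phi)\ge\hat J(1^{z^*},1^{w^*},\Phi^*)$, hence $\hat J^{\circ}(\hat q^z,\hat q^w,\hat\Phi)\ge\hat J^{\circ}(1^{z^*},1^{w^*},\Phi^*)-2C(m+n)$. Applying Theorem~\ref{thm:concentration} at the two points $(\hat q^z,\hat q^w,\hat\Phi)$ and $(1^{z^*},1^{w^*},\Phi^*)$, on an event $\mathcal{E}$ of probability tending to $1$ we have $|\hat J^{\circ}-\bar J|<\eta_{\max}^2 n\varepsilon$ at both points, which chained with the previous inequality yields, on $\mathcal{E}$,
\[
\bar J(1^{z^*},1^{w^*},\theta^*,\lambda^*)-\bar J(\hat q^z,\hat q^w,\hat\theta,\hat\lambda)\ \le\ 2\eta_{\max}^2 n\varepsilon+2C(m+n).
\]
Bounding the left-hand side below by the two inequalities of Theorem~\ref{thm:wellseparatedness} gives, on $\mathcal{E}$,
\[
M_{\mathrm{row}}(\hat q^z)\le\frac{2\eta_{\max}^2 n\varepsilon+2C(m+n)}{C_1\,mn},\qquad
M_{\mathrm{col}}(\hat q^w)\le\frac{2\eta_{\max}^2 n\varepsilon+2C(m+n)}{C_2\,mn}.
\]

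It then remains to track orders. Since $\eta_{\min}\asymp\eta_{\max}\asymp\tilde\eta$, $\tau\asymp\eta_{\max}^2$ and $\mu=\eta_{\max}^2/\eta_{\min}$, the constants of Theorem~\ref{thm:wellseparatedness} satisfy $C_1\asymp\tilde\eta^2$ and $C_2\asymp\tilde\eta^2$ (up to the fixed factors $L$ and $K$). Hence the right-hand sides are of order $\varepsilon/m+(m+n)/(mn\tilde\eta^{2})$; the first summand tends to $0$ as $m\to\infty$ for fixed $\varepsilon$, and the second equals $\tilde\eta^{-2}(m^{-1}+n^{-1})\to0$ because the hypothesis $mn\tilde\eta^2/((m+n)(\log\tilde\eta)^2)\to\infty$ in particular forces $mn\tilde\eta^2/(m+n)\to\infty$. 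Thus for any fixed $\varepsilon'>0$, choosing $\varepsilon=\varepsilon'$ makes the deterministic bound eventually smaller than $\varepsilon'$, so $\mathbb{P}\big(M_{\mathrm{row}}(\hat q^z)>\varepsilon'\big)\le\mathbb{P}(\mathcal{E}^{c})+o(1)\to0$, and likewise for $M_{\mathrm{col}}(\hat q^w)$; this is the claimed $M_{\mathrm{row}}(\hat q^z)=o_p(1)$ and $M_{\mathrm{col}}(\hat q^w)=o_p(1)$.

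The individual steps are routine, and the genuinely hard work—uniform concentration, and above all the identifiability and well-separatedness of $\bar J$ under Conditions~\ref{cond:c1}--\ref{cond:c3}—has already been carried out in Theorems~\ref{thm:concentration}--\ref{thm:wellseparatedness}. The one point that needs care here is the order bookkeeping in the last paragraph: one must verify that the separation constants $C_1,C_2$ are genuinely of order $\tilde\eta^2$, so that both the stochastic error $\eta_{\max}^2 n\varepsilon$ and the deterministic lower-order error $O(m+n)$ are swamped by the well-separatedness gap $\asymp\tilde\eta^2 mn\,M_{\mathrm{row}}(\hat q^z)$ (resp.\ $M_{\mathrm{col}}(\hat q^w)$) precisely under the stated rate condition.
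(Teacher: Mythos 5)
Your proposal is correct and follows essentially the same route as the paper: invoke the global optimality of $(\hat q^z,\hat q^w,\hat\Phi)$ for $\hat J$, sandwich the population gap $\bar J(1^{z^*},1^{w^*},\theta^*,\lambda^*)-\bar J(\hat q^z,\hat q^w,\hat\theta,\hat\lambda)$ between two applications of Theorem~\ref{thm:concentration}, and convert via Theorem~\ref{thm:wellseparatedness} with $C_1,C_2\asymp\tilde\eta^2$. The only cosmetic difference is that you peel off the prior/entropy terms as a separate deterministic $O(m+n)$ remainder, whereas the paper's Theorem~\ref{thm:concentration} already absorbs them into the uniform bound on $|\hat J-\bar J|$; the order bookkeeping comes out the same either way.
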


\section{Simulation Studies}\label{sec:simulation}
In this section, we conduct simulation studies to evaluate the performance of the proposed variational EM algorithm for community detection. We compare it with the Two-Stage Divided Cosine (TSDC) algorithm proposed by \cite{jing2024two}, which iteratively updates node labels based on their cosine similarity to the community centers. We implemented the TSDC algorithm in R, closely following the authors' publicly available Python code, and parallelized its iterative updates to improve computational efficiency. We evaluate the quality of estimated community labels using the Adjusted Rand Index (ARI) \citep{vinh2010information}, which measures the similarity between the true and estimated partitions while accounting for label permutations. The ARI is a metric that typically ranges between 0 and 1, with values closer to 1 indicating stronger agreement with the true partition and values closer to 0 representing random partition\footnote{Technically, the ARI may take on small negative values; however, an ARI of 0 corresponds to agreement no better than random chance. In addition, the metric is invariant to label permutations.}. We compute the ARI separately for the row and column communities in bipartite networks. 
The study is organized into two subsections: the first evaluates performance on bipartite networks, which represent the algorithm’s primary application, and the second examines its adaptability on symmetric networks.
For TSDC, we use singular value decomposition (SVD) initialization, while our variational EM algorithm uses SVD with multiple random initializations to mitigate local optima, as described in Section \ref{initial_values}.

\subsection{Bipartite Networks}
To evaluate our method on bipartite networks, we adapt the simulation framework of \cite{zhao2024variational}. We simulate bipartite networks under the correctly specified model with Poisson-distributed edge counts. The networks have $m = 800$ rows and $n = 1000$ columns, with $K = 3$ row clusters and $L = 4$ column clusters. The row and column community assignments, $z = (z_1, \dots, z_m)$ and $w = (w_1, \dots, w_n)$, are sampled from categorical distributions with equal probabilities over their respective clusters. Node-specific parameters $\theta_{il}$ and $\lambda_{jk}$ are sampled independently from a Uniform(0, 1) distribution. The edge counts $A_{ij}$ are sampled from a Poisson distribution with mean $r \cdot \theta_{iw_j} \cdot \lambda_{jz_i}$, where $r \in \{0.1, 0.2, 0.3, 0.4, 0.5\}$ is a parameter that controls the density of the networks and $z_i$ and $w_j$ denote the true community labels for row $i$ and column $j$ respectively. These $r$ values generate sparse to moderately dense networks, testing algorithms' robustness across different connectivity levels. For each value of $r$, we perform 200 replicates, resampling all of the parameters $\theta, \lambda, z$ and $w$ each time. We report the mean ARI across the 200 replicates for each method and $r$, along with the corresponding standard errors.

\begin{figure}[!ht]
\includegraphics[height=0.45\textwidth]{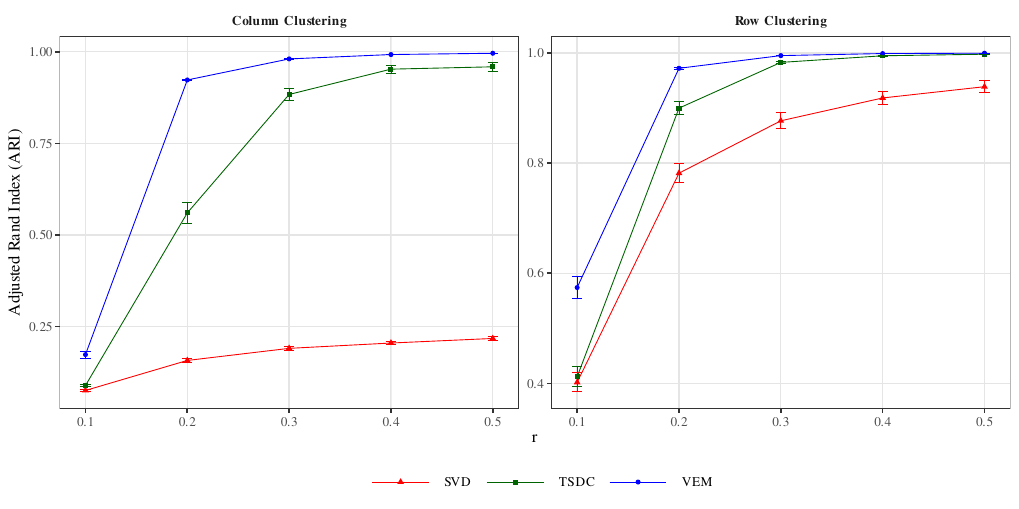}
\caption{Comparison of ARI across different values of the density factor $r$.}
\label{simulation_bipartite}
\end{figure}

Figure \ref{simulation_bipartite} presents the cluster estimation performance in terms of the ARI for different values of the density parameter $r$. Generally, a larger $r$ increases the network density, thus making the clustering problem easier as expected. Singular Value Decomposition (SVD) gives the lowest ARI scores as it does not take into account the node degree variations or the node popularity. Furthermore, we observe that the proposed VEM algorithm consistently outperforms the TSDC algorithm in terms of ARI. This difference is likely because the VEM procedure directly optimizes the TNPM model-based objective and more effectively accounts for the node-specific popularity. We also note that the difference is more noticeable for sparse networks (lower $r$) where there is insufficient signal in the node-connection patterns for the cosine-similarity based TSDC method to reliably estimate the community labels. The popularity-based EM method maintains high ARI scores across a broad range of network densities, demonstrating strong recovery of community partitions for most values of $r$.

\subsection{Undirected Networks}
It is of interest to evaluate the performance of the proposed method on clustering (rather than biclustering) in undirected networks. Although the model is designed for biclustering in bipartite networks and the proposed variational EM algorithm does not guarantee $\hat{q}^z = \hat{q}^w$ for symmetric adjacency matrices, both $\hat{q}^z$ and $\hat{q}^w$ are expected to converge to the same true labeling under the PABM in undirected settings.

For undirected networks, we adapt Algorithm \ref{alg:variational-em} to encourage symmetry between row and column labels. We encourage this symmetry by initializing both $q_1$ and $q_2$ with identical label assignments across all runs ($q_1^{init} = q_2^{init}$) and by modifying the E-step such that, within each iteration, $q_1$ is updated using the $q_2$ from the previous iteration, and $q_2$ is updated using the $q_1$ from the previous iteration---ensuring that the updated $q_1$ is not immediately used to compute $q_2$ in the same iteration. All the other steps remain unchanged. It is worth noting that, because the M-step used to estimate $\theta$ and $\lambda$ does not guarantee an exactly symmetric edge-probability matrix, the modified algorithm cannot ensure perfectly symmetric row and column labels. Nevertheless, in both our simulation studies and data analyses, we found that the estimated probability matrix—and the resulting row and column labels—are nearly symmetric in practice. Furthermore, although the SVD initialization method naturally produces nearly identical row and column labels for symmetric matrices, we explicitly enforce identical initialization for the TSDC algorithm as well to ensure fair comparison.

To evaluate the methods on undirected networks, we employ the simulation framework of \cite{sengupta2018block}, which considers networks with two communities $(K = 2)$ and equal community sizes $n_1 = n_2$. The parameters of the model are defined as $\lambda_{ir} = \alpha \sqrt{h /(1+h)}$ when $r = c_i$ and $\lambda_{ir} = \beta \sqrt{1/(1+h)}$ when $r \ne c_i$, where $h$ is the homophily factor and $c_i$ denotes the true community label of node $i$. The community structure is determined by $h \in \{1.5, 2.0, 2.5, 3.0, 3.5, 4.0\}$, such that the expected number of intracommunity edges is $h$ times that of intercommunity edges. Within each community, 50\% of the nodes are assigned as category 1 and the rest as category 2, with $\alpha = 0.8, \beta = 0.2$ for category 1 nodes, and $\alpha = 0.2, \beta = 0.8$ for category 2 nodes. This setup ensures that category 1 nodes are more popular within their own community, while category 2 nodes are more popular in the other community. We use sample size $n = 400$ (with $n_1 = n_2 = 200$) and generate the data, $A_{ij}$ from Bernoulli distribution with mean $\lambda_{ic_j} \lambda_{jc_i}$. We evaluate the performance of the algorithms by generating 200 replicates for each $h$ and reporting the mean ARI along with the standard errors for each method and homophily factor.

From Figure \ref{fig:simulation_undirected} (which presents two subfigures for row and column clustering, respectively, as the algorithm does not guarantee symmetric results), we first observe that the SVD-based approach consistently yields an ARI score close to 0.25 across all values of the homophily factor $h$. This occurs because the values in the second singular vector which ideally separates the communities are strongly influenced by the category structure in this simulation setup. The nodes in category 1 have large values while the values for category 2 nodes lie near zero. As a result, the subsequent K-means step applied to the SVD output misassigns category-2 nodes, merging communities and limiting the overall ARI. In contrast, our variational EM method achieves close to perfect ARI scores across the full range of $h$. Meanwhile, the TSDC algorithm produces consistently low ARI scores in this setting, which only begin to improve at higher values of the homophily factor $h$.  Specifically, the issue appears to propagate from the SVD initialization into the subsequent iterative updates. The algorithm begins with centroids that are systematically mixed, as described above. As a result, the cosine-similarity–based updates may not be able to overcome this biased initialization. This highlights a potential limitation of applying the TSDC method in similar scenarios.

\begin{figure}[!ht]
    \centering
    \includegraphics[height=0.45\textwidth]{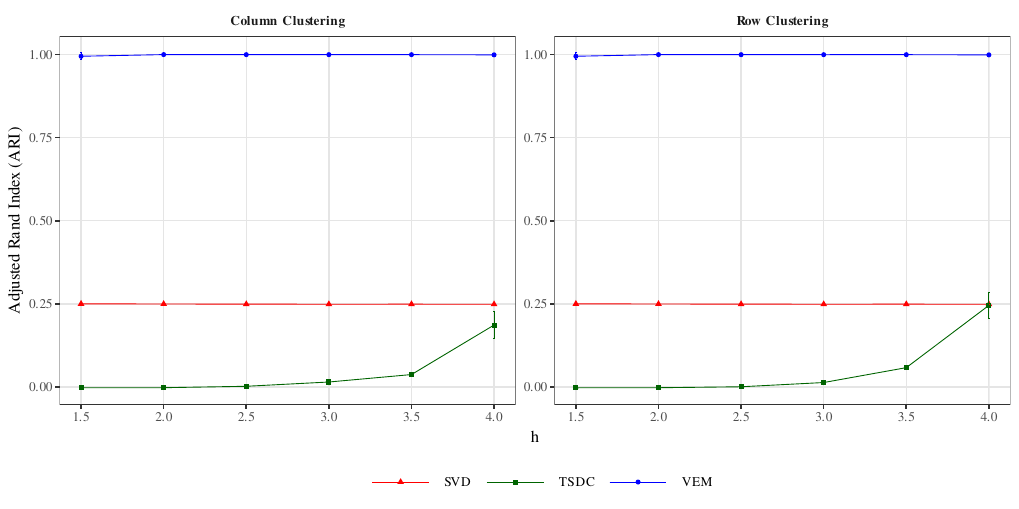}
    \caption{Comparison of ARI across different values of the homophily factor $h$.}
    \label{fig:simulation_undirected}
\end{figure}



\section{Application to Real-World Networks}\label{sec:dataanalysis}
\subsection{MovieLens Dataset}
In this section, we apply the proposed variational method to the popular MovieLens 100K dataset \citep{harper2015movielens}, collected by the GroupLens Research Project at the University of Minnesota through the MovieLens website (\url{https://movielens.unm.edu}). The dataset was gathered over a seven-month period from September 19, 1997, to April 22, 1998, and contains 100,000 ratings from 943 users on 1,682 movies. Each rating is an integer from 1 to 5.

Following prior studies such as \cite{flynn2020profile} (latent block model) and \cite{zhao2024variational} (degree-corrected latent block model), we convert the ratings matrix into a binary format, forming an adjacency matrix $A \in \{0, 1\}^{943 \times 1682}$, where $A_{ij} = 1$ if user $i$ rated movie $j$, and $A_{ij} = 0$ otherwise. This representation captures the structure of user engagement rather than the specific rating values. The number of user clusters is fixed at $K = 3$ and the number of movie clusters at $L = 4$, consistent with the values used in \cite{flynn2020profile}. These values were originally determined based on scree plot visualizations of the profile likelihood under their model. We adopt the same settings in our algorithm to ensure a direct and fair comparison.


We assessed whether the movie cluster estimated by our algorithm align with the genre labels assigned to each movie. The 1682 movies in the dataset are annotated with one or more of 19 genre categories, such as Action, Romance, Drama, Animation, and others. Many movies belong to multiple genres, which makes direct comparison with the estimated clusters challenging. To address this, we followed the evaluation strategy employed by \cite{zhao2024variational} and filtered the dataset to retain only the 833 movies in the dataset that are labeled with a single genre, thus avoiding the ambiguity in the ground truth. This filtered subset allows us to construct a contingency table comparing the estimated movie clusters to the actual genre labels. We perform a chi-square test of independence to quantify the statistical association between the estimated clusters and true genre. We obtained a p-value of $8.947\times10^{-12}$ for the clusters estimated by the proposed variational EM algorithm, which is smaller than the p-value of 0.0415 from the latent block model (reported in \cite{zhao2024variational}), the p-value of $2.656\times10^{-7}$ from the degree-corrected latent block model (reported in \cite{zhao2024variational}), and the p-value of $3.003\times10^{-11}$ from TSDC (reported in \cite{jing2024two}). This indicates a stronger association with the known genre categories than previous approaches. These results suggest that the proposed method could be valuable for user segmentation and personalized recommendation.

We further evaluated the estimated labels obtained from our method, the TSDC algorithm, and the SVD-based initialization using the objective function $J(q_1, q_2, \Phi)$. The proposed method achieved an objective value of $-232{,}210.4$, compared with $-237{,}318.1$ for TSDC and $-239{,}121.4$ for SVD initialization. For our method, the objective function was evaluated directly at the solution produced by the algorithm. In contrast, for TSDC and SVD, only the estimated labels were available; therefore, we used the formulae in Proposition \ref{prop:mstep2} to compute the corresponding estimates of $\theta$ and $\lambda$. Although all methods are based on the same underlying model, our approach attains a substantially higher objective value.

\subsection{Undirected Networks}

As demonstrated by the simulation studies, the proposed algorithm, while originally designed for bipartite or directed networks, also performs effectively on undirected networks. We therefore are interested in evaluating its performance on real-world undirected networks. We evaluate our method on two widely studied network datasets---the Political Blogs network and the DBLP co-authorship network---which have also been analyzed in the paper on the undirected popularity-adjusted block model \citep{sengupta2018block}.

The Political Blogs network \citep{adamic2005political} records directed hyperlinks between political blogs during the 2004 U.S. presidential election. Following common practice \citep{Karrer10, zhao2012consistency, amini2013pseudo}, we extract the largest connected component and treat the graph as undirected, since the primary interest is in clustering. The resulting network contains 1,222 nodes and 16,714 edges corresponding to the hyperlinks.

The second dataset is derived from DBLP network, a bibliography for computer science research. We use the heterogeneous version constructed in \cite{gao2009dblp} and \cite{ji2010dblp}, where the two authors are linked if they have co-authored a publication or attended the same conference. To get a setting comparable to previous studies, we focus on two major research areas---database and information retrieval---and extract the corresponding subgraph. The resulting network contains 2,203 nodes representing the authors and 1,148,044 edges reflecting co-authorship and shared conference attendance.

For the Political Blogs data, the estimated row and column labels from our method shows comparable alignment with the ground truth, achieving an adjusted Rand index (ARI) of 0.2875 for rows and 0.2822 for columns. More importantly, the two estimated labels are highly consistent with each other yielding an ARI of 0.9286 between the estimated row and column labels. In contrast, the TSDC algorithm achieves a higher row ARI of 0.4174 but a much lower column ARI of 0.1171. The consistency between its estimated row and column labels is also much weaker, with an ARI of 0.3963. Our method attains an objective value of -97,181.46, compared to -97,946.56 for the TSDC algorithm.

For the DBLP dataset, the estimated row and column labels from our method both achieve an ARI of 0.8957 with the ground-truth labels, perfectly consistent with each other. By contrast, the TSDC algorithm shows an asymmetric performance, with a row ARI of 0.8923 and a column ARI of 0.2462. Its row and column estimates are also inconsistent with each other, achieving an ARI of 0.2538. Our method attains an objective value of -2,734,109, compared to -2,843,102 for the TSDC algorithm. Overall, this comparison demonstrates that our method recovers the two partitions in a coherent manner, while the TSDC algorithm shows less alignment between the two partitions. 

\section{Conclusion}
In this paper, we have developed a variational expectation–maximization framework for estimating community labels in bipartite networks while accounting for node-specific popularity parameters. Our method provides scalable and accurate estimation, along with theoretical guarantees, including consistency of the estimated labels. The key challenging part of the theoretical analysis was establishing the identifiability of community labels in the TNPM. We imposed conditions on the link probabilities, requiring that the link-probability vectors associated with any two distinct communities cannot be made identical through partitioning and scalar rescaling, and further refined these conditions to prove the well-separatedness of the maximizer. We also demonstrate, through extensive simulations, that our approach outperforms existing estimation methods, such as the Two-Stage Divided Cosine (TSDC) algorithm, across both bipartite and undirected networks. Finally, our method shows improved performance on empirical network datasets.



\begin{appendix}

\section{Proof of technical results in Section 3}\label{appA}
\begin{proof}[Proof of Proposition \ref{prop:mstep}]
The proof for the $\hat{\theta}_{il}$ part is given below. The proof for the $\hat{\lambda}_{jk}$ part is similar and therefore omitted. Given the distributions $q_1$ and $q_2$, the following is the objective function $J$ (up to a constant independent of $\Phi$) that we aim to optimize.
\begin{align*}
    J(q_1, q_2, \Phi) &= \sum_{z} \sum_{w} q_1(z) q_2(w) \text{log } P(z, w, A; \Phi) \\
    &= \sum_{z} \sum_{w} q_1(z) q_2(w) \Bigg[ - \sum_{i=1}^{m} \sum_{j=1}^{n} (\theta_{iw_j} \lambda_{jz_i}) + \sum_{i=1}^{m} \sum_{j=1}^{n} A_{ij} \text{log} (\theta_{iw_j} \lambda_{jz_i}) \\
    &\quad + \sum_{i=1}^{m} \text{log } \pi_{z_i} + \sum_{j=1}^{n} \text{log } \rho_{w_j} \Bigg] \\
    &= \sum_{z} \sum_{w} q_1(z) q_2(w) \Bigg[ - \sum_{i=1}^{m} \sum_{j=1}^{n} \sum_{k=1}^{K} \sum_{l=1}^{L} 1(z_i=k) 1(w_j=l) \theta_{il} \lambda_{jk} \\
    &\quad + \sum_{i=1}^{m} \sum_{j=1}^{n} A_{ij} \sum_{k=1}^{K} \sum_{l=1}^{L} 1(z_i=k) 1(w_j=l) \text{log} (\theta_{il} \lambda_{jk}) \\
    &\quad + \sum_{i=1}^{m} \sum_{k=1}^{K} 1(z_i=k) \text{log } \pi_{k} + \sum_{j=1}^{n} \sum_{l=1}^{L} 1(w_j=l) \text{log } \rho_{l} \Bigg] \\
    &= - \sum_{i=1}^{m} \sum_{j=1}^{n} \sum_{k=1}^{K} \sum_{l=1}^{L} P_{q_1}(z_i=k) P_{q_2}(w_j=l) \theta_{il} \lambda_{jk} \\
    & \quad + \sum_{i=1}^{m} \sum_{j=1}^{n} A_{ij} \sum_{k=1}^{K} \sum_{l=1}^{L} P_{q_1}(z_i=k) P_{q_2}(w_j=l) \text{log} (\theta_{il} \lambda_{jk}) \\
    &\quad + \sum_{i=1}^{m} \sum_{k=1}^{K} P_{q_1}(z_i=k) \text{log } \pi_{k} + \sum_{j=1}^{n} \sum_{l=1}^{L} P_{q_2}(w_j=l) \text{log } \rho_{l}.
\end{align*}
To show that $(\hat{\theta}_{il}, \hat{\lambda}_{jk}, \hat{\pi}_k, \hat{\rho}_l)$ is a global maximizer of the objective function $J$, let $\alpha_{il} = \log \theta_{il}$ and $\beta_{jk} = \log \lambda_{jk}$. Then $J$ can be written as follows:
\begin{align*}
J(q_1, q_2, \Phi) &= -\sum_{i=1}^{m} \sum_{j=1}^{n} \sum_{k=1}^{K} \sum_{l=1}^{L} P_{q_1}(z_i=k) P_{q_2}(w_j=l) \exp{(\alpha_{il} + \beta_{jk})} \\
    &\quad + \sum_{i=1}^{m} \sum_{j=1}^{n} A_{ij} \sum_{k=1}^{K} \sum_{l=1}^{L} P_{q_1}(z_i=k) P_{q_2}(w_j=l) (\alpha_{il} + \beta_{jk}).
\end{align*}
It is clear that the $J$ above, with $q_1$ and $q_2$ fixed, is concave since $\exp(\alpha_{il} + \beta_{jk})$ is convex and the last term is linear. Now, taking the derivative of $J$ with respect to $\alpha_{il}$ and setting equal to zero, we get
\begin{align*}
    \qquad \sum_{j=1}^{n} A_{ij} P_{q_2}(w_j=l) &= \exp (\alpha_{il}) \sum_{j=1}^{n} \sum_{k=1}^{K} P_{q_1}(z_i=k) P_{q_2}(w_j=l) \exp (\beta_{jk}) \\
    \qquad \exp(\alpha_{il}) &= \frac{\sum_{j=1}^{n} A_{ij} P_{q_2}(w_j=l)}{\sum_{j=1}^{n} \sum_{k=1}^{K} P_{q_1}(z_i=k) P_{q_2}(w_j=l) \exp (\hat{\beta}_{jk})} \\
    \qquad \hat{\alpha}_{il} &= \log \left( \frac{\sum_{j=1}^{n} A_{ij} P_{q_2}(w_j=l)}{\sum_{j=1}^{n} \sum_{k=1}^{K} P_{q_1}(z_i=k) P_{q_2}(w_j=l) \hat{\lambda}_{jk}} \right) = \log(\hat{\theta}_{il}).
\end{align*}
Similarly, we obtain $\hat{\beta}_{jk} = \log \hat{\lambda}_{jk}$. Here, $(\hat{\alpha}_{il}, \hat{\beta}_{jk})$ is a stationary point and a global maximizer of the reparametrized objective function $J$ due to its concavity. Since exponential is strictly monotonic, $(\hat{\theta}_{il}, \hat{\lambda}_{jk})$ is a global maximizer of $J$.
\end{proof}
\begin{proof}[Proof of Proposition \ref{prop:mstep2}] 

We simply need to verify that $\hat{\theta}_{il}$ and $\hat{\lambda}_{jk}$ defined in the proposition constitute a stationary point of $J$ when $q_1$ and $q_2$ are both degenerate.
\begin{align*}
    \pdv{J}{\theta_{il}} \bigg|_{\hat{\theta}, \hat{\lambda}} &= \frac{1}{\hat{\theta}_{il}} \sum_{j=1}^{n} A_{ij} P_{q_2}(w_j=l) -\sum_{j=1}^{n} \sum_{k=1}^{K} P_{q_1}(z_i=k) P_{q_2}(w_j=l) \hat{\lambda}_{jk} \\
    &= \frac{\sum_{j \in C_l} A_{ij}}{\sum_{j \in C_l} A_{ij}} \left( \sqrt{\sum_{j \in C_l} \sum_{i' \in R_{z_i}} A_{i'j}} \right) - \sum_{j \in C_l} \sum_{k=1}^{K} P_{q_1}(z_i = k) \hat{\lambda}_{jk} \\
    &= \left( \sqrt{\sum_{j \in C_l} \sum_{i' \in R_{z_i}} A_{i'j}} \right) - \sum_{j \in C_l} \hat{\lambda}_{jz_i} \\
    &= \sqrt{\sum_{j \in C_l} \sum_{i' \in R_{z_i}} A_{i'j}} - \sum_{j \in C_l} \left( \frac{\sum_{i' \in R_{z_i}} A_{i'j}}{\sqrt{\sum_{i' \in R_{z_i}} \sum_{j' \in C_{w_{j}}} A_{i'j'}}} \right) \\
    &= \sqrt{\sum_{j \in C_l} \sum_{i' \in R_{z_i}} A_{i'j}} - \frac{\sum_{j \in C_l} \sum_{i' \in R_{z_i}} A_{i'j}}{\sqrt{\sum_{i' \in R_{z_i}} \sum_{j' \in C_l} A_{i'j'}}} \\
    &= 0.
\end{align*}
\begin{align*}
    \pdv{J}{\lambda_{jk}} \bigg|_{\hat{\theta}, \hat{\lambda}} &= \frac{1}{\hat{\lambda}_{jk}} \sum_{i=1}^{m} A_{ij} P_{q_1}(z_i=k) - \sum_{i=1}^{m} \sum_{l=1}^{L} P_{q_1}(z_i=k) P_{q_2}(w_j=l) \hat{\theta}_{il} \\
    &= \frac{\sum_{i \in R_k} A_{ij}}{\sum_{i \in R_k} A_{ij}} \left( \sqrt{\sum_{i \in R_k} \sum_{j' \in C_{w_j}} A_{ij'}} \right) - \sum_{i \in R_k} \hat{\theta}_{iw_j} \\
    &= \sqrt{\sum_{i \in R_k} \sum_{j' \in C_{w_j}} A_{ij'}} - \sum_{i \in R_k} \left( \frac{\sum_{j' \in C_l} A_{ij'}}{\sqrt{\sum_{j' \in C_l} \sum_{i' \in R_{z_i}} A_{i'j'}}} \right) \\
    &= 0.
\end{align*}
\end{proof}

\section{Proof of technical results in Section 4}

To prove Theorem 1, we rely on concentration inequalities for sums involving products of Poisson random variables and deterministic coefficients. The relevant result, adapted from \cite{zhao2024variational}, is summarized in the following lemma.

\begin{lemma}  \label{lmcon:boundsumofPoisson}
Let $\{ X_{ij} \}$ be independent Poisson variables with mean $E[X_{ij}] \leq r_{mn} C$. Then for all $\varepsilon >0$,
\begin{align*}
\mathbb{P}\left( \max_{ \substack{ 0 \leq u_i \leq 1, i=1, \dots, m \\ 0 \leq v_j \leq 1, j=1, \dots, n} } \left|  \sum_{i=1}^{m} \sum_{j=1}^{n} \left( X_{ij} - E[X_{ij}] \right) u_i v_j \right| \geq mn r_{mn} \varepsilon \right) \leq 2^{m+n+1} \exp\left( - \frac{ mnr_{mn} \varepsilon^2 }{4 \max( C, \varepsilon )} \right). 
\end{align*}
\end{lemma}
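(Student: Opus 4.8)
The plan is to pass from the supremum over the continuous box $[0,1]^m\times[0,1]^n$ to a maximum over the $2^{m+n}$ vertices of the hypercube, bound the tail of each resulting centered sum of independent Poisson variables by a Chernoff/Bennett argument, and conclude with a union bound.

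First I would reduce to vertices. Write $a_{ij}=X_{ij}-E[X_{ij}]$ and $f(u,v)=\sum_{i,j}a_{ij}u_iv_j$. For fixed $v$ the function $u\mapsto f(u,v)=\sum_i u_i\bigl(\sum_j a_{ij}v_j\bigr)$ is affine in each coordinate of $u$, so its maximum over $[0,1]^m$ equals $\sum_i\bigl(\sum_j a_{ij}v_j\bigr)_+$ with $(x)_+:=\max(x,0)$. Each summand is a convex function of $v$ (the convex increasing map $(\cdot)_+$ precomposed with an affine map), hence $v\mapsto\max_{u\in[0,1]^m}f(u,v)$ is convex on $[0,1]^n$ and therefore attains its maximum at a vertex; for such a $v$ the inner maximum over the $u$-box is again a vertex. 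Applying the identical argument to $-f$, whose coefficients are $-a_{ij}$, and using $|f|=\max(f,-f)$, I would obtain
\[
\max_{u\in[0,1]^m,\ v\in[0,1]^n}\Bigl|\sum_{i,j}a_{ij}u_iv_j\Bigr|
=\max_{S\subseteq\{1,\dots,m\},\ T\subseteq\{1,\dots,n\}}\Bigl|\sum_{i\in S}\sum_{j\in T}\bigl(X_{ij}-E[X_{ij}]\bigr)\Bigr|.
\]

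Next I would bound, for each fixed pair $(S,T)$, the centered Poisson sum $Z=\sum_{(i,j)\in S\times T}(X_{ij}-E[X_{ij}])$, whose mean parameter satisfies $M:=\sum_{(i,j)\in S\times T}E[X_{ij}]\le mn\,r_{mn}C$. By independence and the Poisson moment generating function one has $E[e^{sZ}]=\exp\!\bigl(M(e^s-1-s)\bigr)$; choosing $s=\log(1+t/M)$ in the Markov bound gives the Bennett inequality $\mathbb{P}(Z\ge t)\le\exp\!\bigl(-t^2/(2M+2t/3)\bigr)$, and using $e^{-s}-1+s\le s^2/2$ with $s=t/M$ gives $\mathbb{P}(Z\le -t)\le\exp\!\bigl(-t^2/(2M)\bigr)$. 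Substituting $t=mn\,r_{mn}\varepsilon$ together with $M\le mn\,r_{mn}C$ and bounding $2C+\tfrac23\varepsilon\le 4\max(C,\varepsilon)$ and $2C\le 4\max(C,\varepsilon)$, each of the two tails is at most $\exp\!\bigl(-mn\,r_{mn}\varepsilon^2/(4\max(C,\varepsilon))\bigr)$, so that $\mathbb{P}(|Z|\ge mn\,r_{mn}\varepsilon)\le 2\exp\!\bigl(-mn\,r_{mn}\varepsilon^2/(4\max(C,\varepsilon))\bigr)$. Finally, a union bound over the $2^m$ choices of $S$ and the $2^n$ choices of $T$ multiplies this probability by $2^{m+n}$, which yields exactly the prefactor $2^{m+n+1}$.

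I expect the only delicate points to be the convexity argument that localizes the supremum at a vertex of the hypercube, and the bookkeeping of constants in the Bennett step so that the exponent's denominator comes out as $4\max(C,\varepsilon)$ rather than something larger; neither constitutes a genuine obstacle, which is consistent with the statement being quoted as an adaptation of the corresponding result of \cite{zhao2024variational}.
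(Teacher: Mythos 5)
Your proof is correct. Note that the paper itself does not prove this lemma---it imports it from \cite{zhao2024variational}---and your argument (reducing the supremum over the box to the $2^{m+n}$ vertices via convexity of $v\mapsto\sum_i(\sum_j a_{ij}v_j)_+$, applying the Poisson Chernoff/Bennett bounds to each $Z_{S,T}$, and taking a union bound) is exactly the standard route used there; the constant bookkeeping ($2C+\tfrac23\varepsilon\le 4\max(C,\varepsilon)$ and $2C\le 4\max(C,\varepsilon)$) checks out and yields the stated $2^{m+n+1}$ prefactor and exponent.
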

\begin{proof}[Proof of Theorem \ref{thm:concentration}]
Note that
\begin{align*}
&\quad\hat{J}(q^z, q^w, \Phi) - \bar{J}(q^z, q^w, \theta, \lambda) \\ &= \sum_{i=1}^m \sum_{j=1}^n \left(A_{ij} - E\left[ A_{ij} \mid z^{*}, w^{*} \right]\right) \left( \sum_{k=1}^K \sum_{l=1}^L q_{ik}^z q_{jl}^w \log (\theta_{i l} \lambda_{j k}) \right)   \\
& - \sum_{i=1}^m \sum_{k=1}^K q_{ik}^z \log (q_{ik}^z/ \pi_k) - \sum_{j=1}^n \sum_{l=1}^L q_{jl}^w \log (q_{jl}^w / \rho_l). 
\end{align*}
Therefore,
\begin{eqnarray}
&& \sup_{q^z \in \mathcal{C}_z,\, q^w \in \mathcal{C}_w,\, \pi \in \mathcal{C}_\pi,\, \rho \in \mathcal{C}_\rho,\, \theta \in \mathcal{C}_\theta,\, \lambda \in \mathcal{C}_\lambda} \left| \hat{J}(q^z, q^w, \Phi) - \bar{J}(q^z, q^w, \theta, \lambda) \right| \nonumber \\
&\leq& \sup_{q^{z} \in \mathcal{C}_{z}, q^{w} \in \mathcal{C}_{w}, \theta \in \mathcal{C}_{\theta}, \lambda \in \mathcal{C}_{\lambda} } \left|  \sum_{i=1}^m \sum_{j=1}^n \left(A_{ij} - E\left[ A_{ij} \mid z^{*}, w^{*} \right]\right) \left( \sum_{k=1}^K \sum_{l=1}^L q_{ik}^z q_{jl}^w \log (\theta_{i l} \lambda_{j k}) \right)  \right| \nonumber \\
&& + \sup_{q^{z} \in \mathcal{C}_{z}, \pi \in \mathcal{C}_{\pi} } \left| \sum_{i=1}^m \sum_{k=1}^K q_{ik}^z \log (q_{ik}^z/ \pi_k)  \right| 
+ \sup_{q^{w} \in \mathcal{C}_{w}, \rho \in \mathcal{C}_{\rho} } \left| \sum_{j=1}^n \sum_{l=1}^L q_{jl}^w \log (q_{jl}^w / \rho_l)  \right|.  \label{eqcon:mainobjectivefunction}
\end{eqnarray}
We first estimate the first term on the right-hand side of \eqref{eqcon:mainobjectivefunction}. It follows by a straightforward calculation that
\begin{eqnarray*}
&&\sum_{i=1}^m \sum_{j=1}^n \left(A_{ij} - E\left[ A_{ij} \mid z^{*}, w^{*} \right]\right) \left( \sum_{k=1}^K \sum_{l=1}^L q_{ik}^z q_{jl}^w \log (\theta_{i l} \lambda_{j k}) \right) \\
&=& \sum_{i=1}^m \sum_{j=1}^n \left(A_{ij} - E\left[ A_{ij} \mid z^{*}, w^{*} \right]\right) \left( \sum_{k=1}^K \sum_{l=1}^L q_{ik}^z q_{jl}^w \left(\log \theta_{i l} + \log \lambda_{j k}\right) \right) \\
&=& \sum_{i=1}^m \sum_{j=1}^n \left(A_{ij} - E\left[ A_{ij} \mid z^{*}, w^{*} \right]\right) \left(\sum_{k=1}^{K} q_{ik}^{z} \sum_{l=1}^{L} q_{jl}^{w} \log \theta_{il} + \sum_{k=1}^{K} q_{ik}^{z} \log \lambda_{jk} \sum_{l=1}^{L} q_{jl}^{w}\right) \\
&=& \sum_{i=1}^m \sum_{j=1}^n \left(A_{ij} - E\left[ A_{ij} \mid z^{*}, w^{*} \right]\right) \left(  \sum_{l=1}^{L} q_{jl}^{w} \log \theta_{il} + \sum_{k=1}^{K} q_{ik}^{z} \log \lambda_{jk} \right),
\end{eqnarray*}
where the last equality follows by the settings that $\sum_{k} q_{ik}^{z} =1$ and $\sum_{l} q_{jl}^{w} =1$. Furthermore, 
\begin{eqnarray*}
&& \sum_{i=1}^m \sum_{j=1}^n \left(A_{ij} - E\left[ A_{ij} \mid z^{*}, w^{*} \right]\right) \sum_{l=1}^{L} q_{jl}^{w} \log \theta_{il}  \\
&=& \sum_{i=1}^m \sum_{j=1}^n \left(A_{ij} - E\left[ A_{ij} \mid z^{*}, w^{*} \right]\right) \times \\
&&\qquad \sum_{l=1}^{L} q_{jl}^{w} \left(\frac{ \log(\theta_{il}/ \eta_{\mathrm{min}}) }{\log(\eta_{\mathrm{max}}/\eta_{\mathrm{min}})} + \frac{ \log( \eta_{\mathrm{min}}) }{\log(\eta_{\mathrm{max}}/\eta_{\mathrm{min}})}\right) \times \log(\eta_{\mathrm{max}}/\eta_{\mathrm{min}}) \\
&=& \log(\eta_{\mathrm{max}}/\eta_{\mathrm{min}}) \sum_{i=1}^m \sum_{j=1}^n \left(A_{ij} - E\left[ A_{ij} \mid z^{*}, w^{*} \right]\right)  \sum_{l=1}^{L} q_{jl}^{w}  \times \frac{ \log(\theta_{il}/ \eta_{\mathrm{min}}) }{\log(\eta_{\mathrm{max}}/\eta_{\mathrm{min}})} \\
&& + \log(\eta_{\mathrm{min}}) \sum_{i=1}^m \sum_{j=1}^n \left(A_{ij} - E\left[ A_{ij} \mid z^{*}, w^{*} \right]\right)  \sum_{l=1}^{L} q_{jl}^{w} \\
&=& \log(\eta_{\mathrm{max}}/\eta_{\mathrm{min}}) \sum_{l=1}^{L} \sum_{i=1}^m \sum_{j=1}^n \left(A_{ij} - E\left[ A_{ij} \mid z^{*}, w^{*} \right]\right)   q_{jl}^{w}  \times \frac{ \log(\theta_{il}/ \eta_{\mathrm{min}}) }{\log(\eta_{\mathrm{max}}/\eta_{\mathrm{min}})} \\
&& + \log(\eta_{\mathrm{min}}) \sum_{i=1}^m \sum_{j=1}^n \left(A_{ij} - E\left[ A_{ij} \mid z^{*}, w^{*} \right]\right),
\end{eqnarray*}
where the last equality again uses the fact that $\sum_{l} q_{jl}^{w} = 1$. Similarly,
\begin{eqnarray*}
&& \sum_{i=1}^m \sum_{j=1}^n \left(A_{ij} - E\left[ A_{ij} \mid z^{*}, w^{*} \right]\right) \sum_{k=1}^{K} q_{ik}^{z} \log \lambda_{jk} \\
&=& \log( \eta_{\mathrm{max}} / \eta_{\mathrm{min}} ) \sum_{k=1}^{K} \sum_{i=1}^m \sum_{j=1}^n \left(A_{ij} - E\left[ A_{ij} \mid z^{*}, w^{*} \right]\right)   q_{ik}^{z}  \times \frac{ \log(\lambda_{jk}/ \eta_{\mathrm{min}}) }{\log(\eta_{\mathrm{max}}/\eta_{\mathrm{min}})} \\ 
&&+ \log(\eta_{\mathrm{min}}) \sum_{i=1}^m \sum_{j=1}^n \left(A_{ij} - E\left[ A_{ij} \mid z^{*}, w^{*} \right]\right).
\end{eqnarray*}
Combining the last three displays yields
\begin{eqnarray*}
&& \sum_{i=1}^m \sum_{j=1}^n \left(A_{ij} - E\left[ A_{ij} \mid z^{*}, w^{*} \right]\right) \left( \sum_{k=1}^K \sum_{l=1}^L q_{ik}^z q_{jl}^w \log (\theta_{i l} \lambda_{j k}) \right) \\
&=& \log(\eta_{\mathrm{max}}/\eta_{\mathrm{min}}) \sum_{l=1}^{L} \sum_{i=1}^m \sum_{j=1}^n \left(A_{ij} - E\left[ A_{ij} \mid z^{*}, w^{*} \right]\right)   q_{jl}^{w}  \times \frac{ \log(\theta_{il}/ \eta_{\mathrm{min}}) }{\log(\eta_{\mathrm{max}}/\eta_{\mathrm{min}})} \\
&& + \log( \eta_{\mathrm{max}} / \eta_{\mathrm{min}} ) \sum_{k=1}^{K} \sum_{i=1}^m \sum_{j=1}^n \left(A_{ij} - E\left[ A_{ij} \mid z^{*}, w^{*} \right]\right)   q_{ik}^{z}  \times \frac{ \log(\lambda_{jk}/ \eta_{\mathrm{min}}) }{\log(\eta_{\mathrm{max}}/\eta_{\mathrm{min}})} \\ 
&& + \left(\log(\eta_{\mathrm{min}}) +  \log(\eta_{\mathrm{min}}) \right) \sum_{i=1}^m \sum_{j=1}^n \left(A_{ij} - E\left[ A_{ij} \mid z^{*}, w^{*} \right]\right).
\end{eqnarray*}
Note that for $q^{w} \in \mathcal{C}_{w}$ and $\theta \in \mathcal{C}_{\theta}$, we have $q_{jl}^{w} \in [0,1]$ for all $j,l$ \newline and $\log(\theta_{il}/ \eta_{\mathrm{min}}) / \log ( \eta_{\mathrm{max}} / \eta_{\mathrm{min}} ) \in [0,1]$ for all $i,l$. Therefore, for $l =1, \dots, L$
\begin{eqnarray*}
&& \sup_{q^{w} \in \mathcal{C}_{w},  \theta \in \mathcal{C}_{\theta} } \left|  \sum_{i=1}^m \sum_{j=1}^n \left(A_{ij} - E\left[ A_{ij} \mid z^{*}, w^{*} \right]\right)    q_{jl}^{w}  \times \frac{ \log(\theta_{il}/ \eta_{\mathrm{min}}) }{\log(\eta_{\mathrm{max}}/\eta_{\mathrm{min}})} \right|  \\
&\leq& \max_{ \substack{ 0 \leq u_i \leq 1, i=1, \dots, m \\ 0 \leq v_j \leq 1, j=1, \dots, n} } \left|  \sum_{i=1}^{m} \sum_{j=1}^{n} \left(A_{ij} - E\left[ A_{ij} \mid z^{*}, w^{*} \right]\right) u_i v_j \right|.
\end{eqnarray*}
Similarly, it follows
\begin{eqnarray*}
&& \sup_{q^{z} \in \mathcal{C}_{z},  \lambda \in \mathcal{C}_{\lambda} } \left|  \sum_{i=1}^m \sum_{j=1}^n \left(A_{ij} - E\left[ A_{ij} \mid z^{*}, w^{*} \right]\right)    q_{ik}^{z}  \times \frac{ \log(\lambda_{jk}/ \eta_{\mathrm{min}}) }{\log(\eta_{\mathrm{max}}/\eta_{\mathrm{min}})} \right|  \\
&\leq& \max_{ \substack{ 0 \leq u_i \leq 1, i=1, \dots, m \\ 0 \leq v_j \leq 1, j=1, \dots, n} } \left|  \sum_{i=1}^{m} \sum_{j=1}^{n} \left(A_{ij} - E\left[ A_{ij} \mid z^{*}, w^{*} \right]\right) u_i v_j \right|.
\end{eqnarray*}
Furthermore, it is straightforward that
\begin{equation*}
\sum_{i=1}^m \sum_{j=1}^n \left(A_{ij} - E\left[ A_{ij} \mid z^{*}, w^{*} \right]\right) \leq \max_{ \substack{ 0 \leq u_i \leq 1, i=1, \dots, m \\ 0 \leq v_j \leq 1, j=1, \dots, n} } \left|  \sum_{i=1}^{m} \sum_{j=1}^{n} \left(A_{ij} - E\left[ A_{ij} \mid z^{*}, w^{*} \right]\right) u_i v_j \right|.
\end{equation*}
Combining the last four displays, it follows after a rearrangement that
\begin{eqnarray*}
&& \sup_{q^{z} \in \mathcal{C}_{z}, q^{w} \in \mathcal{C}_{w}, \theta \in \mathcal{C}_{\theta}, \lambda \in \mathcal{C}_{\lambda} } \left|  \sum_{i=1}^m \sum_{j=1}^n \left(A_{ij} - E\left[ A_{ij} \mid z^{*}, w^{*} \right]\right) \left( \sum_{k=1}^K \sum_{l=1}^L q_{ik}^z q_{jl}^w \log (\theta_{i l} \lambda_{j k}) \right)  \right| \\
&\leq&  C \max_{ \substack{ 0 \leq u_i \leq 1, i=1, \dots, m \\ 0 \leq v_j \leq 1, j=1, \dots, n} } \left|  \sum_{i=1}^{m} \sum_{j=1}^{n} \left(A_{ij} - E\left[ A_{ij} \mid z^{*}, w^{*} \right]\right) u_i v_j \right|,
\end{eqnarray*}
where
\begin{equation*}
 C :=  \left | \log(\eta_{\mathrm{max}}^{K+L}/\eta_{\mathrm{min}}^{K+L-2}) \right |.
\end{equation*}
Furthermore, given $z^{*}, w^{*}$, $\{A_{ij}\}$ is a sequence of Poisson variables with the conditional mean $E[A_{ij} \mid z^{*}, w^{*} ] = \theta_{i w^{*}_{j}}^{*}  \lambda_{j z^{*}_{i}}^{*} \leq \eta_{\mathrm{max}}^2$. Applying Lemma \ref{lmcon:boundsumofPoisson}, it follows after a rearrangement that
\begin{eqnarray*}
&& \mathbb{P}\left(   C  \max_{ \substack{ 0 \leq u_i \leq 1, i=1, \dots, m \\ 0 \leq v_j \leq 1, j=1, \dots, n} } \left|  \sum_{i=1}^{m} \sum_{j=1}^{n} \left(A_{ij} - E\left[ A_{ij} \mid z^{*}, w^{*} \right]\right) u_i v_j \right| \geq  \eta_{\max}^2 mn\varepsilon/2  \, \Bigg| \, z^{*}, w^{*} \right) \\
&\leq& 2^{m+n+1} \exp\left( - \frac{ mn \eta_{\max}^2 (\varepsilon/2C)^2 }{ 4\max\left( 1  , \varepsilon/(2C) \right) } \right) = 2^{m+n+1} \exp\left( - \frac{ mn \eta_{\max}^2 \varepsilon^2 }{ 16 C^2  } \right) ,
\end{eqnarray*}
and thus,
\begin{align}
& \mathbb{P} \Bigg(\sup_{q^{z} \in \mathcal{C}_{z}, q^{w} \in \mathcal{C}_{w}, \theta \in \mathcal{C}_{\theta}, \lambda \in \mathcal{C}_{\lambda} } \left|  \sum_{i=1}^m \sum_{j=1}^n \left(A_{ij} - E\left[ A_{ij} \mid z^{*}, w^{*} \right]\right) \left( \sum_{k=1}^K \sum_{l=1}^L q_{ik}^z q_{jl}^w \log (\theta_{i l} \lambda_{j k}) \right)  \right| \nonumber \\
&\qquad \geq mn\varepsilon/2  \, \Bigg| \, z^{*}, w^{*}  \Bigg) \leq 2^{m+n+1} \exp\left( - \frac{ mn \eta_{\max}^2 \varepsilon^2 }{ 16 C^2  } \right).\label{eqcon:firstbound1}
\end{align}
We now turn to the second term on the right-hand side of \eqref{eqcon:mainobjectivefunction}. By a standard result of Kullback–Leibler divergence, we obtain for all $ q^{z} \in \mathcal{C}_z $ and $\pi \in \mathcal{C}_{\pi}$
\begin{equation*}
\sum_{i=1}^m \sum_{k=1}^K q_{ik}^z \log (q_{ik}^z/ \pi_k) \geq 0.
\end{equation*}
Furthermore, applying Jensen's inequality, it follows that for $ q^{z} \in \mathcal{C}_z $ and $\pi \in \mathcal{C}_{\pi}$
\begin{align*}
\sum_{i=1}^m \sum_{k=1}^K q_{ik}^z \log (q_{ik}^z/ \pi_k) &\leq \sum_{i=1}^m \log\left( \sum_{k=1}^K q_{ik}^z \times q_{ik}^z/ \pi_k \right)
\leq  \sum_{i=1}^m \log\left( \sum_{k=1}^K \left(q_{ik}^z\right)^2 / \pi_{\mathrm{min}} \right) \\
& \leq \sum_{i=1}^m \log\left( \frac{1}{ \pi_{\mathrm{min}}}  \left( \sum_{k=1}^K q_{ik}^z\right)^2   \right) = \sum_{i=1}^m \log  \left(\pi_{\mathrm{min}}^{-1}\right) = m \log \left( \pi_{\mathrm{min}}^{-1}\right),
\end{align*}
where in the last second equality, we invoked the fact that $\sum_{k} q_{ik}^z =1$. Combining the preceding two results, we have
\begin{equation*}
\sup_{q^{z} \in \mathcal{C}_{z}, \pi \in \mathcal{C}_{\pi} } \left| \sum_{i=1}^m \sum_{k=1}^K q_{ik}^z \log (q_{ik}^z/ \pi_k)  \right|  \leq m \log  \left(\pi_{\mathrm{min}}^{-1}\right).
\end{equation*}
Then, it follows directly that
\begin{equation}
\mathbb{P}\left(  \sup_{q^{z} \in \mathcal{C}_{z}, \pi \in \mathcal{C}_{\pi} } \left| \sum_{i=1}^m \sum_{k=1}^K q_{ik}^z \log (q_{ik}^z/ \pi_k)  \right|  \geq mn\varepsilon/4 \, \Bigg| \, z^{*}, w^{*} \right) \leq  \mathds{1}_{ \left\{ n : n \varepsilon \leq 4 \log  \left(\pi_{\mathrm{min}}^{-1}\right)  \right\} }. \label{eqcon:firstbound2}
\end{equation}
Similarly, we have
\begin{equation}
\mathbb{P}\left(   \sup_{q^{w} \in \mathcal{C}_{w}, \rho \in \mathcal{C}_{\rho} } \left| \sum_{j=1}^n \sum_{l=1}^L q_{jl}^w \log (q_{jl}^w / \rho_l)  \right| \geq mn\varepsilon/4 \, \Bigg| \, z^{*}, w^{*} \right) \leq \mathds{1}_{ \left\{ m : m \varepsilon \leq 4 \log  \left(\rho_{\mathrm{min}}^{-1}\right)  \right\} }. \label{eqcon:firstbound3}
\end{equation}

By combining \eqref{eqcon:mainobjectivefunction} with \eqref{eqcon:firstbound1}, \eqref{eqcon:firstbound2}, and \eqref{eqcon:firstbound3}, and noting that $C \asymp |\log \tilde{\theta}|$, the desired result follows.
\end{proof}
\begin{proof}[Proof of Proposition \ref{prop:maximizer}]
\begin{align*}
\bar{J}_1(q^z, q^w, \theta, \lambda) &= \sum_{i=1}^{m} \sum_{j=1}^{n} \theta_{iw_j^*}^* \lambda_{jz_i^*}^* \left( \sum_{k'=1}^{K} \sum_{l'=1}^{L} q_{ik'}^z q_{jl'}^w \log (\theta_{i l'} \lambda_{j k'}) \right) \\
&= \sum_{i=1}^{m} \sum_{j=1}^{n} \left( \sum_{k=1}^{K} \sum_{l=1}^{L} 1_{i k}^{z^*} 1_{j l}^{w^*} \theta_{i l}^* \lambda_{j k}^* \right) \left( \sum_{k'=1}^{K} \sum_{l'=1}^{L} q_{ik'}^z q_{jl'}^w \log (\theta_{i l'} \lambda_{j k'}) \right) \\
&= \sum_{k=1}^{K} \sum_{l=1}^{L} \sum_{k'=1}^{K} \sum_{l'=1}^{L} \sum_{i=1}^{m} \sum_{j=1}^{n} 1_{i k}^{z^*} 1_{j l}^{w^*} q_{ik'}^z q_{jl'}^w \theta_{i l}^* \lambda_{j k}^* \log (\theta_{i l'} \lambda_{j k'}).
\end{align*}
\begin{align*}
\bar{J}_1 ( 1^{z^*}, 1^{w^*}, \theta^*, \lambda^*) &= \sum_{i=1}^{m} \sum_{j=1}^{n} \theta_{iw_j^*}^* \lambda_{jz_i^*}^* \left( \sum_{k=1}^{K} \sum_{l=1}^{L} 1_{ik}^{z^*} 1_{jl}^{w^*} \log \left( \theta_{il}^* \lambda_{jk}^* \right) \right)  \left( \sum_{k'=1}^{K} \sum_{l'=1}^{L} q_{ik'}^{z} q_{jl'}^{w} \right) \\
&= \sum_{i=1}^{m} \sum_{j=1}^{n} \left( \sum_{k=1}^{K} \sum_{l=1}^{L} 1_{ik}^{z^*} 1_{jl}^{w^*} \theta_{il}^* \lambda_{jk}^* \log \left( \theta_{il}^* \lambda_{jk}^* \right) \right) \left( \sum_{k'=1}^{K} \sum_{l'=1}^{L} q_{ik'}^{z} q_{jl'}^{w} \right) \\
&= \sum_{k=1}^{K} \sum_{l=1}^{L} \sum_{k'=1}^{K} \sum_{l'=1}^{L} \sum_{i=1}^{m} \sum_{j=1}^{n} 1_{ik}^{z^*} 1_{jl}^{w^*} q_{ik'}^{z} q_{jl'}^{w} \theta_{il}^* \lambda_{jk}^* \log \left( \theta_{il}^* \lambda_{jk}^* \right) .
\end{align*}
\begin{align*}
\bar{J}_2 ( q^z, q^w, \theta, \lambda) &= - \sum_{i=1}^{m} \sum_{j=1}^{n} \left( \sum_{k'=1}^{K} \sum_{l'=1}^{L} q^z_{ik'} q^w_{jl'} \theta_{il'} \lambda_{jk'} \right) \\
&= - \sum_{i=1}^{m} \sum_{j=1}^{n} \left( \sum_{k=1}^{K} \sum_{l=1}^{L} 1^{z^*}_{ik} 1^{w^*}_{jl} \right) \left( \sum_{k'=1}^{K} \sum_{l'=1}^{L} q^z_{ik'} q^w_{jl'} \theta_{il'} \lambda_{jk'} \right) \\
&= - \sum_{k=1}^{K} \sum_{l=1}^{L} \sum_{k'=1}^{K} \sum_{l'=1}^{L} \sum_{i=1}^{m} \sum_{j=1}^{n} 1^{z^*}_{ik} 1^{w^*}_{jl} q^z_{ik'} q^w_{jl'} \theta_{il'} \lambda_{jk'} .
\end{align*}
\begin{align*}
\bar{J}_2 ( 1^{{z}^*}, 1^{{w}^*}, \theta^*, \lambda^*) &= - \sum_{i=1}^{m} \sum_{j=1}^{n} \left( \sum_{k=1}^{K} \sum_{l=1}^{L} 1_{ik}^{z^*} 1_{jl}^{w^*} \theta_{il}^* \lambda^*_{jk} \right) \left( \sum_{k'=1}^{K} \sum_{l'=1}^{L} q^{z}_{ik'} q^{w}_{jl'} \right) \\
&= - \sum_{k=1}^{K} \sum_{\ell=1}^{L} \sum_{k'=1}^{K} \sum_{\ell'=1}^{L} \sum_{i=1}^{m} \sum_{j=1}^{n} 1_{ik}^{z^*} 1_{jl}^{w^*} q^{z}_{ik'} q^{w}_{jl'} \theta^*_{il} \lambda^*_{jk}.
\end{align*}
Putting the last four equations together, we have
\begin{align}
&\bar{J} \left( 1^{z^*}, 1^{w^*}, \theta^*, \lambda^* \right) - \bar{J} \left( q^z, q^w, \theta, \lambda \right)  \nonumber \\
&= \left[ \sum_{k=1}^K \sum_{l=1}^L \sum_{k'=1}^K \sum_{l'=1}^L \sum_{i=1}^m \sum_{j=1}^n 1_{ik}^{z^*} 1_{jl}^{w^*} q_{ik'}^z q_{jl'}^w \theta_{il}^* \lambda_{jk}^* \left( \log \left( \theta_{il}^* \lambda_{jk}^* \right) - \log \left( \theta_{il'} \lambda_{jk'} \right) \right) \right] \nonumber \\
&\qquad + \left[ \sum_{k=1}^K \sum_{l=1}^L \sum_{k'=1}^K \sum_{l'=1}^L \sum_{i=1}^m \sum_{j=1}^n 1_{ik}^{z^*} 1_{jl}^{w^*} q_{il'}^z q_{jl'}^w \left( -\theta_{il}^* \lambda_{jk}^* + \theta_{il'} \lambda_{jk'} \right) \right]  \nonumber \\
&= \sum_{k=1}^K \sum_{l=1}^L \sum_{k'=1}^K \sum_{l'=1}^L \sum_{i=1}^m \sum_{j=1}^n 1_{ik}^{z^*} 1_{jl}^{w^*} q_{ik'}^z q_{jl'}^w \\
&\qquad \left[ \theta_{il}^* \lambda_{jk}^* \left( \log \left( \theta_{il}^* \lambda_{jk}^* \right) - \log \left( \theta_{il'} \lambda_{jk'} \right) \right) - \left( \theta_{il}^* \lambda_{jk}^* - \theta_{il'} \lambda_{jk'} \right) \right] \nonumber \\
&= \sum_{k=1}^K \sum_{l=1}^L \sum_{k'=1}^K \sum_{l'=1}^L \sum_{i=1}^m \sum_{j=1}^n 1_{ik}^{z^*} 1_{jl}^{w^*} q_{ik'}^z q_{jl'}^w \text{ KL}(\theta_{il}^* \lambda_{jk}^*, \theta_{il'} \lambda_{jk'}) \ge 0,  \label{eqm:formulafordifferenceJ}
\end{align}
Where $\mathrm{KL}(a,b)$ is defines as $ a \log (a/b) - (a -b) $, for all $a, b >0$. It follows by Lemma 16 in \cite{zhao2024variational} that $\mathrm{KL}(a,b) \geq 0$. Thus, we conclude the proof.

\end{proof}

The proof of Theorem \ref{thm:identification} relies on the following key lemma, which allows us to focus solely on the case where $q^{z} \in \mathcal{I}_{z}$ and $q^{w} \in \mathcal{I}_{w}$.
\begin{lemma}  \label{lm:restrict}
Under Condition \ref{cond:c1}, the following two statements are equivalent:
\begin{enumerate}[label=(\roman*)]
\item For all $q^{z} \in \mathcal{C}_{z}$ and $q^{w} \in \mathcal{C}_{w}$ such that $M_{\mathrm{row}} ( q^{z}) >0 $ or $ M_{ \mathrm{col} } (q^{w}) >0 $, we have $\bar{J}(1^{z^*}, 1^{w^*}, \theta^*, \lambda^*) - \bar{J}(q^z, q^w, \theta, \lambda) > 0$.
\item For all $1^{z} \in \mathcal{I}_{z}$ and $1^{w} \in \mathcal{I}_{w}$ such that $M_{\mathrm{row}} ( 1^{z}) >0 $ or $ M_{ \mathrm{col} } (1^{w}) >0 $, we have $\bar{J}(1^{z^*}, 1^{w^*}, \theta^*, \lambda^*) - \bar{J}(1^z, 1^w, \theta, \lambda) > 0$.
\end{enumerate}
\end{lemma}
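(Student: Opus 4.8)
The plan is to prove the two implications separately. The direction (i) $\Rightarrow$ (ii) is immediate: since $\mathcal{I}_z \subseteq \mathcal{C}_z$ and $\mathcal{I}_w \subseteq \mathcal{C}_w$, any hard pair $1^z \in \mathcal{I}_z$, $1^w \in \mathcal{I}_w$ with $M_{\mathrm{row}}(1^z)>0$ or $M_{\mathrm{col}}(1^w)>0$ is a special case of the pairs $(q^z,q^w)$ covered by (i) (with the same range of $\theta,\lambda$). So the substance is in (ii) $\Rightarrow$ (i). The starting point is the identity established in the proof of Proposition \ref{prop:maximizer}: collapsing the indicator sums in \eqref{eqm:formulafordifferenceJ} gives
\[
\bar{J}(1^{z^*},1^{w^*},\theta^*,\lambda^*) - \bar{J}(q^z,q^w,\theta,\lambda) = \sum_{i=1}^m \sum_{j=1}^n \sum_{k'=1}^K \sum_{l'=1}^L q_{ik'}^z\, q_{jl'}^w\, \mathrm{KL}\!\left(\theta^*_{i w^*_j}\lambda^*_{j z^*_i},\, \theta_{il'}\lambda_{jk'}\right),
\]
a sum of nonnegative terms. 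I would assume (ii) and suppose, toward a contradiction, that there exist $q^z \in \mathcal{C}_z$, $q^w \in \mathcal{C}_w$ with $M_{\mathrm{row}}(q^z)>0$ or $M_{\mathrm{col}}(q^w)>0$ for which this difference vanishes.

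Since each summand is nonnegative, every summand must be $0$; hence $\theta_{il'}\lambda_{jk'} = \theta^*_{i w^*_j}\lambda^*_{j z^*_i}$ for all $i,j$ and all $k',l'$ with $q_{ik'}^z>0$ and $q_{jl'}^w>0$. Because each row of $q^z$ (resp. $q^w$) has a positive entry, the sets $\mathcal{I}_z(q^z)$ and $\mathcal{I}_w(q^w)$ are nonempty; and for any compatible pair $1^z \in \mathcal{I}_z(q^z)$, $1^w \in \mathcal{I}_w(q^w)$ the preceding identity yields $\theta_{i w_j}\lambda_{j z_i} = \theta^*_{i w^*_j}\lambda^*_{j z^*_i}$ for all $i,j$. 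Substituting these hard labels back into \eqref{eqm:formulafordifferenceJ} shows $\bar{J}(1^{z^*},1^{w^*},\theta^*,\lambda^*) - \bar{J}(1^z,1^w,\theta,\lambda)=0$ for every such pair. By the contrapositive of (ii), this forces $M_{\mathrm{row}}(1^z)=0$ \emph{and} $M_{\mathrm{col}}(1^w)=0$ for every $1^z \in \mathcal{I}_z(q^z)$ and every $1^w \in \mathcal{I}_w(q^w)$.

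To contradict this it suffices to produce a single compatible pair with $M_{\mathrm{row}}(1^z)>0$ or $M_{\mathrm{col}}(1^w)>0$; by symmetry suppose $M_{\mathrm{row}}(q^z)>0$. If $q^z$ is already a $0/1$ matrix, take $1^z=q^z$. Otherwise some row $i_0$ of $q^z$ has positive entries in two distinct columns $a\neq b$; pick any $1^z\in\mathcal{I}_z(q^z)$ with $z_{i_0}=a$ and let $1^{z'}\in\mathcal{I}_z(q^z)$ be the labeling agreeing with $1^z$ everywhere except $z'_{i_0}=b$. Their block partitions of $\{1,\dots,m\}$ differ only in that $i_0$ is moved from one block to another. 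I claim $1^z$ and $1^{z'}$ cannot both be permutations of the true partition $\{\mathcal{A}_1^*,\dots,\mathcal{A}_K^*\}$: if they were, then — since the other $K-2$ blocks are common to both collections — one would need $\{\mathcal{A}_a\setminus\{i_0\},\,\mathcal{A}_b\cup\{i_0\}\} = \{\mathcal{A}_a,\mathcal{A}_b\}$ as collections of sets, where $\mathcal{A}_a\ni i_0$ and $\mathcal{A}_b\not\ni i_0$ are the two blocks involved; since $\mathcal{A}_a\setminus\{i_0\}\neq\mathcal{A}_a$, this forces $\mathcal{A}_a=\mathcal{A}_b\cup\{i_0\}$, and disjointness of blocks then gives $\mathcal{A}_b=\varnothing$, contradicting the assumption that all true communities are nonempty in Condition \ref{cond:c1}. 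Hence at least one of $1^z,1^{z'}$ has positive row misclustering; pairing it with any $1^w\in\mathcal{I}_w(q^w)$ yields the desired contradiction, so (ii) $\Rightarrow$ (i) follows. The case $M_{\mathrm{col}}(q^w)>0$ is handled identically, using the nonemptiness of the $\mathcal{B}_l^*$ in Condition \ref{cond:c1}.

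The algebraic passages through \eqref{eqm:formulafordifferenceJ} are routine; the real obstacle is the last step — verifying that perturbing a single coordinate of a compatible hard labeling necessarily produces a genuine misclassification in at least one of the two resulting labelings. This is precisely where the non-degeneracy part of Condition \ref{cond:c1} (every true community nonempty) enters, ruling out the pathological configuration in which the two labelings differ merely by swapping the roles of two communities. Note that only this non-degeneracy clause of Condition \ref{cond:c1} is needed for the lemma; the quantitative separation clause plays no role here.
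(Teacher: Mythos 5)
Your proof is correct, and it shares the combinatorial heart of the paper's argument while routing the analytic "transfer" step differently. Both proofs reduce (ii) $\Rightarrow$ (i) to the same key fact — that whenever $M_{\mathrm{row}}(q^z)>0$ one can exhibit a hard labeling compatible with the support of $q^z$ that also has positive misclustering — and both establish it by perturbing a single coordinate of a compatible labeling and invoking the nonemptiness of the true communities $\mathcal{A}^*_k$ from Condition \ref{cond:c1} (your multiset cancellation forcing $\mathcal{A}_b=\varnothing$ is the same counting fact the paper derives via $M_{\mathrm{row}}(1^{\tilde z})\ge 1/m$). Where you diverge is in how the sign of the gap is moved between soft and hard labelings. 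The paper works in the direct direction: it sets $v_z=\inf\{q^z_{ik}:1^{\tilde z}_{ik}=1\}>0$, uses $q^z_{ik}\ge v_z 1^{\tilde z}_{ik}$ in the KL representation \eqref{eqm:formulafordifferenceJ}, and concludes $\bar J(1^{z^*},1^{w^*},\theta^*,\lambda^*)-\bar J(q^z,q^w,\theta,\lambda)\ge v_z v_w\bigl(\bar J(1^{z^*},1^{w^*},\theta^*,\lambda^*)-\bar J(1^{\tilde z},1^{\tilde w},\theta,\lambda)\bigr)>0$. You instead argue contrapositively: if the gap at $(q^z,q^w)$ vanishes, then since it is a sum of nonnegative KL terms, every term with positive weight vanishes, hence the gap vanishes at \emph{every} compatible hard pair, which by (ii) forces all such pairs to have zero misclustering — contradicting the existence of a bad compatible labeling. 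Your version buys a slightly leaner argument (no $v_z,v_w$ needed, and your Case 2 does not even use $M_{\mathrm{row}}(q^z)>0$, only that $q^z$ is not $0/1$); it relies on Proposition \ref{prop:maximizer} (nonnegativity of the gap) to identify the negation of (i) with exact vanishing, and on the fact that $\mathrm{KL}(a,b)=0$ iff $a=b$. The paper's multiplicative bound is more robust in that it yields a quantitative inheritance of the gap, which is in the spirit of the later well-separatedness argument, but for the purposes of Lemma \ref{lm:restrict} the two routes are equally valid. Your closing observation — that only the nonemptiness clause of Condition \ref{cond:c1} is used — matches the paper's proof as well.
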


\begin{proof}
It is obvious that (i) implies (ii). Thus, it suffices to prove that (ii) implies (i).

For any $q^{z} \in \mathcal{C}_{z}$ and $q^{w} \in \mathcal{C}_{w}$ such that $M_{\mathrm{row}} ( q^{z}) >0 $ or $ M_{ \mathrm{col} } (q^{w}) >0 $. Our goal is to show that
\begin{equation*}
\bar{J}(1^{z^*}, 1^{w^*}, \theta^*, \lambda^*) - \bar{J}(q^z, q^w, \theta, \lambda) > 0.
\end{equation*}
Due to the symmetry of the argument, it suffices to prove the inequality in the case where $ M_{\mathrm{row}} ( q^{z}) >0 $.
We begin by showing that there exists $ 1^{ \tilde{z} } \in \mathcal{I}_{z}(q^{z}) $ such that $ M_{ \mathrm{row} } ( 1^{ \tilde{z} } ) >0 $. Assume this does not hold, then for all $ 1^{z} \in \mathcal{I}_{z}(q^{z}) $, $ M_{ \mathrm{row} } ( 1^{ z } ) =0 $. Given any $ 1^{ z' } \in \mathcal{I}_{z}(q^{z}) $
, we claim there exist $ \tilde{i}, \tilde{k} $ such that
\begin{equation*}
q^{z}_{ \tilde{i} \tilde{k} } >0 \quad \text{but} \quad 1^{z'}_{  \tilde{i} \tilde{k} } =0.
\end{equation*}
Otherwise, it follows by the definition of $\mathcal{I}_{z}(q^{z})$ that $q^{z}_{ik} =0$ for all $i,k$ satisfying $ 1^{z'}_{ik}=0 $. As a result, $ q^{z} = 1^{z'} $. Noting that $ 1^{ z' } \in \mathcal{I}_{z}(q^{z}) $, 
\begin{equation*}
M_{ \mathrm{row} } ( q^{z} ) = M_{ \mathrm{row} } ( 1^{z'} ) =0.
\end{equation*}
This contradicts the setting that $ M_{ \mathrm{row} } ( q^{z} ) >0 $. Therefore, there exist $ \tilde{i}, \tilde{k} $ such that $q^{z}_{ \tilde{i} \tilde{k} } >0 $ but $1^{z'}_{  \tilde{i} \tilde{k} } =0$.

Then, we define the row label assignment $\tilde{z}$ as follows. Let
\begin{equation*}
\tilde{z}(i) =
\begin{cases}
z'(i), & \text{if $i \neq \tilde{i}$}  \\
\tilde{k}, & \text{if $i = \tilde{i}$}
\end{cases}.
\end{equation*}
Since $1^{z'}_{  \tilde{i} \tilde{k} } =0$, then $ z'( \tilde{i} ) \neq \tilde{k} = \tilde{z}( \tilde{i} ) $. We define $k' = z'( \tilde{i} )$, then $ k' \neq \tilde{k} $. Furthermore, let $ 1^{\tilde{z}} $ be the matrix defined as $ 1^{\tilde{z}}_{ik} = 1_{ \{  \tilde{z}(i) =k \}} $. It is easily verified that $1^{\tilde{z}} \in \mathcal{I}_{z}( q^{z} )$. Define for $k=1, \dots, K$,
\begin{align*}
  \mathcal{A}_{k}' = \{ 1 \leq i \leq m: z'(i) =k \}, \ \text{and} \ 
  \widetilde{\mathcal{A} }_{k} =  \{ 1 \leq i \leq m: \tilde{z} (i) =k \}.
\end{align*}
Noting that $M_{ \mathrm{row} } ( 1^{z'} ) =0  $ and applying Equation \eqref{eqid:anotherformofMrowcol} in the main text, there exists a permutation $s \in S_{K}$ such that
\begin{equation*}
 \frac{1}{m} \sum_{k=1}^K \left| \mathcal{A}_{s(k)}^{*} \cap \mathcal{A}_{k}' \right| = 1 = \frac{1}{m} \sum_{k=1}^K \left| \mathcal{A}_{s(k)}^{*}  \right|.
\end{equation*}
This implies that $\mathcal{A}_{s(k)}^{*} = \mathcal{A}_{k}'  $ for all $k = 1,\dots, K$. Without loss of generality, we may assume $s$ is the identity mapping, and thus, $ \mathcal{A}_{k}^{*} = \mathcal{A}_{k}' $. It follows by the definition of $1^{\tilde{z}}$ that
\begin{equation*}
\widetilde{\mathcal{A} }_{k} = 
\begin{cases}
\mathcal{A}_{ \tilde{k} }^{*} \cup \{ \tilde{i} \}, & \text{if $k = \tilde{k}$} \\
\mathcal{A}_{ k' }^{*} \setminus \{ \tilde{i} \}, & \text{if $k = k'$}  \\
\mathcal{A}_{ k }^{*}, & \text{if $k \neq \tilde{k}$ and $k \neq k'$ }
\end{cases}.
\end{equation*}
Note that $k' = z'( \tilde{i} ) \neq \tilde{k}$. By a direct calculation, we have
\begin{equation}
\left|  \mathcal{A}_{ k }^{*} \cap  \widetilde{\mathcal{A} }_{k'} \right| =
\begin{cases}
0, & \text{if $k \neq k'$} \\
|  \mathcal{A}_{ k' }^{*} | -1, & \text{if $k = k'$}   
\end{cases}.   \label{eqid:cardofA}
\end{equation}
Consider any permutation $s \in S_K$. By Condition \ref{cond:c1}, we have $ |  \mathcal{A}_{ s(k') }^{*} | >0$, and thus, $ |  \mathcal{A}_{ s(k') }^{*} | -1 \geq 0$. Then, it follows from \eqref{eqid:cardofA} that
\begin{equation*}
\left|  \mathcal{A}_{ s(k') }^{*} \cap  \widetilde{\mathcal{A} }_{k'} \right| \leq  \left|  \mathcal{A}_{ s(k') }^{*} \right| -1.
\end{equation*}
Together with \eqref{eqid:anotherformfoR}, we have
\begin{align*}
1 - \sum_{k=1}^{K} \mathbb{R}_{ s(k), k } \left( 1^{z^{*}}, 1^{ \tilde{z} } \right) &= 1 - \frac{1}{m} \sum_{k=1}^{K} \left|  \mathcal{A}_{ s(k) }^{*} \cap  \widetilde{\mathcal{A} }_{k} \right|  \\
& = 1- \frac{1}{m} \sum_{k \neq k'} \left|  \mathcal{A}_{ s(k) }^{*} \cap  \widetilde{\mathcal{A} }_{k} \right| - \frac{1}{m} \left|  \mathcal{A}_{ s(k') }^{*} \cap  \widetilde{\mathcal{A} }_{k'} \right| \\
& \geq 1- \frac{1}{m} \sum_{k \neq k'} \left|  \mathcal{A}_{ s(k) }^{*}  \right| - \frac{1}{m}\left(  \left|  \mathcal{A}_{ s(k') }^{*} \right| -1 \right) \\
& = 1 - \frac{1}{m} \sum_{k =1}^{K} \left|  \mathcal{A}_{ s(k) }^{*}  \right| + \frac{1}{m} \\
&= \frac{1}{m}.
\end{align*}
Since the permutation $s$ is arbitrary, we have
\begin{equation*}
M_{ \mathrm{row} } (1^{\tilde{z}}) \geq \frac{1}{m}.
\end{equation*} 
This contradicts with the assumption that $ M_{ \mathrm{row} } (1^{z}) =0 $ for all $ 1^{z} \in \mathcal{I}_{z} (q^z) $. Therefore, there exists $1^{\tilde{z}} \in \mathcal{I}_{z}(q^z)$ such that $M_{ \mathrm{row} } (1^{\tilde{z}}) > 0$.

Choose any $ 1^{\tilde{w}} \in \mathcal{I}_{w}(q^{w})$. It follows by (ii) that
\begin{equation*}
\bar{J}(1^{z^*}, 1^{w^*}, \theta^*, \lambda^*) - \bar{J}( 1^{ \tilde{z}} , 1^{\tilde{w}} , \theta, \lambda) > 0.
\end{equation*}
Define 
\begin{equation*}
v_{z} := \inf_{(i,k) : 1^{\tilde{z}}_{ik} =1} q^{z}_{ik}.
\end{equation*}
It follows by the definition of $\mathcal{I}_{z}(q^{z})$ and the fact $ 1^{\tilde{z}} \in \mathcal{I}_{z}(q^z) $ that $v_z >0$. Similarly, define
\begin{equation*}
v_{w} := \inf_{(j,l) : 1^{\tilde{w}}_{jl} =1} q^{w}_{jl}.
\end{equation*}
Then, $v_{w} >0$. It is obvious that $q^{z}_{ik} \geq v_{z} 1^{\tilde{z}}_{ik}$ for any $i$ and $k$, and similarly, $q^{w}_{jl} \geq v_{w} 1^{\tilde{w}}_{jl} $ for any $j$ and $l$. Using \eqref{eqm:formulafordifferenceJ} twice, we have
\begin{eqnarray*}
&& \bar{J} \left( 1^{z^*}, 1^{w^*}, \theta^*, \lambda^* \right) - \bar{J} \left( q^z, q^w, \theta, \lambda \right) \\
&=& \sum_{k=1}^K \sum_{l=1}^L \sum_{k'=1}^K \sum_{l'=1}^L \sum_{i=1}^m \sum_{j=1}^n 1_{ik}^{z^*} 1_{jl}^{w^*} q_{ik'}^z q_{jl'}^w \text{ KL}(\theta_{il}^* \lambda_{jk}^*, \theta_{il'} \lambda_{jk'}) \\
&\geq& v_{z} v_{w} \sum_{k=1}^K \sum_{l=1}^L \sum_{k'=1}^K \sum_{l'=1}^L \sum_{i=1}^m \sum_{j=1}^n 1_{ik}^{z^*} 1_{jl}^{w^*} 1_{ik'}^{\tilde{z}} 1_{jl'}^{\tilde{w}} \text{ KL}(\theta_{il}^* \lambda_{jk}^*, \theta_{il'} \lambda_{jk'}) \\
&=& v_{z} v_{w} \left( \bar{J} \left( 1^{z^*}, 1^{w^*}, \theta^*, \lambda^* \right) - \bar{J} \left( 1^{\tilde{z}},  1^{\tilde{w}} , \theta, \lambda \right) \right) \\
&>& 0.
\end{eqnarray*}
We conclude the proof.
\end{proof}

\begin{proof}[Proof of Theorem \ref{thm:identification}]
The assertion in the theorem is equivalent to requiring that, for any $q^z \in \mathcal{C}_{z}$ and $q^w \in \mathcal{C}_{w}$ such that if $ M_{\mathrm{row}} (q^z)  > 0 $ or $M_{\mathrm{col}} (q^{w}) > 0$, then $ \bar{J} \left( 1^{z^*}, 1^{w^*}, \theta^*, \lambda^* \right) - \bar{J} \left( q^z, q^w, \theta, \lambda \right) >0 $. Furthermore, by applying Lemma \ref{lm:restrict}, this is equivalent to stating that for any $1^z \in \mathcal{I}_{z}$ and $1^w \in \mathcal{I}_{w}$ such that if $ M_{\mathrm{row}} (1^z)  > 0 $ or $M_{\mathrm{col}} (1^{w}) > 0$, then $ \bar{J} \left( 1^{z^*}, 1^{w^*}, \theta^*, \lambda^* \right) - \bar{J} \left( 1^z, 1^w, \theta, \lambda \right) >0 $. Therefore, it suffices to prove that
\begin{equation*}
\bar{J} \left( 1^{z^*}, 1^{w^*}, \theta^*, \lambda^* \right) - \bar{J} \left( 1^z, 1^w, \theta, \lambda \right) =0
\end{equation*}
only if $M_{\mathrm{row}} (1^z) =0  $ and $M_{\mathrm{col}} (1^{w}) = 0$.

Define
\begin{equation*}
\mathcal{A}^{*}_{k} = \{ 1 \leq i \leq m: z^{*}(i) =k \}, \quad \mathcal{A}_{k} = \{ 1 \leq i \leq m: z(i) =k \}
\end{equation*}
for $k = 1, \dots, K$, and
\begin{equation*}
\mathcal{B}^{*}_{l} = \{ 1 \leq j \leq n: z^{*}(j) =l \}, \quad \mathcal{B}_{l} = \{ 1 \leq j \leq n: z(j) =l \}
\end{equation*}
for $l = 1, \dots, L$. Applying \eqref{eqm:formulafordifferenceJ} and noting that $1^{z^{*}}_{ik} = 1_{ \{ z^{*}(i) =k \} }$, $1^{w^{*}}_{jl} = 1_{ \{  w^{*}(j) =l \} }$, $1^{z}_{ik'} = 1_{ \{ z(i) =  k' \} }$, and $1^{w}_{jl'} = 1_{ \{  w(j) = l' \} }$, it follows
\begin{eqnarray*}
&& \bar{J} \left( 1^{z^*}, 1^{w^*}, \theta^*, \lambda^* \right) - \bar{J} \left( 1^z, 1^w, \theta, \lambda \right) \\
&=& \sum_{k=1}^K \sum_{l=1}^L \sum_{k'=1}^K \sum_{l'=1}^L \sum_{i=1}^m \sum_{j=1}^n 1_{ik}^{z^*} 1_{jl}^{w^*} 1_{ik'}^z 1_{jl'}^w \text{ KL}(\theta_{il}^* \lambda_{jk}^*, \theta_{il'} \lambda_{jk'}) \\
&=& \sum_{i=1}^m \sum_{j=1}^n \mathrm{KL} \left( \theta_{i w^{*}(j)}^* \lambda_{j z^{*}(i)}^*, \theta_{i w(j)} \lambda_{j z(i)} \right).
\end{eqnarray*}
Therefore,
\begin{eqnarray*}
 && \bar{J} \left( 1^{z^*}, 1^{w^*}, \theta^*, \lambda^* \right) - \bar{J} \left( 1^z, 1^w, \theta, \lambda \right) =0 \\
 &\Longleftrightarrow& \mathrm{KL} \left( \theta_{i w^{*}(j)}^* \lambda_{j z^{*}(i)}^*, \theta_{i w(j)} \lambda_{j z(i)} \right), \ \text{for all $ i =1, \dots, m$, and $ j = 1, \dots, n$} \\
 &\Longleftrightarrow&  \theta_{i w^{*}(j)}^* \lambda_{j z^{*}(i)}^* = \theta_{i w(j)} \lambda_{j z(i)} , \ \text{for all $ i =1, \dots, m$, and $ j = 1, \dots, n$} \\
 &\Longleftrightarrow& \text{for all $ k =1, \dots, K$, and $ l = 1, \dots, L$ such that $ \mathcal{A}_{k} \neq \varnothing $ and $ \mathcal{B}_{l} \neq \varnothing $}, \\
 && \qquad \left(  \theta_{i w^{*}(j)}^* \lambda_{j z^{*}(i)}^* \right)_{i \in \mathcal{A}_{k}, j \in \mathcal{B}_{l}} = \left( \theta_{i w(j)} \lambda_{j z(i)} \right)_{i \in \mathcal{A}_{k}, j \in \mathcal{B}_{l}}  \\
&\Longleftrightarrow& \text{for all $ k =1, \dots, K$, and $ l = 1, \dots, L$ such that $ \mathcal{A}_{k} \neq \varnothing $ and $ \mathcal{B}_{l} \neq \varnothing $}, \\
&& \qquad \left(  \theta_{i w^{*}(j)}^* \lambda_{j z^{*}(i)}^* \right)_{i \in \mathcal{A}_{k}, j \in \mathcal{B}_{l}}  = \left( \theta_{i l} \lambda_{j k} \right)_{i \in \mathcal{A}_{k}, j \in \mathcal{B}_{l}},
\end{eqnarray*} 
where the second equivalence follows from the fact that $\mathrm{KL}(a,b) =0$ if and only if $a=b$, as directly implied by Lemma 16 in \cite{zhao2024variational}. The last statement in the last display implies that $ (  \theta_{i w^{*}(j)}^* \lambda_{j z^{*}(i)}^* )_{i \in \mathcal{A}_{k}, j \in \mathcal{B}_{l}} $ is a rank-$1$ matrix for all $ k =1, \dots, K$, and $ l = 1, \dots, L$ such that $ \mathcal{A}_{k} \neq \varnothing $ and $ \mathcal{B}_{l} \neq \varnothing $.

In the next step, we use the result above, together with Conditions \ref{cond:c1} and \ref{cond:c2}, to demonstrate that 
\begin{equation}
\text{for any $k \in \{1, \dots, K\}$, $\mathcal{A}_{k} = \mathcal{A}^{*}_{k'}$ for some $k' \in \{1, \dots, K\}$.}  \label{eqid:apermutationargument}
\end{equation}
Since $\mathcal{A}_1, \dots, \mathcal{A}_{K}$ are pairwise disjoint, it follows that there exists a permutation $s \in S_K$ such that $\mathcal{A}_{k} = \mathcal{A}^{*}_{s(k)} $ for all $k= 1, \dots, K$.

Assume the statement in \eqref{eqid:apermutationargument} is not correct. We first claim that there exist $k$, $k_1'$, and $ k_2' \in \{1, \dots, K\}$ such that $k_1' \neq k_2'$, $ \mathcal{A}_{k} \cap \mathcal{A}^{*}_{k_1'} \neq \varnothing $, and $ \mathcal{A}_{k} \cap \mathcal{A}^{*}_{k_2'} \neq \varnothing $. We prove this claim as follows. For any $k$ satisfying $\mathcal{A}_k \neq \varnothing$, there must exist some $ t(k) \in \{1, \dots, K\}$ such that $\mathcal{A}_{k} \cap \mathcal{A}^{*}_{t(k)} \neq \varnothing$. Consider the set
\begin{equation}
\left\{ k: \mathcal{A}_{k} \nsubseteq \mathcal{A}^{*}_{t(k)} \text{ and } \mathcal{A}_{k} \neq \varnothing \right\}.  \label{eqid:asetofk}
\end{equation}
This set cannot be empty. Otherwise, $ \mathcal{A}_{k} \subseteq \mathcal{A}^{*}_{t(k)} $ for all $k$ such that $\mathcal{A}_{k} \neq \varnothing $. However, both $\mathcal{A}_{1}, \dots, \mathcal{A}_{K}$ and $\mathcal{A}_{1}^{*}, \dots, \mathcal{A}_{K}^{*}$ form partitions of $\{1, \dots, m\}$, and Condition \ref{cond:c1} ensures that $\mathcal{A}_{k}^{*} \neq \varnothing$ for all $k$. It follows that $ \mathcal{A}_k \neq \varnothing $ and $\mathcal{A}_{k} = \mathcal{A}^{*}_{t(k)}$ for all $k \in \{1, \dots, K\}$. This leads to a contradiction, since we assumed that the statement in \eqref{eqid:apermutationargument} is false. Therefore, the set in \eqref{eqid:asetofk} is not empty, and we choose an arbitrary $k$ from this set. It follows that $ \mathcal{A}_{k} \nsubseteq \mathcal{A}^{*}_{t(k)} $ and $\mathcal{A}_{k} \neq \varnothing$. Then, there exists $k_1' \in \{1, \dots, K\}$ such that $k_1' \neq t(k)$ and $ \mathcal{A}_{k} \cap \mathcal{A}^{*}_{k_1'} \neq \varnothing $. Furthermore, note that $\mathcal{A}_{k} \cap \mathcal{A}^{*}_{t(k)} \neq \varnothing$. We conclude the claim by letting $k_2' = t(k)$.

Let $k, k_1'$ and $k_2'$ be as specified in the preceding claim. Choose any $i_1 \in \mathcal{A}_{k} \cap \mathcal{A}^{*}_{k_1'}$ and $i_2 \in \mathcal{A}_{k} \cap \mathcal{A}^{*}_{k_2'}$. Then, $i_1 \neq i_2$. Furthermore, let $l$ be as specified in the second part of Condition \ref{cond:c2}; that is, \eqref{eqid:secondpartC2} holds for all distinct $ k_1, k_2$. 

Note that for any $ l' \in \{1, \dots, L\}$ such that $ \mathcal{B}_{l}^{*} \cap \mathcal{B}_{l'} \neq \varnothing $, the $2 \times | \mathcal{B}_{l}^{*} \cap \mathcal{B}_{l'} | $ matrix 
\begin{equation*}
\left( \theta_{i w^{*}(j)}^* \lambda_{j z^{*}(i)}^* \right)_{i \in \{i_1,  i_2\}, j \in \mathcal{B}_{l}^{*} \cap \mathcal{B}_{l'} }
\end{equation*}
is a submatrix of
\begin{equation*}
\left( \theta_{i w^{*}(j)}^* \lambda_{j z^{*}(i)}^* \right)_{i \in \mathcal{A}_k , j \in \mathcal{B}_{l'} },
\end{equation*}
and thus, is also a rank-$1$ matrix. Consequently, there exists $\nu_{l'} >0$ such that
\begin{equation*}
\left( \theta_{i w^{*}(j)}^* \lambda_{j z^{*}(i)}^* \right)_{i \in \{i_1\}, j \in \mathcal{B}_{l}^{*} \cap \mathcal{B}_{l'} }
= \nu_{l'} \left( \theta_{i w^{*}(j)}^* \lambda_{j z^{*}(i)}^* \right)_{i \in \{ i_2\}, j \in \mathcal{B}_{l}^{*} \cap \mathcal{B}_{l'} }.
\end{equation*}
Furthermore, noting that $i_1 \in  \mathcal{A}^{*}_{k_1'}$ and $i_2 \in \mathcal{A}^{*}_{k_2'}$, it follows that
\begin{equation*}
\left(  \theta_{i w^{*}(j)}^* \lambda_{j z^{*}(i)}^* \right)_{i \in \{i_1\}, j \in \mathcal{B}_{l}^{*} \cap \mathcal{B}_{l'} }
= \left(  \theta_{i_1 l}^* \lambda_{j k_1'}^* \right)_{i \in \{i_1\}, j \in \mathcal{B}_{l}^{*} \cap \mathcal{B}_{l'}}
= \theta_{i_1 l}^* \left( \lambda_{\mathcal{B}_{l}^{*} \cap \mathcal{B}_{l'}, \, k_1'}^* \right)^{\intercal}
\end{equation*}
and
\begin{equation*}
\left(  \theta_{i w^{*}(j)}^* \lambda_{j z^{*}(i)}^* \right)_{i \in \{i_2\}, j \in \mathcal{B}_{l}^{*} \cap \mathcal{B}_{l'} }
= \left(  \theta_{i_2 l}^* \lambda_{j k_2'}^* \right)_{i \in \{i_2\}, j \in \mathcal{B}_{l}^{*} \cap \mathcal{B}_{l'} }
= \theta_{i_2 l}^* \left( \lambda_{\mathcal{B}_{l}^{*} \cap \mathcal{B}_{l'} , \, k_2'}^* \right)^{\intercal}.
\end{equation*}
Combining the last three displays, it yields
\begin{equation*}
\theta_{i_1 l}^* \left( \lambda_{\mathcal{B}_{l}^{*} \cap \mathcal{B}_{l'} , \, k_1'}^* \right)^{\intercal}
= \nu_{l'} \theta_{i_2 l}^* \left( \lambda_{\mathcal{B}_{l}^{*}  \cap \mathcal{B}_{l'} , \, k_2'}^* \right)^{\intercal},
\end{equation*}
and thus,
\begin{equation}
\left\| \lambda_{\mathcal{B}_{l}^{*} \cap \mathcal{B}_{l'} , \, k_1'}^* - \frac{ \nu_{l'} \theta_{i_2 l}^* }{ \theta_{i_1 l}^*  }  \lambda_{\mathcal{B}_{l}^{*}  \cap \mathcal{B}_{l'} , \, k_2'}^* \right\|_{\mathrm{new}}^{2}=0.  \label{eqid:normnew0}
\end{equation}
We then estimate the range of $  \nu_{l'} \theta_{i_2 l}^* / \theta_{i_1 l}^* $. Since $ \mathcal{B}_{l}^{*} \cap \mathcal{B}_{l'} \neq \varnothing $, we choose any $ j' \in \mathcal{B}_{l}^{*} \cap \mathcal{B}_{l'} $. Then,
\begin{equation*}
\theta_{i_1 l}^* \lambda_{ j' k_{1}' }^{*}
= \nu_{l'} \theta_{i_2 l}^* \lambda_{ j' k_{2}' }^{*}.
\end{equation*}
Since $ \lambda_{ j' k_{1}' }^{*}, \lambda_{ j' k_{2}' }^{*} \in [ \eta_{\mathrm{min}}, \eta_{\mathrm{max}} ]$, we have
\begin{equation*}
 \frac{ \nu_{l'} \theta_{i_2 l}^* }{ \theta_{i_1 l}^*  } = \frac{ \lambda_{ j' k_{1}' }^{*} }{ \lambda_{ j' k_{2}' }^{*} } 
 \in \left[ \frac{\eta_{\mathrm{min}}}{\eta_{\mathrm{max}}},  \frac{\eta_{\mathrm{max}}}{\eta_{\mathrm{min}}} \right].
\end{equation*}
Together with \eqref{eqid:normnew0}, it follows that
\begin{equation*}
\inf_{ c_{l'} \in \left[ \frac{\eta_{\mathrm{min}}}{\eta_{\mathrm{max}}},  \frac{\eta_{\mathrm{max}}}{\eta_{\mathrm{min}}} \right] } \,  \left\| \lambda_{\mathcal{B}_{l}^{*} \cap \mathcal{B}_{l'} , \, k_1'}^* - c_{l'} \lambda_{\mathcal{B}_{l}^{*}  \cap \mathcal{B}_{l'} , \, k_2'}^* \right\|_{\mathrm{new}}^{2}=0.
\end{equation*}
When $ \mathcal{B}_{l}^{*} \cap \mathcal{B}_{l'} = \varnothing $, the preceding equality holds by convention. Letting $l'$ vary from $1$ to $L$, we have
\begin{equation*}
\inf_{ c_1, \dots, c_L \in \left[ \frac{\eta_{\mathrm{min}}}{\eta_{\mathrm{max}}},  \frac{\eta_{\mathrm{max}}}{\eta_{\mathrm{min}}} \right] } \  \sum_{l' =1}^{L} \left\| \lambda_{\mathcal{B}_{l}^{*} \cap \mathcal{B}_{l'} , \, k_1'}^* - c_{l'} \lambda_{\mathcal{B}_{l}^{*}  \cap \mathcal{B}_{l'} , \, k_2'}^* \right\|_{\mathrm{new}}^{2}=0.
\end{equation*}
In other words, we construct a partition of $ \mathcal{B}_{l}^{*} $ into subsets $ \mathcal{B}_{l}^{*} \cap \mathcal{B}_{1}, \dots, \mathcal{B}_{l}^{*} \cap \mathcal{B}_{L} $ such that
\begin{equation*}
\inf_{ c_1, \dots, c_L \in \left[ \frac{\eta_{\mathrm{min}}}{\eta_{\mathrm{max}}},  \frac{\eta_{\mathrm{max}}}{\eta_{\mathrm{min}}} \right] } \  \sum_{l' =1}^{L} \left\| \lambda_{\mathcal{B}_{l}^{*} \cap \mathcal{B}_{l'} , \, k_1'}^* - c_{l'} \lambda_{\mathcal{B}_{l}^{*}  \cap \mathcal{B}_{l'} , \, k_2'}^* \right\|_{\mathrm{new}}^{2}=0.
\end{equation*}
This contradicts with \eqref{eqid:secondpartC2} in Condition \ref{cond:c2}. Therefore, our assumption is false and the statement in \eqref{eqid:apermutationargument} holds. Then, there exists a permutation $s \in S_K$ such that $\mathcal{A}_{k} = \mathcal{A}_{s(k)}^{*}$ for all $k \in \{ 1, \dots, K \}$. Applying \eqref{eqid:anotherformfoR}, we have
\begin{equation*}
1 - \sum_{k=1}^{K} \mathbb{R}_{s(k) k}( 1^{z^*}, 1^{z} ) = 1 - \frac{1}{m} \sum_{k=1}^{K} \left| \mathcal{A}_{s(k)}^{*} \cap  \mathcal{A}_{k} \right|
= 1 - \frac{1}{m} \sum_{k=1}^{K} \left| \mathcal{A}_{s(k)}^{*} \right| = 0. 
\end{equation*}
Therefore,
\begin{equation*}
M_{\mathrm{row}} ( 1^{z} ) = 0.
\end{equation*}
Similarly, $ M_{\mathrm{col}} ( 1^{w} ) = 0 $. We conclude the proof.
\end{proof}

\begin{proof}[Proof of Theorem \ref{thm:wellseparatedness}]
By symmetry, we only prove the first inequality in the theorem.

For any $j=1, \dots, n$, since $\sum_{l=1}^{L} q_{jl}^{w} =1 $, there exists $l(j)$ (depending on $j$) such that $q_{j l(j)}^{w} \geq 1/L$. Let $\tilde{w}$ be a column label assignment defined by $\tilde{w}(j) = l(j)$ for each $j \in {1, \dots, n}$, and denote by $1^{\tilde{w}}$ the corresponding matrix in $\mathcal{I}_{w}$. Again, we define
\begin{equation*}
\widetilde{ \mathcal{B} }_{l} = \left\{ 1 \leq j \leq n: \tilde{w}(j) =l  \right\},
\end{equation*}
for $l = 1, \dots, L$. It is straightforward that, for all $j = 1, \dots, n$, 
\begin{equation*}
q_{j, \tilde{w}(j)}^{w} \geq \frac{1}{L}.
\end{equation*}
Therefore,
\begin{eqnarray*}
&& \bar{J}(1^{z^*}, 1^{w^*}, \theta^*, \lambda^*) - \bar{J}(q^z, q^w, \theta, \lambda) \\
&=& \sum_{k=1}^K \sum_{l=1}^L \sum_{k'=1}^K \sum_{l'=1}^L \sum_{i=1}^m \sum_{j=1}^n 1_{ik}^{z^*} 1_{jl}^{w^*} q_{ik'}^z q_{jl'}^w \mathrm{ KL}(\theta_{il}^* \lambda_{jk}^*, \theta_{il'} \lambda_{jk'})  \\
&\geq& \sum_{k'=1}^K \sum_{i=1}^m \sum_{j=1}^n 1_{i z^{*}(i) }^{z^*} 1_{j w^{*}(j) }^{w^*} q_{ik'}^z q_{j \tilde{w}(j) }^w \mathrm{ KL} \left(\theta_{i w^{*}(j)}^* \lambda_{j z^{*}(i)}^*, \theta_{i\tilde{w}(j)} \lambda_{jk'}\right) \\
&\geq& \frac{1}{L} \sum_{k'=1}^K \sum_{i=1}^m \sum_{j=1}^n q_{ik'}^z  \mathrm{ KL} \left(\theta_{i w^{*}(j)}^* \lambda_{j z^{*}(i)}^*, \theta_{i\tilde{w}(j)} \lambda_{jk'}\right) \\
&=& \frac{1}{L} \sum_{k'=1}^K \left( \sum_{k=1}^K \sum_{i \in \mathcal{A}_{k}^{*}} \right) \left( \sum_{l=1}^L \sum_{j \in \mathcal{B}_{l}^{*}} \right) q_{ik'}^z  \mathrm{ KL} \left(\theta_{i w^{*}(j)}^* \lambda_{j z^{*}(i)}^*, \theta_{i\tilde{w}(j)} \lambda_{jk'}\right) \\
&=& \frac{1}{L} \sum_{k'=1}^K   \sum_{k=1}^K  \sum_{l=1}^L  \sum_{i \in \mathcal{A}_{k}^{*}}   \sum_{j \in \mathcal{B}_{l}^{*}}   q_{ik'}^z  \mathrm{ KL} \left(\theta_{i w^{*}(j)}^* \lambda_{j z^{*}(i)}^*, \theta_{i\tilde{w}(j)} \lambda_{jk'}\right) \\
&=& \frac{1}{L} \sum_{k'=1}^K   \sum_{k=1}^K  \sum_{l=1}^L  \sum_{i \in \mathcal{A}_{k}^{*}}   \sum_{j \in \mathcal{B}_{l}^{*}}   q_{ik'}^z  \mathrm{ KL} \left(\theta_{i l}^* \lambda_{j k}^*, \theta_{i\tilde{w}(j)} \lambda_{jk'}\right).
\end{eqnarray*}
Noting that $\mathrm{KL}(ca, cb) = c \mathrm{KL}(a, b)$ for any $a,b,c >0$, we have
\begin{eqnarray*}
&& \bar{J}(1^{z^*}, 1^{w^*}, \theta^*, \lambda^*) - \bar{J}(q^z, q^w, \theta, \lambda) \\
&=& \frac{1}{L}  \sum_{l=1}^L  \sum_{k'=1}^K   \sum_{k=1}^K   \sum_{i \in \mathcal{A}_{k}^{*}}   \sum_{j \in \mathcal{B}_{l}^{*}}   q_{ik'}^z   \mathrm{ KL} \left(\theta_{i l}^* \lambda_{j k}^*, \theta_{i\tilde{w}(j)} \lambda_{jk'}\right) \\
&=& \frac{1}{L}  \sum_{l=1}^L  \sum_{k'=1}^K   \sum_{k=1}^K   \sum_{i \in \mathcal{A}_{k}^{*}}   \sum_{j \in \mathcal{B}_{l}^{*}}   q_{ik'}^z  \theta_{i l}^*  \, \mathrm{ KL} \left( \lambda_{j k}^*, \frac{ \theta_{i\tilde{w}(j)} }{\theta_{i l}^*}  \lambda_{jk'}\right) \\
&\geq& \frac{ \eta_{\mathrm{min}} }{L} \sum_{l=1}^L  \sum_{k'=1}^K   \sum_{k=1}^K   \sum_{i \in \mathcal{A}_{k}^{*}}   \sum_{j \in \mathcal{B}_{l}^{*}}   q_{ik'}^z   \mathrm{ KL} \left( \lambda_{j k}^*, \frac{ \theta_{i\tilde{w}(j)} }{\theta_{i l}^*}  \lambda_{jk'}\right) \\
&=& \frac{ \eta_{\mathrm{min}} }{L} \sum_{l=1}^L  \sum_{k'=1}^K   \sum_{k=1}^K   \sum_{i \in \mathcal{A}_{k}^{*}}  \sum_{l'=1}^{L}  \sum_{j \in \mathcal{B}_{l}^{*} \cap \widetilde{\mathcal{B}}_{l'} }   q_{ik'}^z   \mathrm{ KL} \left( \lambda_{j k}^*, \frac{ \theta_{i\tilde{w}(j)} }{\theta_{i l}^*}  \lambda_{jk'}\right) \\
&=& \frac{ \eta_{\mathrm{min}} }{L} \sum_{l=1}^L  \sum_{k'=1}^K   \sum_{k=1}^K   \sum_{i \in \mathcal{A}_{k}^{*}}  \sum_{l'=1}^{L}  \sum_{j \in \mathcal{B}_{l}^{*} \cap \widetilde{\mathcal{B}}_{l'} }   q_{ik'}^z   \mathrm{ KL} \left( \lambda_{j k}^*, \frac{ \theta_{i l' } }{\theta_{i l}^*}  \lambda_{jk'}\right).
\end{eqnarray*}
Applying Lemma 16 in \cite{zhao2024variational}, it follows
\begin{align*}
 \mathrm{ KL} \left( \lambda_{j k}^*, \frac{ \theta_{i l' } }{\theta_{i l}^*}  \lambda_{jk'}\right) 
&\geq \min\left\{ \left( \lambda_{j k}^* - \frac{ \theta_{i l' } }{\theta_{i l}^*}  \lambda_{jk'} \right)^2 \bigg/ \left( 6 \frac{ \theta_{i l' } }{\theta_{i l}^*}  \lambda_{jk'} \right),   \left|  \lambda_{j k}^* - \frac{ \theta_{i l' } }{\theta_{i l}^*}  \lambda_{jk'} \right| \right\}  \\
& \geq \min\left\{ \left( \lambda_{j k}^* - \frac{ \theta_{i l' } }{\theta_{i l}^*}  \lambda_{jk'} \right)^2 \bigg/ \left( 6 \frac{\eta_{\mathrm{max}}^2}{\eta_{\mathrm{min}}} \right),   \left|  \lambda_{j k}^* - \frac{ \theta_{i l' } }{\theta_{i l}^*}  \lambda_{jk'} \right| \right\}  \\
& \geq \min\left\{ \left( \lambda_{j k}^* - \frac{ \theta_{i l' } }{\theta_{i l}^*}  \lambda_{jk'} \right)^2 \bigg/ \left( 6 \mu \right),   \left|  \lambda_{j k}^* - \frac{ \theta_{i l' } }{\theta_{i l}^*}  \lambda_{jk'} \right| \right\} \\
& \geq \min\left\{ \left( \lambda_{j k}^* - \frac{ \theta_{i l' } }{\theta_{i l}^*}  \lambda_{jk'} \right)^2 \bigg/ \left( 6 \mu \right),   6 \mu \right\}  \\
&= \frac{1}{6 \mu} \left|  \lambda_{j k}^* - \frac{ \theta_{i l' } }{\theta_{i l}^*}  \lambda_{jk'} \right|_{\mathrm{new}}^{2}.  
\end{align*}
Combining the last two displays, we have
\begin{eqnarray}
&& \bar{J}(1^{z^*}, 1^{w^*}, \theta^*, \lambda^*) - \bar{J}(q^z, q^w, \theta, \lambda)  \nonumber \\
&\geq& \frac{ \eta_{\mathrm{min}} }{ 6 \mu L }  \sum_{l=1}^L  \sum_{k'=1}^K   \sum_{k=1}^K   \sum_{i \in \mathcal{A}_{k}^{*}}  \sum_{l'=1}^{L}  \sum_{j \in \mathcal{B}_{l}^{*} \cap \widetilde{\mathcal{B}}_{l'} }   q_{ik'}^z   \left|  \lambda_{j k}^* - \frac{ \theta_{i l' } }{\theta_{i l}^*}  \lambda_{jk'} \right|_{\mathrm{new}}^{2}  \nonumber \\
&=& \frac{ \eta_{\mathrm{min}} }{ 6 \mu L }  \sum_{l=1}^L  \sum_{k'=1}^K   \sum_{k=1}^K   \sum_{i \in \mathcal{A}_{k}^{*}}  \sum_{l'=1}^{L}  q_{ik'}^z   \left\|  \lambda_{\mathcal{B}_{l}^{*} \cap \widetilde{\mathcal{B}}_{l'}, \, k}^* - \frac{ \theta_{i l' } }{\theta_{i l}^*}  \lambda_{\mathcal{B}_{l}^{*} \cap \widetilde{\mathcal{B}}_{l'}, \, k'} \right\|_{\mathrm{new}}^{2}  \nonumber \\
&\geq& \frac{ \eta_{\mathrm{min}} }{ 6 \mu L }  \sum_{l=1}^L  \sum_{k'=1}^K   \sum_{k=1}^K   \sum_{i \in \mathcal{A}_{k}^{*}}  \sum_{l'=1}^{L}  q_{ik'}^z  \inf_{ c \in   \left[ \frac{ \eta_{\mathrm{min}} }{\eta_{\mathrm{max}}} ,  \frac{ \eta_{\mathrm{max}} }{\eta_{\mathrm{min}}}\right]}   \left\|  \lambda_{\mathcal{B}_{l}^{*} \cap \widetilde{\mathcal{B}}_{l'}, \, k}^* - c  \lambda_{\mathcal{B}_{l}^{*} \cap \widetilde{\mathcal{B}}_{l'}, \, k'} \right\|_{\mathrm{new}}^{2}  \nonumber \\
&=& \frac{ \eta_{\mathrm{min}} }{ 6 \mu L }  \sum_{l=1}^L  \sum_{k'=1}^K   \sum_{k=1}^K   \sum_{i \in \mathcal{A}_{k}^{*}}   q_{ik'}^z  \   \inf_{ c_1, \dots, c_L \in   \left[ \frac{ \eta_{\mathrm{min}} }{\eta_{\mathrm{max}}} ,  \frac{ \eta_{\mathrm{max}} }{\eta_{\mathrm{min}}}\right]} \  \sum_{l'=1}^{L}  \left\|  \lambda_{\mathcal{B}_{l}^{*} \cap \widetilde{\mathcal{B}}_{l'}, \, k}^* - c_{l'}  \lambda_{\mathcal{B}_{l}^{*} \cap \widetilde{\mathcal{B}}_{l'}, \, k'} \right\|_{\mathrm{new}}^{2}.  \nonumber  \\
&  & \quad \quad\quad\quad\quad  \label{eqws:byproduct}
\end{eqnarray}

We then claim that for every $k' \in \{1, \dots, K\}$, there exists at most one $k \in \{1, \dots, K\}$ such that
\begin{equation}
\inf_{ c_1, \dots, c_L \in   \left[ \frac{ \eta_{\mathrm{min}} }{\eta_{\mathrm{max}}} ,  \frac{ \eta_{\mathrm{max}} }{\eta_{\mathrm{min}}}\right]} \  \sum_{l'=1}^{L}  \left\|  \lambda_{\mathcal{B}_{l}^{*} \cap \widetilde{\mathcal{B}}_{l'}, \, k}^* - c_{l'}  \lambda_{\mathcal{B}_{l}^{*} \cap \widetilde{\mathcal{B}}_{l'}, \, k'} \right\|_{\mathrm{new}}^{2}
< \frac{ \eta_{\mathrm{min}}^{4} }{ 2 ( \eta_{\mathrm{min}}^{4} + \eta_{\mathrm{max}}^{4} ) } \tau |\mathcal{B}_{l}^{*}|. \label{eqws:acontrolinequality}
\end{equation}
Otherwise, suppose there exist two $k_1, k_2 \in \{1, \dots, K\}$ satisfying the above inequality. Then, there exist $c_1, \dots, c_{L}, d_{1}, \dots, d_{L} \in [ \eta_{\mathrm{min}}/ \eta_{\mathrm{max}}, \eta_{\mathrm{max}}/\eta_{\mathrm{min}} ] $ such that
\begin{align*}
 & \sum_{l'=1}^{L} \left\|  \lambda_{\mathcal{B}_{l}^{*} \cap \widetilde{\mathcal{B}}_{l'}, \, k_1}^* - c_{l'}  \lambda_{\mathcal{B}_{l}^{*} \cap \widetilde{\mathcal{B}}_{l'}, \, k'} \right\|_{\mathrm{new}}^{2} < \frac{ \eta_{\mathrm{min}}^{4} }{ 2 ( \eta_{\mathrm{min}}^{4} + \eta_{\mathrm{max}}^{4} ) } \tau |\mathcal{B}_{l}^{*}|, \\
\end{align*}
and
\begin{align*}
 & \sum_{l'=1}^{L} \left\|  \lambda_{\mathcal{B}_{l}^{*} \cap \widetilde{\mathcal{B}}_{l'}, \, k_2}^* - d_{l'}  \lambda_{\mathcal{B}_{l}^{*} \cap \widetilde{\mathcal{B}}_{l'}, \, k'} \right\|_{\mathrm{new}}^{2} < \frac{ \eta_{\mathrm{min}}^{4} }{ 2 ( \eta_{\mathrm{min}}^{4} + \eta_{\mathrm{max}}^{4} ) } \tau |\mathcal{B}_{l}^{*}|.
\end{align*}
Noting that $c_{l'}/d_{l'} \leq   \eta_{\mathrm{max}}^{2} /  \eta_{\mathrm{min}}^{2}$, it follows by triangle inequality that
\begin{eqnarray*}
&& \sum_{l'=1}^{L} \left\|  \lambda_{\mathcal{B}_{l}^{*} \cap \widetilde{\mathcal{B}}_{l'}, \, k_1}^* - \frac{c_{l'}}{d_{l'}}    \lambda_{\mathcal{B}_{l}^{*} \cap \widetilde{\mathcal{B}}_{l'}, \, k_2}^* \right\|_{\mathrm{new}}^{2} \\
&=& \sum_{l'=1}^{L} \left\|  \left(\lambda_{\mathcal{B}_{l}^{*} \cap \widetilde{\mathcal{B}}_{l'}, \, k_1}^*  - c_{l'} \lambda_{\mathcal{B}_{l}^{*} \cap \widetilde{\mathcal{B}}_{l'}, \, k'}\right) -  \frac{c_{l'}}{d_{l'}}   \left( \lambda_{\mathcal{B}_{l}^{*} \cap \widetilde{\mathcal{B}}_{l'}, \, k_2}^* - d_{l'}  \lambda_{\mathcal{B}_{l}^{*} \cap \widetilde{\mathcal{B}}_{l'}, \, k'}\right) \right\|_{\mathrm{new}}^{2} \\
&\leq& \sum_{l'=1}^{L}  \left(  \left\| \lambda_{\mathcal{B}_{l}^{*} \cap \widetilde{\mathcal{B}}_{l'}, \, k_1}^*  - c_{l'} \lambda_{\mathcal{B}_{l}^{*} \cap \widetilde{\mathcal{B}}_{l'}, \, k'} \right\|_{\mathrm{new}}  + \left\| \frac{c_{l'}}{d_{l'}} \left(\lambda_{\mathcal{B}_{l}^{*} \cap \widetilde{\mathcal{B}}_{l'}, \, k_2}^* - d_{l'}  \lambda_{\mathcal{B}_{l}^{*} \cap \widetilde{\mathcal{B}}_{l'}, \, k'}\right) \right\|_{\mathrm{new}} \right)^2 \\
&\leq& 2 \sum_{l'=1}^{L}  \left\| \lambda_{\mathcal{B}_{l}^{*} \cap \widetilde{\mathcal{B}}_{l'}, \, k_1}^*  - c_{l'} \lambda_{\mathcal{B}_{l}^{*} \cap \widetilde{\mathcal{B}}_{l'}, \, k'} \right\|_{\mathrm{new}}^{2} + 2 \sum_{l'=1}^{L} \left\| \frac{c_{l'}}{d_{l'}} \left(\lambda_{\mathcal{B}_{l}^{*} \cap \widetilde{\mathcal{B}}_{l'}, \, k_2}^* - d_{l'}  \lambda_{\mathcal{B}_{l}^{*} \cap \widetilde{\mathcal{B}}_{l'}, \, k'}\right) \right\|_{\mathrm{new}}^{2} \\
&\leq& 2 \sum_{l'=1}^{L}  \left\| \lambda_{\mathcal{B}_{l}^{*} \cap \widetilde{\mathcal{B}}_{l'}, \, k_1}^*  - c_{l'} \lambda_{\mathcal{B}_{l}^{*} \cap \widetilde{\mathcal{B}}_{l'}, \, k'} \right\|_{\mathrm{new}}^{2} \\
&&\qquad + 2 \sum_{l'=1}^{L} \left[ \left(  \frac{c_{l'}}{d_{l'}} \right) \vee 1 \right]^{2} \left\|  \left(\lambda_{\mathcal{B}_{l}^{*} \cap \widetilde{\mathcal{B}}_{l'}, \, k_2}^* - d_{l'}  \lambda_{\mathcal{B}_{l}^{*} \cap \widetilde{\mathcal{B}}_{l'}, \, k'}\right) \right\|_{\mathrm{new}}^{2} \\
&\leq& 2 \sum_{l'=1}^{L}  \left\| \lambda_{\mathcal{B}_{l}^{*} \cap \widetilde{\mathcal{B}}_{l'}, \, k_1}^*  - c_{l'} \lambda_{\mathcal{B}_{l}^{*} \cap \widetilde{\mathcal{B}}_{l'}, \, k'} \right\|_{\mathrm{new}}^{2} \\
&&\qquad+ 2 \left( \frac{  \eta_{\mathrm{max}}^{2} }{  \eta_{\mathrm{min}}^{2} } \right)^{2} \sum_{l'=1}^{L} \left\|  \left(\lambda_{\mathcal{B}_{l}^{*} \cap \widetilde{\mathcal{B}}_{l'}, \, k_2}^* - d_{l'}  \lambda_{\mathcal{B}_{l}^{*} \cap \widetilde{\mathcal{B}}_{l'}, \, k'}\right) \right\|_{\mathrm{new}}^{2} \\
&<& 2 \frac{ \eta_{\mathrm{min}}^{4} }{ 2 ( \eta_{\mathrm{min}}^{4} + \eta_{\mathrm{max}}^{4} ) } \tau |\mathcal{B}_{l}^{*}| + 2  \left( \frac{  \eta_{\mathrm{max}}^{2} }{  \eta_{\mathrm{min}}^{2} } \right)^{2}  \frac{ \eta_{\mathrm{min}}^{4} }{ 2 ( \eta_{\mathrm{min}}^{4} + \eta_{\mathrm{max}}^{4} ) } \tau |\mathcal{B}_{l}^{*}| \\
&=&  \tau |\mathcal{B}_{l}^{*}|,
\end{eqnarray*}
where the last third inequality follows from the property of the newly defined metric, as presented in \eqref{eqid:contractionnewmetric}. Noting that $c_{l'}/d_{l'} \in [\eta_{\mathrm{min}}^{2} /  \eta_{\mathrm{max}}^{2},  \eta_{\mathrm{max}}^{2} /  \eta_{\mathrm{min}}^{2}]$, the above result contradicts with \eqref{eqws:secondpartC3} in Condition \ref{cond:c3}. Therefore, for every $k' \in \{1, \dots, K\}$, there exists at most one $k \in \{1, \dots, K\}$ such that \eqref{eqws:acontrolinequality} holds.

In the next step, we use the above result to construct a permutation $s_{l}$ of $\{1, \dots, K\}$ for each $l$. Here, the subscript $l$ means that the permutation $s_l$ depends on $l$. Let $\mathcal{K}_{l}$ be the set of $k'$ satisfying that there exists one and only one $k \in \{1, \dots, K\}$ such that \eqref{eqws:acontrolinequality} holds. Then, $\mathcal{K}_{l}$ is a subset of $\{1, \dots, K\}$. For any $k' \in \mathcal{K}_l$, let $s_{l}(k')$ be the $k$ such that \eqref{eqws:acontrolinequality} holds. Furthermore, our preceding result implies that $s_l$ is injective on $\mathcal{K}_l$. As a result, we could extend $s_l$ to a bijective mapping from $\{1, \dots, K\}$ to $\{1, \dots, K\}$, and thus, $s_l$ is a permutation of $\{1, \dots, K\}$.

In addition, we claim that for any $k \neq s_{l}(k')$, we have
\begin{equation}
\inf_{ c_1, \dots, c_L \in   \left[ \frac{ \eta_{\mathrm{min}} }{\eta_{\mathrm{max}}} ,  \frac{ \eta_{\mathrm{max}} }{\eta_{\mathrm{min}}}\right]} \  \sum_{l'=1}^{L}  \left\|  \lambda_{\mathcal{B}_{l}^{*} \cap \widetilde{\mathcal{B}}_{l'}, \, k}^* - c_{l'}  \lambda_{\mathcal{B}_{l}^{*} \cap \widetilde{\mathcal{B}}_{l'}, \, k'} \right\|_{\mathrm{new}}^{2}
\geq \frac{ \eta_{\mathrm{min}}^{4} }{ 2 ( \eta_{\mathrm{min}}^{4} + \eta_{\mathrm{max}}^{4} ) } \tau |\mathcal{B}_{l}^{*}|. \label{eqws:acontrolinequalityanotherpart}
\end{equation}
Indeed, if $k' \in \mathcal{K}_l$, then \eqref{eqws:acontrolinequalityanotherpart} holds for any $k \neq s_{l}(k')$ by the definition of $\mathcal{K}_l$; if $k' \notin \mathcal{K}_l $, the \eqref{eqws:acontrolinequalityanotherpart} holds for all $k \in \{1, \dots, K\}$.

With $s_l$ in hand, we proceed to evaluate the right-hand side of \eqref{eqws:byproduct}. Together with the last display, we have
\begin{eqnarray*}
&& \bar{J}(1^{z^*}, 1^{w^*}, \theta^*, \lambda^*) - \bar{J}(q^z, q^w, \theta, \lambda)  \\
&\geq& \frac{ \eta_{\mathrm{min}} }{ 6 \mu L }  \sum_{l=1}^L  \sum_{k'=1}^K   \sum_{k=1}^K   \sum_{i \in \mathcal{A}_{k}^{*}}   q_{ik'}^z  \   \inf_{ c_1, \dots, c_L \in   \left[ \frac{ \eta_{\mathrm{min}} }{\eta_{\mathrm{max}}} ,  \frac{ \eta_{\mathrm{max}} }{\eta_{\mathrm{min}}}\right]} \  \sum_{l'=1}^{L}  \left\|  \lambda_{\mathcal{B}_{l}^{*} \cap \widetilde{\mathcal{B}}_{l'}, \, k}^* - c_{l'}  \lambda_{\mathcal{B}_{l}^{*} \cap \widetilde{\mathcal{B}}_{l'}, \, k'} \right\|_{\mathrm{new}}^{2}  \\
&\geq& \frac{ \eta_{\mathrm{min}} }{ 6 \mu L }  \sum_{l=1}^L  \sum_{k'=1}^K   \sum_{k \neq s_{l}(k')}   \left(\sum_{i \in \mathcal{A}_{k}^{*}}   q_{ik'}^z \right) \  \\
&&\qquad \inf_{ c_1, \dots, c_L \in   \left[ \frac{ \eta_{\mathrm{min}} }{\eta_{\mathrm{max}}} ,  \frac{ \eta_{\mathrm{max}} }{\eta_{\mathrm{min}}}\right]} \  \sum_{l'=1}^{L}  \left\|  \lambda_{\mathcal{B}_{l}^{*} \cap \widetilde{\mathcal{B}}_{l'}, \, k}^* - c_{l'}  \lambda_{\mathcal{B}_{l}^{*} \cap \widetilde{\mathcal{B}}_{l'}, \, k'} \right\|_{\mathrm{new}}^{2}  \\
&\geq& \frac{ \eta_{\mathrm{min}} }{ 6 \mu L }  \sum_{l=1}^L  \sum_{k'=1}^K   \sum_{k \neq s_{l}(k')}   \left(\sum_{i \in \mathcal{A}_{k}^{*}}   q_{ik'}^z \right) \frac{ \eta_{\mathrm{min}}^{4} }{ 2 ( \eta_{\mathrm{min}}^{4} + \eta_{\mathrm{max}}^{4} ) } \tau |\mathcal{B}_{l}^{*}| \\
&=& \frac{ \tau }{ 12 \mu L }  \frac{ \eta_{\mathrm{min}}^{5} }{  \eta_{\mathrm{min}}^{4} + \eta_{\mathrm{max}}^{4}  } \sum_{l=1}^L  \sum_{k'=1}^K   \sum_{k \neq s_{l}(k')}   \left(\sum_{i \in \mathcal{A}_{k}^{*}}   q_{ik'}^z \right) |\mathcal{B}_{l}^{*}|.
\end{eqnarray*}
Note that
\begin{eqnarray*}
&& \sum_{l=1}^L  \sum_{k'=1}^K   \sum_{k \neq s_{l}(k')}   \left(\sum_{i \in \mathcal{A}_{k}^{*}}   q_{ik'}^z \right) |\mathcal{B}_{l}^{*}| \\ 
&=& \sum_{l=1}^L |\mathcal{B}_{l}^{*}|  \sum_{k'=1}^K   \sum_{k \neq s_{l}(k')}  \sum_{i \in \mathcal{A}_{k}^{*}}   q_{ik'}^z \\
&=& \sum_{l=1}^L |\mathcal{B}_{l}^{*}|  \left(  \sum_{k'=1}^K   \sum_{k =1}^{K}  \sum_{i \in \mathcal{A}_{k}^{*}}   q_{ik'}^z  - \sum_{k'=1}^{K}  \sum_{i \in \mathcal{A}_{s_l(k')}^*}   q_{ik'}^z \right) \\
&=& \sum_{l=1}^L |\mathcal{B}_{l}^{*}|  \left(  \sum_{k'=1}^K   \sum_{i =1}^{m}   q_{ik'}^z  - \sum_{k'=1}^{K}  \sum_{i =1}^{m} 1_{i s_{l}(k')}^{z^{*}}   q_{ik'}^z \right)  \\
&=& \sum_{l=1}^L |\mathcal{B}_{l}^{*}|  \left( \sum_{i =1}^{m} \sum_{k'=1}^K      q_{ik'}^z  - \sum_{k'=1}^{K}  m \mathbb{R}_{s_{l}(k'), k'} ( 1^{z^{*}}, q^{z} ) \right) \\
&=& \sum_{l=1}^L |\mathcal{B}_{l}^{*}|  \left( m  -  m \sum_{k'=1}^{K}   \mathbb{R}_{s_{l}(k'), k'} ( 1^{z^{*}}, q^{z} ) \right) \\
&\geq& m \sum_{l=1}^L |\mathcal{B}_{l}^{*}| M_{\mathrm{row}} (q^{z}) \\
&=& mn M_{\mathrm{row}} (q^{z}) ,
\end{eqnarray*}
where the last second equality is due to the fact that $\sum_{k} q_{ik}^{z} =1$ and the last inequality follows by the definition of $M_{\mathrm{row}}(q^z)$. Combining the last two displays, we have
\begin{equation*}
\bar{J}(1^{z^*}, 1^{w^*}, \theta^*, \lambda^*) - \bar{J}(q^z, q^w, \theta, \lambda)
\geq  \frac{ \tau }{ 12 \mu L }  \frac{ \eta_{\mathrm{min}}^{5} }{  \eta_{\mathrm{min}}^{4} + \eta_{\mathrm{max}}^{4}  }  mn M_{\mathrm{row}} (q^{z}) .
\end{equation*}
We conclude the proof.
\end{proof}

\begin{proof}[Proof of Theorem \ref{thm:labelconsistency}]
Let $\Phi^*=({\pi}^*,{\rho}^*,\theta^*,\lambda^*)$. By Theorems \ref{thm:concentration} and \ref{thm:wellseparatedness}, and noting that $C_1 \asymp \tilde{\eta}^2$, we have that if $mn \tilde{\eta}^2 / ((m+n) (\log \tilde{\eta})^2) \rightarrow \infty$,
\begin{align*}
 & \,\, \mathbb{P} \left (M_\textnormal{row}(\hat{q}^{{z}})\geq \epsilon \big \vert z^*, w^* \right ) \\
 \leq  & \,\, \mathbb{P} \left ( \bar{J}({1}^{{z}^*},{1}^{{w}^*},\theta^*,\lambda^*)-\bar{J}(\hat{q}^{{z}},\hat{q}^{{w}}, \hat{\theta},\hat{\lambda}) \geq  C_1 mn   \epsilon  \big \vert z^*,w^*   \right ) \\
 = & \,\, \mathbb{P}\left (\bar{J}({1}^{{z}^*},{1}^{{w}^*},\theta^*,\lambda^*)-\hat{J}({1}^{{z}^*},{1}^{{w}^*},\Phi^*)+\hat{J}({1}^{{z}^*},{1}^{{w}^*},\Phi^*)-\hat{J}(\hat{q}^{{z}},\hat{q}^{{w}},\hat{\Phi}) \right.  \\
	& \,\,  \left. +\hat{J}(\hat{q}^{{z}},\hat{q}^{{w}},\hat{\Phi})-\bar{J}(\hat{q}^{{z}},\hat{q}^{{w}},\hat{\theta},\hat{\lambda})  \geq   C_1 mn   \epsilon \big \vert  {z}^*,{w}^*  \right ) \\
 \leq &  \,\, \mathbb{P}\left (\bar{J}({1}^{{z}^*},{1}^{{w}^*},\theta^*,\lambda^*)-\hat{J}({1}^{{z}^*},{1}^{{w}^*},\Phi^*) +\hat{J}(\hat{q}^{{z}},\hat{q}^{{w}},\hat{\Phi})-\bar{J}(\hat{q}^{{z}},\hat{q}^{{w}},\hat{\theta},\hat{\lambda}) \geq  C_1 mn  \epsilon   \big \vert  {z}^*,{w}^*\right ) \\
 \leq & \,\, \mathbb{P}\left (|\bar{J}({1}^{{z}^*},{1}^{{w}^*},\theta^*,\lambda^*)-\hat{J}({1}^{{z}^*},{1}^{{w}^*},\Phi^*)|  \geq  (C_1/2) mn  \epsilon   \big \vert  {z}^*,{w}^*\right )\\
     & \,\, + \mathbb{P}\left (|\hat{J}(\hat{q}^{{z}},\hat{q}^{{w}},\hat{\Phi})-\bar{J}(\hat{q}^{{z}},\hat{q}^{{w}},\hat{\theta},\hat{\lambda})|  \geq  (C_1/2) mn  \epsilon   \big \vert  {z}^*,{w}^*\right ) \rightarrow 0.
\end{align*}
The proof of $M_\textnormal{col}(\hat{q}^{{w}})=o_p(1)$ is similar.

\end{proof}



\end{appendix}


\begin{thebibliography}{34}
\providecommand{\natexlab}[1]{#1}
\providecommand{\url}[1]{\texttt{#1}}
\expandafter\ifx\csname urlstyle\endcsname\relax
  \providecommand{\doi}[1]{doi: #1}\else
  \providecommand{\doi}{doi: \begingroup \urlstyle{rm}\Url}\fi

\bibitem[Abbe(2018)]{abbe2017community}
Emmanuel Abbe.
\newblock Community detection and stochastic block models: Recent developments.
\newblock \emph{Journal of Machine Learning Research}, 18\penalty0 (177):\penalty0 1--86, 2018.
\newblock URL \url{http://jmlr.org/papers/v18/16-480.html}.

\bibitem[Adamic and Glance(2005)]{adamic2005political}
Lada~A. Adamic and Natalie Glance.
\newblock The political blogosphere and the 2004 u.s. election: divided they blog.
\newblock In \emph{Proceedings of the 3rd International Workshop on Link Discovery}, LinkKDD '05, page 36–43, New York, NY, USA, 2005. Association for Computing Machinery.
\newblock ISBN 1595932151.
\newblock \doi{10.1145/1134271.1134277}.
\newblock URL \url{https://doi.org/10.1145/1134271.1134277}.

\bibitem[Airoldi et~al.(2008)Airoldi, Blei, Fienberg, and Xing]{Airoldi2008}
Edoardo~M. Airoldi, David~M. Blei, Stephen~E. Fienberg, and Eric~P. Xing.
\newblock Mixed membership stochastic blockmodels.
\newblock \emph{Journal of Machine Learning Research}, 9\penalty0 (65):\penalty0 1981--2014, 2008.
\newblock URL \url{http://jmlr.org/papers/v9/airoldi08a.html}.

\bibitem[Amini et~al.(2013)Amini, Chen, Bickel, Levina, et~al.]{amini2013pseudo}
Arash~A Amini, Aiyou Chen, Peter~J Bickel, Elizaveta Levina, et~al.
\newblock Pseudo-likelihood methods for community detection in large sparse networks.
\newblock \emph{The Annals of Statistics}, 41\penalty0 (4):\penalty0 2097--2122, 2013.

\bibitem[Bickel and Chen(2009)]{Bickel&Chen2009}
P.~J. Bickel and A.~Chen.
\newblock A nonparametric view of network models and {N}ewman-{G}irvan and other modularities.
\newblock \emph{Proc. Natl. Acad. Sci. USA}, 106:\penalty0 21068--21073, 2009.

\bibitem[Choi et~al.(2012)Choi, Wolfe, and Airoldi]{choi2012stochastic}
David~S Choi, Patrick~J Wolfe, and Edoardo~M Airoldi.
\newblock Stochastic blockmodels with a growing number of classes.
\newblock \emph{Biometrika}, 99\penalty0 (2):\penalty0 273--284, 2012.

\bibitem[Flynn and Perry(2020)]{flynn2020profile}
Cheryl Flynn and Patrick Perry.
\newblock Profile likelihood biclustering.
\newblock \emph{Electronic Journal of Statistics}, 14\penalty0 (1):\penalty0 731--768, 2020.

\bibitem[Fortunato(2010)]{Fortunato2010}
S.~Fortunato.
\newblock Community detection in graphs.
\newblock \emph{Physics Reports}, 486\penalty0 (3-5):\penalty0 75--174, 2010.

\bibitem[Gao et~al.(2009)Gao, Liang, Fan, Sun, and Han]{gao2009dblp}
Jing Gao, Feng Liang, Wei Fan, Yizhou Sun, and Jiawei Han.
\newblock Graph-based consensus maximization among multiple supervised and unsupervised models.
\newblock In Y.~Bengio, D.~Schuurmans, J.~Lafferty, C.~Williams, and A.~Culotta, editors, \emph{Advances in Neural Information Processing Systems}, volume~22. Curran Associates, Inc., 2009.
\newblock URL \url{https://proceedings.neurips.cc/paper_files/paper/2009/file/d7a728a67d909e714c0774e22cb806f2-Paper.pdf}.

\bibitem[Govaert and Nadif(2003)]{govaert2003clustering}
G{\'e}rard Govaert and Mohamed Nadif.
\newblock Clustering with block mixture models.
\newblock \emph{Pattern Recognition}, 36\penalty0 (2):\penalty0 463--473, 2003.

\bibitem[Govaert and Nadif(2008)]{govaert2008block}
G{\'e}rard Govaert and Mohamed Nadif.
\newblock Block clustering with bernoulli mixture models: Comparison of different approaches.
\newblock \emph{Computational Statistics \& Data Analysis}, 52\penalty0 (6):\penalty0 3233--3245, 2008.

\bibitem[Harper and Konstan(2015)]{harper2015movielens}
F~Maxwell Harper and Joseph~A Konstan.
\newblock The movielens datasets: History and context.
\newblock \emph{Acm transactions on interactive intelligent systems (tiis)}, 5\penalty0 (4):\penalty0 1--19, 2015.

\bibitem[Holland et~al.(1983)Holland, Laskey, and Leinhardt]{Holland83}
P.~W. Holland, K.~B. Laskey, and S.~Leinhardt.
\newblock Stochastic blockmodels: first steps.
\newblock \emph{Social Networks}, 5\penalty0 (2):\penalty0 109--137, 1983.

\bibitem[Ji et~al.(2010)Ji, Sun, Danilevsky, Han, and Gao]{ji2010dblp}
Ming Ji, Yizhou Sun, Marina Danilevsky, Jiawei Han, and Jing Gao.
\newblock Graph regularized transductive classification on heterogeneous information networks.
\newblock In Jos{\'e}~Luis Balc{\'a}zar, Francesco Bonchi, Aristides Gionis, and Mich{\`e}le Sebag, editors, \emph{Machine Learning and Knowledge Discovery in Databases}, pages 570--586, Berlin, Heidelberg, 2010. Springer Berlin Heidelberg.
\newblock ISBN 978-3-642-15880-3.

\bibitem[Ji and Jin(2016)]{Jin_DSCORE}
Pengsheng Ji and Jiashun Jin.
\newblock {Coauthorship and citation networks for statisticians}.
\newblock \emph{The Annals of Applied Statistics}, 10\penalty0 (4):\penalty0 1779 -- 1812, 2016.
\newblock \doi{10.1214/15-AOAS896}.
\newblock URL \url{https://doi.org/10.1214/15-AOAS896}.

\bibitem[Jing et~al.(2024)Jing, Li, Wang, and Wang]{jing2024two}
Bing-Yi Jing, Ting Li, Jiangzhou Wang, and Ya~Wang.
\newblock Two-way node popularity model for directed and bipartite networks.
\newblock \emph{arXiv preprint arXiv:2412.08051}, 2024.

\bibitem[Karrer and Newman(2011)]{Karrer10}
B.~Karrer and M.~E.~J. Newman.
\newblock Stochastic blockmodels and community structure in networks.
\newblock \emph{Physical Review E}, 83:\penalty0 016107, 2011.

\bibitem[Lei and Rinaldo(2015)]{lei2015consistency}
Jing Lei and Alessandro Rinaldo.
\newblock Consistency of spectral clustering in stochastic block models.
\newblock \emph{The Annals of Statistics}, 43\penalty0 (1):\penalty0 215--237, 2015.

\bibitem[Mariadassou et~al.(2010)Mariadassou, Robin, and Vacher]{Mariadassou2010weighted}
Mahendra Mariadassou, St{\'e}phane Robin, and Corinne Vacher.
\newblock {Uncovering latent structure in valued graphs: A variational approach}.
\newblock \emph{The Annals of Applied Statistics}, 4\penalty0 (2):\penalty0 715 -- 742, 2010.
\newblock \doi{10.1214/10-AOAS361}.
\newblock URL \url{https://doi.org/10.1214/10-AOAS361}.

\bibitem[Mariadassou et~al.(2015)Mariadassou, Matias, et~al.]{mariadassou2015convergence}
Mahendra Mariadassou, Catherine Matias, et~al.
\newblock Convergence of the groups posterior distribution in latent or stochastic block models.
\newblock \emph{Bernoulli}, 21\penalty0 (1):\penalty0 537--573, 2015.

\bibitem[Matias and Miele(2016)]{matias2017statistical}
Catherine Matias and Vincent Miele.
\newblock Statistical clustering of temporal networks through a dynamic stochastic block model.
\newblock \emph{Journal of the Royal Statistical Society Series B: Statistical Methodology}, 79\penalty0 (4):\penalty0 1119--1141, 08 2016.
\newblock ISSN 1369-7412.
\newblock \doi{10.1111/rssb.12200}.
\newblock URL \url{https://doi.org/10.1111/rssb.12200}.

\bibitem[Noroozi et~al.(2021)Noroozi, Rimal, and Pensky]{noroozi2021estimation}
Majid Noroozi, Ramchandra Rimal, and Marianna Pensky.
\newblock Estimation and clustering in popularity adjusted block model.
\newblock \emph{Journal of the Royal Statistical Society Series B: Statistical Methodology}, 83\penalty0 (2):\penalty0 293--317, 2021.

\bibitem[Rohe et~al.(2016)Rohe, Qin, and Yu]{rohe2016co}
Karl Rohe, Tai Qin, and Bin Yu.
\newblock Co-clustering directed graphs to discover asymmetries and directional communities.
\newblock \emph{Proceedings of the National Academy of Sciences}, 113\penalty0 (45):\penalty0 12679--12684, 2016.

\bibitem[Sengupta and Chen(2018)]{sengupta2018block}
Srijan Sengupta and Yuguo Chen.
\newblock A block model for node popularity in networks with community structure.
\newblock \emph{Journal of the Royal Statistical Society Series B: Statistical Methodology}, 80\penalty0 (2):\penalty0 365--386, 2018.

\bibitem[Vinh et~al.(2010)Vinh, Epps, and Bailey]{vinh2010information}
Nguyen~Xuan Vinh, Julien Epps, and James Bailey.
\newblock Information theoretic measures for clusterings comparison: Variants, properties, normalization and correction for chance.
\newblock \emph{The Journal of Machine Learning Research}, 11:\penalty0 2837--2854, 2010.

\bibitem[Wang et~al.(2021{\natexlab{a}})Wang, Liu, and Guo]{wang2021efficient}
Jiangzhou Wang, Binghui Liu, and Jianhua Guo.
\newblock Efficient split likelihood-based method for community detection of large-scale networks.
\newblock \emph{Stat}, 10\penalty0 (1):\penalty0 e349, 2021{\natexlab{a}}.

\bibitem[Wang et~al.(2023)Wang, Zhang, Liu, Zhu, and Guo]{wang2021fast}
Jiangzhou Wang, Jingfei Zhang, Binghui Liu, Ji~Zhu, and Jianhua Guo.
\newblock Fast network community detection with profile-pseudo likelihood methods.
\newblock \emph{Journal of the American Statistical Association}, 118\penalty0 (542):\penalty0 1359--1372, 2023.
\newblock \doi{10.1080/01621459.2021.1996378}.

\bibitem[Wang et~al.(2021{\natexlab{b}})Wang, Liang, and Ji]{Ji2020}
Zhe Wang, Yingbin Liang, and Pengsheng Ji.
\newblock Spectral algorithms for community detection in directed networks.
\newblock \emph{Journal of Machine Learning Research}, 21:\penalty0 1--45, 2021{\natexlab{b}}.

\bibitem[Xu and Hero(2013)]{xu2013dynamic}
Kevin~S Xu and Alfred~O Hero.
\newblock Dynamic stochastic blockmodels: Statistical models for time-evolving networks.
\newblock In \emph{International conference on social computing, behavioral-cultural modeling, and prediction}, pages 201--210. Springer, 2013.

\bibitem[Zhang et~al.(2022)Zhang, He, and Wang]{Zhang2022_directed}
Jingnan Zhang, Xin He, and Junhui Wang.
\newblock Directed community detection with network embedding.
\newblock \emph{Journal of the American Statistical Association}, 117\penalty0 (540):\penalty0 1809--1819, 2022.
\newblock \doi{10.1080/01621459.2021.1887742}.

\bibitem[Zhao(2017)]{zhao2017survey}
Yunpeng Zhao.
\newblock A survey on theoretical advances of community detection in networks.
\newblock \emph{Wiley Interdisciplinary Reviews: Computational Statistics}, 9\penalty0 (5), 2017.

\bibitem[Zhao et~al.(2012)Zhao, Levina, and Zhu]{zhao2012consistency}
Yunpeng Zhao, Elizaveta Levina, and Ji~Zhu.
\newblock Consistency of community detection in networks under degree-corrected stochastic block models.
\newblock \emph{The Annals of Statistics}, 40\penalty0 (4):\penalty0 2266--2292, 2012.

\bibitem[Zhao et~al.(2024)Zhao, Hao, and Zhu]{zhao2024variational}
Yunpeng Zhao, Ning Hao, and Ji~Zhu.
\newblock Variational estimators of the degree-corrected latent block model for bipartite networks.
\newblock \emph{Journal of machine learning research}, 25\penalty0 (150):\penalty0 1--42, 2024.

\bibitem[Zhou and Amini(2019)]{zhou2019analysis}
Zhixin Zhou and Arash~A Amini.
\newblock Analysis of spectral clustering algorithms for community detection: the general bipartite setting.
\newblock \emph{Journal of Machine Learning Research}, 20\penalty0 (47):\penalty0 1--47, 2019.

\end{thebibliography}

 \newcommand{\noop}[1]{}

\end{document}